\newtheorem{theorem}{Theorem}[section]
\newtheorem{lemma}[theorem]{\TE{Lemma}}
\algnewcommand{\LineComment}[1]{\State \(\triangleright\) #1}
\newcommand*{\colorboxed}{}
\def\colorboxed#1#{%
  \colorboxedAux{#1}%
}
\newcommand*{\colorboxedAux}[3]{%
  % #1: optional argument for color model
  % #2: color specification
  % #3: formula
  \begingroup
    \colorlet{cb@saved}{.}%
    \color#1{#2}%
    \boxed{%
      \color{cb@saved}%
      #3%
    }%
  \endgroup
}
\def\Eqref Eq:#1:{\eqref{eq:#1}}
\newcommand{\TE}[1]{\textbf{#1}}
\newcommand{\ResizedEq}[1]{\resizebox{0.87\hsize}{!}{$\begin{aligned}#1\end{aligned}$}}
\newcommand{\FPP}[2]{\frac{\partial{#1}}{\partial{#2}}}
\newcommand{\FPPT}[2]{\frac{\partial^2{#1}}{\partial{#2}^2}}
\newcommand{\FPPTT}[3]{\frac{\partial^2{#1}}{\partial{#2}\partial{#3}}}
\newcommand{\FDD}[2]{\frac{d{#1}}{d{#2}}}
\newcommand{\FDDT}[2]{\frac{d^2{#1}}{d{#2}^2}}
\newcommand{\FDDTT}[3]{\frac{d^2{#1}}{d{#2}d{#3}}}
\newcommand{\TWO}[2]{\left(\setlength{\arraycolsep}{1pt}\begin{array}{cc}{#1}, & {#2}\end{array}\right)}
\newcommand{\THREE}[3]{\left(\setlength{\arraycolsep}{1pt}\begin{array}{ccc}{#1}, & {#2}, & {#3}\end{array}\right)}
\newcommand{\FOUR}[4]{\left(\setlength{\arraycolsep}{1pt}\begin{array}{cccc}{#1}, & {#2}, & {#3}, & {#4}\end{array}\right)}
\newcommand{\MTT}[4]{\left(\setlength{\arraycolsep}{1pt}\begin{array}{cc}#1 & #2 \\ #3 & #4\end{array}\right)}
\newcommand{\dist}{\text{dist}}
\newcommand{\fmin}[1]{\underset{#1}{\min}}
\newcommand{\argmin}[1]{\underset{#1}{\text{argmin}}}
\newcommand{\argminP}{\text{argmin}}
\newcommand{\ST}{\text{s.t.}}
\newcommand{\proofread}[1]{}
\newif\ifArxiv
\definecolor{Blue}{rgb}{0.2, 0.2, 0.8}
\definecolor{Black}{rgb}{0.0, 0.0, 0.0}
\newcommand{\cmark}{\ding{51}}%
\newcommand{\xmark}{\textcolor{red}{\ding{55}}}%
\newcommand\fs@ruled@notop{\def\@fs@cfont{\bfseries}\let\@fs@capt\floatc@ruled
  \def\@fs@pre{}%
  \def\@fs@post{\kern2pt\hrule\relax}%
  \def\@fs@mid{\kern2pt\hrule\kern2pt}%
  \let\@fs@iftopcapt\iftrue}
\renewcommand\fst@algorithm{\fs@ruled@notop}
\title{SDRS: Shape-Differentiable Robot Simulator}
\author{Xiaohan Ye$^1$, Xifeng Gao$^2$, Kui Wu$^2$, Zherong Pan$^{2\dagger}$, and Taku Komura$^{1\dagger}$  \\
\thanks{$^\dagger$ indicates corresponding author. $^2$LightSpeed Studios, Tencent. \{xifgao, kwwu, zrpan\}@global.tencent.com. $^1$The Department of Computer Science, Hong Kong University. \{u3010417@connect.hku.hk, taku@cs.hku.hk\}}}
\begin{document}
%%%%%%%%%%%%%%%%%%%%%%%%%%%%%%%%%%%%%%%%%%%%%%%%%%%%%%%%%%%%%%%%%%%%%%%%%%%%%%%%
\maketitle
\allowdisplaybreaks
\thispagestyle{empty}
\pagestyle{empty}
\begin{abstract}
Robot simulators are indispensable tools across many fields, and recent research has significantly improved their functionality by incorporating additional gradient information. However, existing differentiable robot simulators suffer from non-differentiable singularities, when robots undergo substantial shape changes. To address this, we present the Shape-Differentiable Robot Simulator (SDRS), designed to be differentiable under significant robot shape changes.
The core innovation of SDRS lies in its representation of robot shapes using a set of convex polyhedrons. This approach allows us to generalize smooth, penalty-based contact mechanics for interactions between any pair of convex polyhedrons. Using the separating hyperplane theorem, SDRS introduces a separating plane for each pair of contacting convex polyhedrons. This separating plane functions as a zero-mass auxiliary entity, with its state determined by the principle of least action. This setup ensures global differentiability, even as robot shapes undergo significant geometric and topological changes.
To demonstrate the practical value of SDRS, we provide examples of robot co-design scenarios, where both robot shapes and control movements are optimized simultaneously.
\end{abstract}
\begin{IEEEkeywords}
Differentiable Simulation, Articulated Body, Robot Design, Policy Search.
\end{IEEEkeywords}

\section{Introduction}
Robot design and control stand as two paramount areas of research. In robot design, engineers manipulate the hardware specifications of a robot to equip it with the desired motion capabilities, while in robot control, algorithms compute the requisite motions and control signals for accomplishing a wide array of tasks. Although both domains share the overarching objective of task fulfillment, they have traditionally been treated as separate stages in the blueprint of a robotic system. Robot designers often rely heavily on their experience and observations to fine-tune design parameters~\cite{causey1998gripper,burdick2003minimalist,10.1177/0278364916648388}, while robot controllers are typically crafted with the assumption of fixed robot designs~\cite{poulakakis2009spring,kajita2010biped}. Unfortunately, recent studies~\cite{ha2018computational,zhao2020robogrammar,kim2021mo,Xu-RSS-21} have revealed that neglecting the inherent interdependence between these two stages can inevitably result in sub-optimal designs.

\begin{figure}[t]
\centering
%\scalebox{.8}{
\includegraphics[width=.48\textwidth]{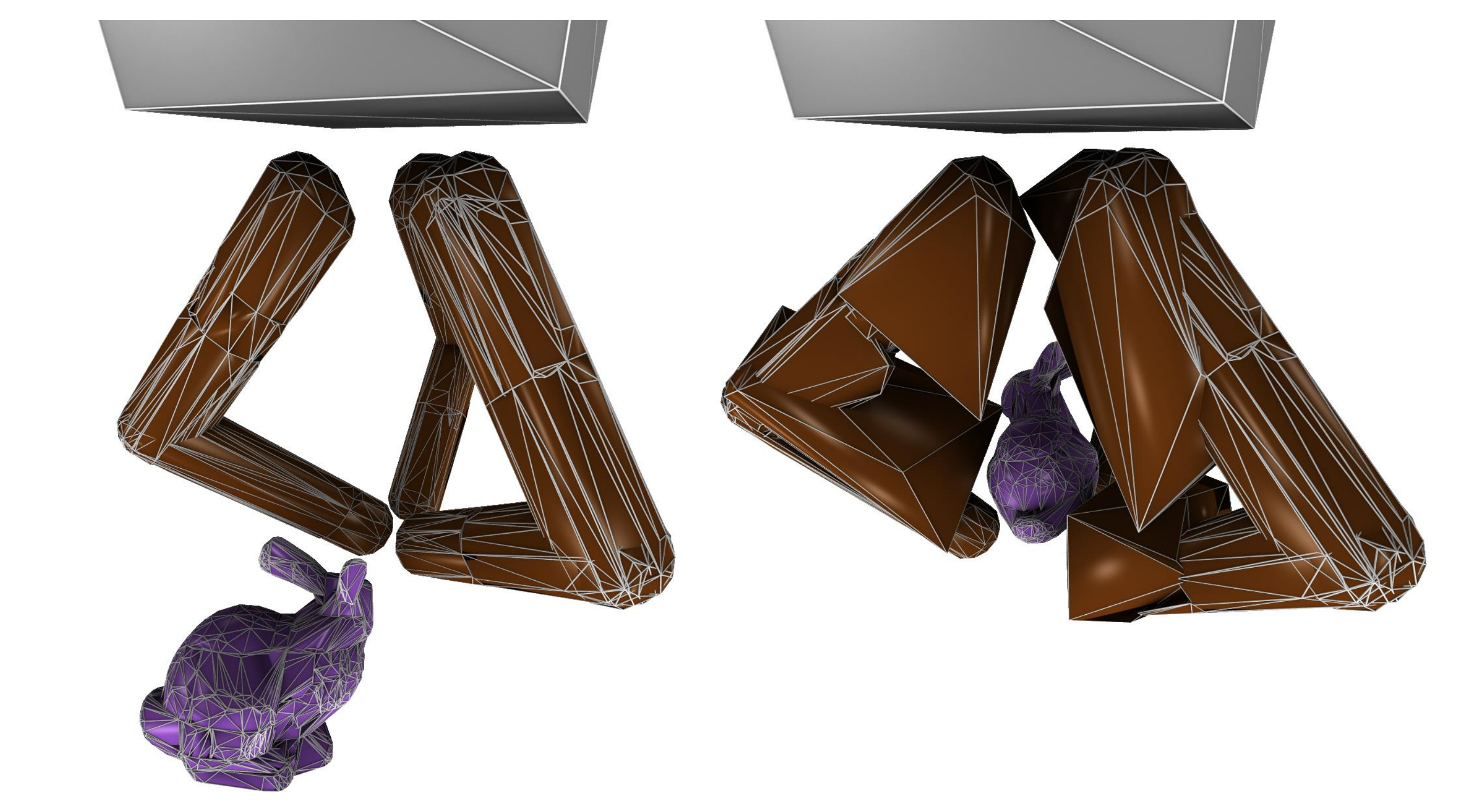}
%}
%\vspace{-10px}
\caption{Starting from an initial guess (left), we optimize the shape of gripper to firmly grasp the Stanford bunny (right).}
\label{fig:teaser}
%\vspace{-10px}
\end{figure}
The divide between design and control has increasingly captured the attention of researchers. Various techniques have emerged to bridge this gap, and among these, a promising approach is robot co-design, where both the robot's design and controller parameters are jointly optimized. A significant challenge in such applications lies in the expansive search space, encompassing discrete decision variables for the robot's topology and continuous ones for its geometry and controller parameters. To derive meaningful robot designs, existing methods often resort to aggressive constraints on the robot's design space and employ exhaustive sampling-based optimization strategies, such as Bayesian optimization~\cite{kim2021mo,liao2019data} and Monte-Carlo tree search~\cite{zhao2020robogrammar}. The necessity of constraining the design space arises because these sampling-based optimizers grapple with the curse of dimensionality, leading to exponential complexity as the design space grows. Consequently, this restriction on the design space can, once again, result in sub-optimal designs.

Recent advancements in sensitivity analysis~\cite{ha2018computational} and robot simulation~\cite{Xu-RSS-21} have introduced a gradient-based approach as an alternative to traditional sampling-based methods. These algorithms use differentiable simulators to propagate gradient information throughout robot simulations, allowing this information to guide a variety of upstream applications, including controller optimization~\cite{hu2019chainqueen}, behavior cloning~\cite{chen2022imitation}, and robot system identification~\cite{song2020learning}. Unlike traditional methods, gradient-based algorithms are not constrained by the curse of dimensionality, enabling them to scale across large design spaces and optimize complex aspects of robot link configurations. Despite their promising results, current differentiable simulators are either incapable of modeling significant changes to geometric shapes,  restricting the changes to small localized neighborhoods, or not provably differentiable.

We present, SDRS, a guaranteed collision-free robot simulator that can differentiate through significant geometric and topologic shape changes to robot links. Our approach is founded on two cornerstones. First, we introduce a novel parameterization for the shape of robot links using the union of convex polyhedrons in the V-representation~\cite{deng2020cvxnet}. This expressive representation allows both geometric and topologic shape changes in relatively low-dimensional continuous parametric shapes. Second, we propose a penalty-based collision constraint between pairs of convex hulls based on the separating hyperplane theorem~\cite{honig2018trajectory}. We show that this formulation can be extended to model contact mechanics by treating the separating plane as a zero-mass auxiliary entity. Our formulation avoids the need for explicit contact point detection between convex hulls, thereby eliminating non-differentiable singularities. We show that our formulation is globally differentiable under sufficiently small timestep sizes, ensuring consistently well-defined gradient information. To underscore the practical utility of our simulator, we showcase its applications over a row of robot co-design scenarios as shown in~\prettyref{fig:teaser}.
\section{Related Work}
We review related works on robot co-design, differentiable simulation, and contact mechanics.

\paragraph{Robot Co-design}
Joint optimization strategies for robot co-design have been developed to fine-tune both the physical and geometric parameters, as well as controller parameters during testing, all aimed at achieving optimal task performance. The automatic, joint optimization functionality complements the semi-automatic, human-assisted design pipeline~\cite{10.1145/2816795.2818137,10.1145/3453477,maloisel2023optimal} to form a complete robot design and manufacturing workflow. Much of the existing research, such as that by \cite{liao2019data,zhao2020robogrammar,kim2021mo,wang2022curriculum,fadini2022simulation}, has depended on gradient-free, sampling-based methods. A significant limitation of these approaches is their limited scalability. To reduce these costs, Bayesian optimization~\cite{liao2019data,kim2021mo} has been used to model the landscape of low-level objectives and to generate new samples that minimize fitting errors. Additionally, \citet{zhao2020robogrammar} capitalized on the structural insight that open-loop articulated robots have a tree-like topology, similar to the decision trees in MCTS, enabling quick assessment and pruning of sub-optimal design candidates. Despite these enhancements, sampling-based algorithms struggle to scale beyond a few tens of parameters.

On the other hand, gradient-based methods~\cite{ha2018computational, Xu-RSS-21, ma2021diffaqua, dinev2022versatile} have only recently come into prominence. Thanks to their efficient gradient evaluation algorithms, these methods incur lower iterative costs and are capable of scaling to high-dimensional decision spaces. Initial applications of these methods~\cite{ha2018computational} were confined to low-dimensional design spaces, including variables such as robot link lengths and actuator positions. More recent developments~\cite{Xu-RSS-21,ma2021diffaqua} have broadened the scope of these techniques, enabling optimization of more complex robot link shapes. However, these advanced methods still rely on mesh-based shape representations with the fixed topology, which limit changes to minor geometric adjustments.

\paragraph{Differentiable Simulation}
The realm of optimal control has witnessed substantial advancements through the adoption of differentiable simulators. Their application scope has rapidly broadened, encompassing control not only over low-dimensional rigid bodies~\cite{de2018end} and articulated bodies~\cite{Qiao2021Efficient} but also extending to the control of high-dimensional deformable objects, such as elastic objects~\cite{heiden2021disect,du2021_diffpd,stuyck2023diffxpbd,huang2024differentiable,gjoka2024soft,spielberg2019learning}, fluids~\cite{takahashi2021differentiable}, cloth~\cite{liang2019differentiable}, and general multi-physics systems~\cite{su2023generalized}. Furthermore, an increasing body of research has integrated differentiable simulators into downstream end-to-end learning algorithms to support various applications, including behavior cloning~\cite{chen2022imitation}, system identification~\cite{song2020learning,ma2022risp}, and controller optimization~\cite{hu2019chainqueen}. While these applications solely require a simulator to be differentiable with respect to state and control parameters, the field of co-design necessitates the calculation of gradients with respect to link shapes.

Addressing this need,~\citet{Xu-RSS-21} expanded upon prior work~\cite{geilinger2020add} by introducing additional design gradients using a cage-based robot shape deformation model. While promising results have been presented, their deformable model is based on a mesh of fixed topology, which inherently does not allow significantly shape changes such as modifications to the link topology. We are also aware of the differentiable Material Point Method (MPM)~\cite{hu2019chainqueen}, which was originally used to simulate soft robots, can be adapted to approximately simulate rigid bodies by using a large stiffness. Using particle-based representations, MPM can differentiate through large geometric and even topological changes. Unfortunately, particle-based representations are inherently incapable of representing smooth and flat surfaces unless a dense set of particles is used, making them rather inefficient for co-design applications. 

Notably, despite claims of proposing differentiable simulators in previous research, none of these simulators are rigorously differentiable concerning all input parameters. For instance, some formulations, like those presented by~\citet{de2018end} and~\citet{werling2021fast}, incorporate contact modeling using complementary constraints with non-smooth transitions between contact modes. Although complementary constraints can be smoothened as shown in~\cite{tassa2012synthesis,howelllecleach2022}, these methods often require maintaining discrete contact points for constraint instantiation, which can undergo non-smooth variations due to the change of robot link shapes.

Complementing the works on differentiable simulators, physics-constrained trajectory optimization~\cite{10.1145/3309486.3340246,10.1145/2185520.2185539} formulates the equations of motion as a set of (in)equality constraints, which are enforced while optimizing high-level objective functions. These methods are computationally more efficient since they avoid the need for explicit constraint satisfaction at every iteration and have been successfully applied to co-design problems, as demonstrated in~\cite{hazard2020automated,desai2018interactive}. However, these techniques rely on additional mechanisms to ensure constraint satisfaction, which are not guaranteed to succeed.

\paragraph{Contact Mechanics}
Collisions and contacts play a significant role in shaping various phenomena within robot simulations, yet they also constitute a primary source of discontinuity and non-smooth behavior. The efficacy of collision detection techniques is heavily reliant on the chosen geometric representation. Among the two most prevalent representations are volumetric distance field~\cite{guendelman2003nonconvex} and surface meshes~\cite{catto2005iterative}, both of which persist as fundamental components in state-of-the-art differentiable simulators~\cite{geilinger2020add,gradsim,Xu-RSS-21}. These techniques identify points of penetration and employ penalty forces to mitigate the depth of penetration. However, these discrete collision points can undergo non-smooth transitions as the shapes of robot links change. An alternative collision resolution approach revolves around complementary constraints~\cite{werling2021fast}, a method well-known for its non-smooth characteristics. Moreover, a common challenge associated with these collision techniques arises when dealing with geometrically thin objects. As they allow penetration between objects, simulators may inadvertently bypass crucial collisions by tunneling through thin objects.

\begin{table}[h]
\centering
\setlength{\tabcolsep}{3px}
\begin{tabular}{cccccc}
\toprule
Method & Geometry & Smooth & Diff. & Coll.-Free & Topo.-Change\\
\midrule
\cite{gradsim,Xu-RSS-21} & Mesh & \cmark & \xmark & \xmark & \xmark\\
\cite{huang2024differentiable} & Mesh & \cmark & \cmark & \cmark & \xmark\\
\cite{hu2019chainqueen} & Particle & \xmark & \cmark & \xmark & \cmark\\ 
Ours & Convex Hull & \cmark & \cmark & \cmark & \cmark\\
\bottomrule
\end{tabular}
\caption{\label{table:features}We compare different articulated body simulation algorithms in terms of several features: the geometric representation, ability to represent smooth and planar surfaces, provable differentiability, provable collision-free guarantee, and differentiability under topology changes (from left to right). Our method is the first with a complete feature set.}
\end{table}
In contrast, we are inspired by the interior point method~\cite{Harmon2009ACM,Li2020IPC}, wherein collisions are preemptively averted by introducing barrier functions to confine robots within collision-free spaces. However, these techniques use mesh-based representations with fixed topology for robot links, which offer a less compact representation of robot designs. Instead, we adopt a significantly more concise and expressive representation—namely, the union of convex polyhedrons in the V-representation~\cite{deng2020cvxnet}. This representation permits alterations to both the geometry and topology of robot link shapes using a relatively smaller set of decision variables. We are aware of a related recent work~\cite{tracy2023differentiable} that allows differentiable collision detection between convex hulls. However, it is still unclear how to incorporate such collision detector into the full pipeline of contact handling as required by dynamic simulations, which is the goal of our work. In summary, by comparing the features of different algorithms in~\prettyref{table:features}, we highlight that our method is the first algorithm with a complete feature set.
\section{Problem Definition}
In this section, we introduce the general framework of robot co-design using the novel design space proposed in~\citet{deng2020cvxnet} as illustrated in~\prettyref{fig:convex}. Then in~\prettyref{sec:simulation}, we propose our robot simulator tailored to this design space. 

\subsection{\label{sec:codesign}Robot Co-design}
We denote the robot design as parameterized by a set of continuous variables represented as $d$. Furthermore, we assume the existence of a convex design space denoted as $\mathcal{D}$, where only values of $d\in\mathcal{D}$ correspond to valid robot designs. We maintain the assumption that the robot's number of degrees of freedom, i.e. the dimension of configuration space, remains invariant with respect to the robot's design. At each discrete time step $t$, the kinematic state of the robot is represented as the vector $\theta^t$. Additionally, the robot is under the control of a parameterized signal $u^t$ at time step $t$. With these notations in place, our simulator is defined as the following function $\theta^{t+1}=f(\theta^{t},\theta^{t-1},u^t,d)$, where the first two parameters represent the robot's kinematic state at two distinct time instances, combining to describe its dynamic state. Note that we assume $\theta^{t+1}$ is a function of form $\theta^{t+1}(u^t,u^{t-1},\cdots,d)$, but we use the abbreviation for brevity. The overarching objective of robot co-design is to optimize both controller and design parameters through a joint optimization of a loss function $L(\theta^H)$ over a trajectory of $H$ time steps: $\argminP_{d\in\mathcal{D},c\in\mathcal{C}}L(\theta^H)$, where we assume that our control policy is represented as a differentiable function $u^t=\pi(\theta^t,\theta^{t-1},t,c)$, which is parameterized by another set of continuous parameters $c$, and it operates within a convex control space denoted as $c\in\mathcal{C}$. The co-design optimization process is guided by a terminal-state differentiable cost function that estimates the quality of task completion. To solve this co-optimization, we employ the standard projected gradient method~\cite{iusem2003convergence}. In the next section, we introduce a design space that strikes a balance between compactness and expressiveness, enabling us to make modifications to both the geometry and topology of robot links. Building upon this versatile design space, our subsequent goal is to craft a contact-aware robot simulator function, $f$, which demonstrates provable differentiability concerning all of its parameters. We refer to this as the SDRS simulator.

\subsection{\label{sec:convex}Design Space Parameterization}
The design of robots integrates several aspects, such as joint parameters, joint limits, and motor thrusts. Previous studies such as~\cite{Xu-RSS-21} have demonstrated that articulated body simulators are differentiable with respect to these features. We primarily focus on the parametrization of robot link shapes. Drawing on recent advances from~\citet{deng2020cvxnet}, we model the $i$th robot link as a combination of $N$ smoothed convex polyhedrons, each defined within a reference frame and denoted as $H_{i1},\cdots,H_{iN}$. Each polyhedron is characterized by a V-representation, defined by the convex combination of a set of $M$ vertices: $H_{ij}\triangleq\text{CH}(x_{ij}^1,\cdots,x_{ij}^M)$, where $\text{CH}$ denotes the convex hull operation. This parametric method not only retains compactness but also offers extensive expressiveness, effectively capturing a wide range of shapes with diverse geometry and topology. Further, the SDRS allows for the adjustment of the attachment point between the $i$th joint and its parent joint, denoted by $\lambda(i)$, with the attachment point represented as $x_i^{\lambda(i)}$. This results in the design space of dimension $|d|=3\#(NM+1)$ with $\#$ being the number of links.

\begin{figure}[h]
\centering
\includegraphics[width=.99\linewidth]{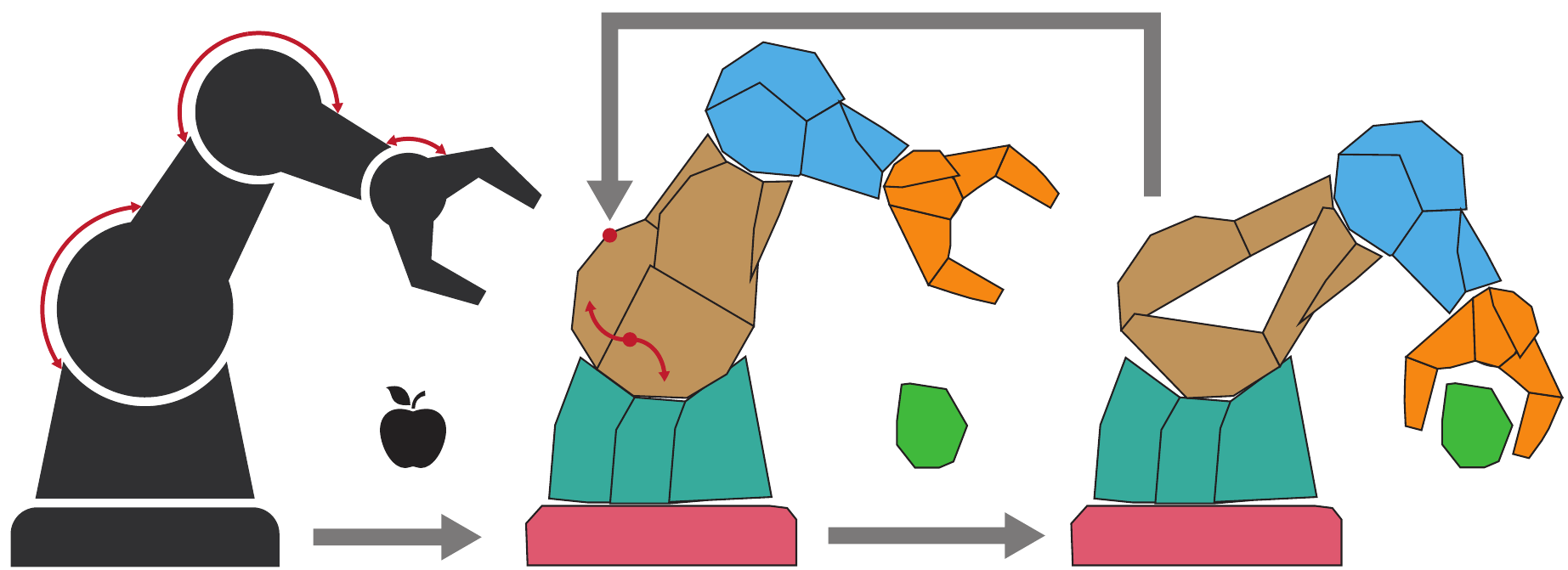}
\put(-240,55){\tiny{$\theta$}}
\put(-212,85){\tiny{$\theta$}}
\put(-185,70){\tiny{$\theta$}}
\put(-148,40){\tiny{$x_{ijk}$}}
\put(-200,8){\tiny{V-HACD}}
\put(-110,8){\tiny{Forward}}
\put(-90,78){\tiny{Backward}}
%\vspace{-10px}
\caption{We illustrate the co-design problem of a robot arm trying to reach a target apple, with the robot's configuration space being the 3 joint angles (red). We first represent each robot link shape as a set of (possibly overlapping) convex polyhedrons (middle), e.g. using V-HACD, where we use different colors for different rigid bodies. A connectivity constraint denoted as $x_{ijk}$ is introduced for each pair of overlapping convex polyhedrons (red). Our SDRS is a differentiable robot simulator tailored to the convex-polyhedron-based representation, where we can differentiate with respect to the polyhedron vertices. Such representation allows both geometry and topology changes by continuously modifying the vertices, e.g. to drill a hole in the middle of the brown link (right).}
\label{fig:convex}
%\vspace{-15px}
\end{figure}
\setlength{\columnsep}{10pt}
\begin{wrapfigure}{r}{0.24\textwidth}
\centering
%\vspace{-15px}
\includegraphics
[width=0.2\textwidth,
trim=0 40px 0 0,clip]
{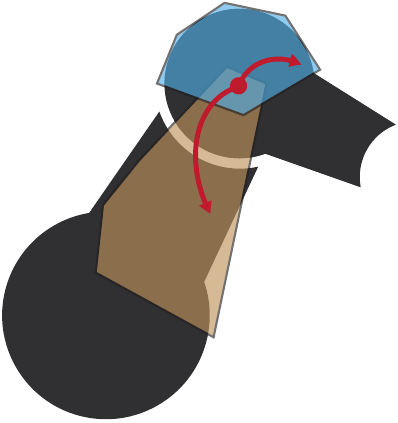}
\put(-52,70){$x_i^{\lambda(i)}$}
%\vspace{-10px}
\caption{ The constraint to ensure that the attachment point $x_i^{\lambda(i)}$ overlaps both links.}
\label{fig:attachment}
%\vspace{-10px}
\end{wrapfigure}
As a potential problem, permitting the vertices to move freely may result in disconnected robot links. To address this issue, we introduce connectivity constraints. Specifically, for a pair of polyhedrons, $H_{ij}$ and $H_{ik}$, connectivity is ensured by introducing an auxiliary vertex $x_{ijk}$ and requiring that $x_{ijk}$ is a common vertex shared by the two convex hulls. All imposed constraints are convex and consistent with our assumptions regarding $\mathcal{D}$. Similarly, we ensure that the $i$th link is connected to its parent by requiring that $x_i^{\lambda(i)}$ is located within some convex hull encompassing both the $i$th and $\lambda(i)$th links as shown in~\prettyref{fig:attachment}. Practically, users may supply mesh-based link representations, which we convert to our parametric form using approximate convex decomposition~\cite{mamou2016volumetric}. We then insert connectivity constraints between each pair of initially overlapping convex hulls, as well as between the convex hulls of the $i$th and $\lambda(i)$th links containing the attachment point.

\section{\label{sec:simulation}SDRS Formulation}
In this section, we develop a collision-free robot simulator under minimal coordinates, which is tailored to the convex-polyhedron-based representation. We first delve into position-level articulated body dynamics in~\prettyref{sec:pbad}, which is formulated as an unconstrained optimization with twice-differentiable objectives. In~\prettyref{sec:contact}, we then present a novel formulation of differentiable contact mechanics between convex polyhedrons, and we extend it to handle frictional damping in~\prettyref{sec:friction}. Finally, we discuss the simulation algorithms and the derivative evaluation scheme in~\prettyref{sec:forwardbackward}

\subsection{\label{sec:pbad}Position-level Articulated Dynamics}
For now, we omit the collisions and contacts and elucidate the dynamics of the articulated body using a position-level formulation as outlined in~\cite{pan2018time}, which is a special form of physics simulation under generalized coordinates~\cite{martin2011example}. We begin by formulating the forward kinematic function of the $i$th link. The transformation from reference to world space for this link can be defined recursively as:
\begin{align*}
T_i(d,\theta)\triangleq  T_{\lambda(i)}(d,\theta)
\MTT{R_i^d(d)}{t_i^d(d)}{0}{1}
\MTT{R_i(\theta)}{t_i(\theta)}{0}{1},
\end{align*}
where $T_{\lambda(i)}(d,\theta)$ is the transformation of the parent link, $R_i^d(d)$ and $t_i^d(d)$ define the design-dependent local transformation of attachment points of the $i$th link in $\lambda(i)$th reference frame, and finally $R_i(\theta)$ and $t_i(\theta)$ represent the joint transformation of the $i$th link. For the sake of brevity, we have omitted the timestep index and we denote $x$ as the $4$D homogeneous coordinates so that $T_i(d,\theta)x$ is the transformed coordinates in world space. Position-level dynamics hinges on the measurement of velocity in Euclidean space rather than configuration space. Specifically, we compute the acceleration of a local point $x$ on $i$th link as follows:
\begin{align}
\ddot{x}\triangleq\frac{1}{\Delta t^2}\left[T_i(d,\theta^{t+1})-2T_i(d,\theta^{t})+T_i(d,\theta^{t-1})\right]x,
\end{align}
where $\Delta t$ represents the timestep size. Note that we have employed a first-order finite-difference approximation for acceleration, which suffices for our specific application. However, higher-order approximations are also available for those seeking enhanced accuracy.

If we define $U(\bullet)$ as the world-space conservative potential energy density, we can formulate position-level dynamics as the following unconstrained minimization:
\begin{equation*}
\ResizedEq{&\theta^{t+1}=f(\theta^{t},\theta^{t-1},u^t,d)\triangleq\argmin{\theta^{t+1}}\;\sum_{ij}I_{ij}(d,\theta^{t+1},\theta^t,\theta^{t-1},u^t)\\
&I_{ij}\triangleq\int_{x\in H_{ij}(d)}\rho_{ij}(x)\left[\frac{\Delta t^2}{2}\|\ddot{x}\|^2+U(T_i(d,\theta^{t+1})x,u^t)\right]dx,}
\end{equation*}
where $\rho(x)$ represents the mass density of the robot link. As shown in~\cite{pan2018time}, the above integral can be computed for a fixed robot design by pre-computing inertial-like tensors. As a remarkable feature of~\cite{pan2018time}, the bilinear term describing the centrifugal and Coriolis forces can be absorbed into the above integral, inducing a concise expression of time integration and differentiation. However, when dealing with a changing integration domain, $H_{ij}(d)$, this pre-computation is no longer feasible. Instead, we make the assumption that the mass of the robot link is evenly distributed among the convex hull vertices, i.e., $\rho_{ij}(x) = \sum_{m=1}^M \rho\delta(x-x_{ij}^m)$, where $\delta$ represents the Dirac delta function. This allows us to simplify the integral to the following summation:
\begin{align*}
I_{ij}\triangleq\rho\sum_{m=1}^M\left[\frac{\Delta t^2}{2}\|\ddot{x}_{ij}^m\|^2+U(T_i(d,\theta^{t+1})x_{ij}^m,u^t)\right].
\end{align*}
The position-level dynamics framework is highly versatile and can accommodate various external and internal force models by adding additional energy terms, $U(\bullet)$, where we further require $U(\bullet)$ to be a smooth function. In particular, we are interested in applying control signals using a stable-PD controller~\cite{5719567}. In this context, the control signal is defined by a desired target robot position $\theta_\star^{t+1}$ and velocity $\dot{\theta}_\star^{t+1}$, and we define it as: $u^t=\TWO{\theta_\star^{t+1}}{\dot{\theta}_\star^{t+1}}$. The stable-PD controller is specified by adding the following controller-energy term:
\small
\begin{equation*}
\ResizedEq{U_\text{pd}(\theta^{t+1},\theta^t,u^t)\triangleq k_p\|\theta_\star^{t+1}-\theta^{t+1}\|^2+
k_d\|\dot{\theta}_\star^{t+1}-(\theta^{t+1}-\theta^t)/\Delta t\|^2,}
\end{equation*}
\normalsize
where $k_p$ and $k_d$ are the position and velocity gains, respectively.
\subsection{\label{sec:contact}Contact Mechanics for Convex Polyhedrons}
Moving forward, we address the intricate aspects of (self-)collisions and contacts within our approach. Note that rigid body simulations using convex polyhedral shapes is a common practice in real-time simulators~\cite{erez2015simulation}, but these methods need to explicitly detect and manage contact points, rendering them less amenable to differentiation. Our methodology draws inspiration from the interior point method~\cite{Harmon2009ACM,Li2020IPC}. These techniques introduce a barrier energy function denoted as $U_c(\bullet)$ into the numerical optimizer, a function defined exclusively within the collision-free domain. Moreover, these barrier energies can be intentionally designed to possess sufficient smoothness, rendering them amenable to differentiation. However, a challenge arises when applying this approach to our representation of robot shapes. Indeed, contacts can occur not only between vertices but also faces and edges. With the shape changes of convex polyhedrons, these face-wise or edge-wise contacts can dynamically appear or vanish, leading to non-differentiable singular configurations.

We propose an alternative barrier energy formulation between a pair of convex polyhedrons, $H_{ij}$ and $H_{i'j'}$, that is inspired by separating hyperplane theorem, which was previous used to handle contacts for UAV swarms in~\cite{honig2018trajectory}. This theorem claims that, when two convex shapes exhibit no penetration, a separating plane, denoted as $p\triangleq\TWO{n}{o}\in\mathbb{R}^4$, exists such that $H_{ij}$ and $H_{i'j'}$ reside on opposite sides of $p$~\cite{borwein2006convex}, which can be formally modeled through the following set of constraints:
\begin{align}
\label{eq:Cons}
\|n\|=1\land\forall m=1,\cdots,M:
\begin{cases}
p^TT_i(d,\theta)x_{ij}^m\leq0\\
p^TT_{i'}(d,\theta)x_{i'j'}^m\geq0\\
\end{cases},
\end{align}
where $n\in\mathbb{R}^3$ is the normal of plane and $o\in\mathbb{R}$ is the offset. Notably, the first constraint ensures that the separating plane is well-defined with a unit-normal. Regretfully, the unit-normal constraint is non-convex and can incur significant difficulty in numerically satisfying~\prettyref{eq:Cons}. To resolve this problem,~\citet{honig2018trajectory} replaced the unit-normal constraint with the convex constraint, $\|n\|\leq1$, which is widely used in the Support Vector Machine (SVM)~\cite{suthaharan2016support} to equivalently ensure the separation of convex sets~\cite{chen2005tutorial}. 

\begin{wrapfigure}{r}{0.24\textwidth}
\centering
%\vspace{-10px}
\includegraphics[width=0.2\textwidth]{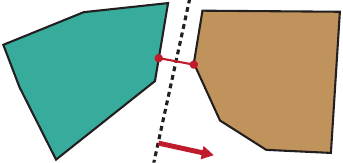}
\put(-70,30){$x_{ij}$}
\put(-40,25){$x_{i'j'}$}
\put(-42,50){$p$}
\put(-42,6){$n$}
\put(-61,8){$o$}
\caption{The separating hyperplane (dashed) between a pair of convex polyhedrons with normal $n$ (red). The closest distance (red) between them is realized by $x_{ij}$ and $x_{i'j'}$.}
\label{fig:contact}
%\vspace{-10px}
\end{wrapfigure}
Building upon the principles of the interior-point method, we formulate the aforementioned constraints as a smooth barrier function. \citet{Li2020IPC} proposed the clamped log-barrier function that is locally supported in $(0,s)$. Instead, we use the function of $P_s(x)=\max(0,(x-s)^4/x^5)$ with the similar property but is convenient for our analysis, and we derive the following contact potential for the pair of convex polyhedrons:
\begin{align*}
&U_{c\star}^{ij,i'j'}(d,\theta,p)
\triangleq P_s(1-\|n\|)+\\
&\sum_{m=1}^M \left[P_s(-p^TT_i(d,\theta)x_{ij}^m)+
P_s(p^TT_{i'}(d,\theta)x_{i'j'}^m])\right].
\end{align*}
Our contact potential effectively circumvents the need to identify face-wise or edge-wise collisions by consolidating energies directly into the vertices of the convex hulls, which are directly mapped to our parameter space. Note that the definition of $U_{c\star}^{ij,i'j'}$ remains under-determined, as it relies on the additional separating plane $p$. To resolve this issue, we consider the separating plane as an auxiliary physical entity characterized by zero mass as shown in~\prettyref{fig:contact}. Consequently, its presence exerts a negligible influence on other physical entities, preserving the behavior of the physical world. However, since $p$ possesses zero mass, it must experience zero force and torque, as any non-zero force or torque would result in infinite acceleration. We observe that the only potential energy associated with $p$ is the contact energy, so that $p$ must constitute a local minimum of $U_{c\star}^{ij,i'j'}$, yielding a well-determined barrier potential: $U_c^{ij,i'j'}(d,\theta)\triangleq\min_p\;U_{c\star}^{ij,i'j'}(d,\theta,p)$ and we define the optimal separating plane as $p_\star$. In our~\prettyref{sec:ContactProof}, we show that $U_c^{ij,i'j'}$ is a well-defined, twice-differentiable contact potential that strictly prevents any intersections between the pair of polyhedron. 

Our contact potential is amenable to efficient computations. As a first step, we can construct a Bounding Volume Hierarchy (BVH) for all the convex polyhedrons and $U_c^{ij,i'j'}$ only takes non-zero values when the distance between $H_{ij}$ and $H_{i'j'}$ is less than $2s/(1-s)$ (see~\prettyref{sec:ContactProof}). By bulging each bounding box by this distance, we can accelerate energy evaluations by pruning non-overlapping pairs. When two bounding boxes overlap, we then use the GJK algorithm~\cite{lindemann2009gilbert} to compute the exact distance between the two polyhedrons, along with the pair of points $x_{ij}$ and $x_{i'j'}$ realizing the smallest distance. If the energy can be non-zero, we initialize the separating plane $p$ to be the middle orthogonal plane between $x_{ij}$ and $x_{i'j'}$. Next, we use the Newton's method to find the globally optimal $p$ to compute $U_c^{ij,i'j'}$. Finally, the derivatives of our formulated contact potential can be derived using the (high-order) implicit function theorem.
\subsection{\label{sec:friction}Frictional Contact Mechanics}
\begin{wrapfigure}{r}{0.24\textwidth}
\centering
%\vspace{-20px}
\includegraphics[width=0.2\textwidth]{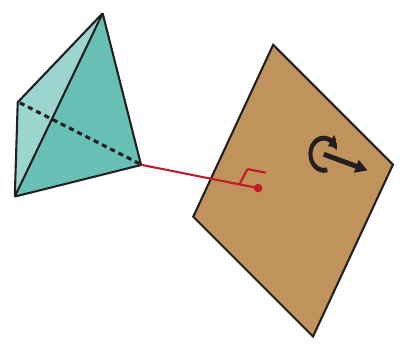}
\put(-80,35){$T_ix_{ij}^m$}
\put(-50,30){$BT_ix_{ij}^m$}
\put(-30,55){$\omega$}
\put(-16,37){$u$}
%\vspace{-10px}
\caption{We assume the separating plane is moving at an in-plane angular speed of $\omega$ and linear speed of $u$. Each vertex $T_i(d,\theta^t)x_{ij}^m$ is projected onto the plane as $B^{ij,i'j'}T_i(d,\theta^t)x_{ij}^m$, to compute the relative velocity.}
\label{fig:friction}
%\vspace{-10px}
\end{wrapfigure}
Expanding our contact model to incorporate frictional forces presents a unique challenge that necessitates a novel formulation. Prior simulators have traditionally employed point-wise Coulomb's frictional models, whereas our contact model operates between convex hulls and does not explicitly identify contact points. To bridge this gap, we once again turn to the separating plane, denoted as $p_\star^t=\TWO{n_\star^t}{n_{0\star}^t}$, where the superscript indicates the separating plane computed at the $t$th timestep. 

Modeled as a physical entity with zero-mass, we make the assumption that the separating plane is undergoing tangential motion with a linear speed of $u\in R^2$ and an angular speed of $\omega\in R$. Instead of introducing frictional forces between the two convex hulls, we propose a novel approach illustrated in~\prettyref{fig:friction}: introducing frictional forces between each vertex of the convex hull and the separating plane. For instance, considering $x_{ij}^m$ as an example, the normal force exerted on it at the $t$th timestep is defined as:
\begin{align*}
f_{ij,\perp}^m(d,\theta^t)\triangleq
-\FPP{U_c^{ij,i'j'}}{[T_i(d,\theta^t)x_{ij}^m]_3}=P_s'n_\star^t,
\end{align*}
where we use $[\bullet]_3$ to extract the first 3 rows of homogeneous coordinates. A similar formula is applicable to the force on the other polyhedron, $f_{i'j',\perp}^m$. We introduce the notation for the tangent space basis of the separating plane: $B^{ij,i'j'}(d,\theta^t)\in\mathbb{R}^{3\times2}$. Subsequently, we approximate the tangential velocity of $x_{ij}^m$ using a first-order finite difference as follows:
\begin{align*}
\dot{x}_{ij,\parallel}^m\triangleq 
\frac{1}{\Delta t}[B^{ij,i'j'}]^T
[T_i(d,\theta^{t+1})x_{ij}^m-T_i(d,\theta^{t})x_{ij}^m]_3.
\end{align*}
Leveraging the separating plane, we can calculate the tangential velocity of the plane at the projected point of $x_{ij}^m$. This is defined as:
\begin{align*}
\dot{p}_{ij,\parallel}^m(u,\omega)\triangleq \omega [B^{ij,i'j'}]^T[n_\star^t\times[T_i(d,\theta^{t})x_{ij}^m]_3]+u.
\end{align*}
The first and second terms above arise from tangential rotation and translation, respectively. Armed with this information, we can employ Coulomb's frictional law at the point-wise level and derive an equivalent expression for the frictional force acting on $x_{ij}^m$ as follows:
\begin{align*}
f_{ij,\parallel}^m\triangleq-\mu\|f_{ij,\perp}^m(d,\theta^t)\|B^{ij,i'j'}\frac{\dot{x}_{ij,\parallel}^m-\dot{p}_{ij,\parallel}^m}{\sqrt{\left\|\dot{x}_{ij,\parallel}^m-\dot{p}_{ij,\parallel}^m\right\|^2+\epsilon}},
\end{align*}
with $\mu$ being the frictional coefficient. $f_{ij,\parallel}^m$ can be equivalently expressed in the following conservative form:
\begin{equation*}
\ResizedEq{&D_{ij}^m(d,\theta^{t+1},\theta^t,u,\omega)\triangleq
\mu\Delta t\mathcal{A}_\epsilon(f_{ij,\perp}^m(d,\theta^t))
\sqrt{\left\|\dot{x}_{ij,\parallel}^m-\dot{p}_{ij,\parallel}^m\right\|^2+\epsilon},}
\end{equation*}
where we define $\mathcal{A}_\epsilon(x)\triangleq\sqrt{\|x\|^2+\epsilon}-\sqrt{\epsilon}$ and introduce a small constant $\epsilon$ to enforce smoothness as proposed in~\cite{tassa2012synthesis}, which suffices for our co-design applications. Smoothing function for more accurate static-sliding mode switching~\cite{Li2020IPC} can also be used to replace the function $\sqrt{\|\bullet\|^2+\epsilon}$ in our formulation, which follows an identical differentiability analysis as in our~\prettyref{sec:FrictionProof}. We denote $U_{f\star}^{ij,i'j'}\triangleq\sum_m[D_{ij}^m+D_{i'j'}^m]$ as the frictional-damping energy, which allows us to integrate the frictional force seamlessly into the unconstrained optimization framework. Notably, we rely on the magnitude of the normal force and the normal vector of the separating plane from the $t$th timestep to predict the frictional force at the $t+1$th timestep as proposed in~\cite{Li2020IPC}. Although a more advanced approach in~\cite{kaufman2008staggered} has been proposed to iteratively estimate the normal and frictional forces at the $t+1$th timestep, the convergence of such iteration is not well-studied, complicating our subsequent differentiability analysis. Our evaluation shows that using the separating plane from the $t$th timestep is sufficiently accurate when operating with a small timestep size $\Delta t$.

In a similar manner to the normal contact forces, our frictional-damping energy remains under-determined due to the involvement of the separating plane's velocity parameters, $u$ and $\omega$. To resolve these parameters, we invoke the Maximal Dissipation Principle (MDP), which asserts that an appropriate frictional force should maximize the dissipation of tangential velocity. Since our frictional-damping energy quantifies the magnitude of tangential relative velocity, the MDP immediately implies that $u$ and $\omega$ correspond to local minima of the frictional-damping energy. In light of these considerations, we define the following well-determined frictional-damping energy for a pair of convex hulls, $H_{ij}$ and $H_{i'j'}$:
\begin{align*}
U_f^{ij,i'j'}(d,\theta^{t+1},\theta^t)\triangleq\fmin{u,\omega}\;U_{f\star}^{ij,i'j'}(d,\theta^{t+1},\theta^t,u,\omega).
\end{align*}
It can be shown that our frictional potential is well-defined, twice-differentiable, and imposes opposing forces and torques on the two convex polyhedrons (see~\prettyref{sec:FrictionProof}). With these elements in place, we can now present the comprehensive formulation of the SDRS simulation function $f$. This function is structured as an unconstrained optimization, and it is expressed as follows:
\begin{equation}
\label{eq:SDRS}
\ResizedEq{&\theta^{t+1}\triangleq\argmin{\theta^{t+1}}\;\mathcal{O}(d,\theta^{t+1},\theta^t,\theta^{t-1},u^t)\triangleq\sum_{ij}I_{ij}(d,\theta^{t+1},\theta^t,\theta^{t-1})+\\
&U_\text{PD}(\theta^{t+1},\theta^t,u^t)+\mu
\sum_{ij,i'j'}^{i\neq i'}\left[U_c^{ij,i'j'}(d,\theta^{t+1})+U_f^{ij,i'j'}(d,\theta^{t+1},\theta^{t})\right].}
\end{equation}
Notably, we require $i \neq i'$, i.e., we exclusively examine collisions between convex hulls originating from distinct robot links. This allows for each robot link to be represented by the (potentially overlapping) combination of convex hulls.

\subsection{\label{sec:forwardbackward}Forward Simulation \& Backward Derivative Evaluation}
To tackle the optimization problem represented by~\prettyref{eq:SDRS} for forward simulation, we employ a conventional Newton's method paired with a line-search globalization technique. This approach guarantees that the solution consistently reduces the objective function, thereby ensuring collision-free robot states throughout the simulation. For each timestep in the simulation, we initiate the optimization process by setting $\theta^{t+1}\gets\theta^t$. That said, we emphasize that our method can only guarantee that the simulator is collision-free at discrete time instances $\theta^t, \theta^{t+1}, \cdots$, but we cannot ensure that the robot motion is collision-free during the motion between $\theta^t$, and $\theta^{t+1}$, which requires a continuous collision checker coupled with the line-search scheme. Unfortunately, prior works~\cite{li2021codimensional,ferguson2021intersection} on Continuous Collision Detection (CCD) can only be adapted to linear or rigid motions, which are incapable of handling more general articulated motions owing to our nonlinear forward kinematics function $T_i(d,\theta^t)$. In practice, we have never observed any tunneling artifact due to our small timestep. After the forward pass, we evaluate the derivatives using the adjoint method. In our~\prettyref{sec:DifferentiabilityProof}, we show that the entire function $\mathcal{O}$ is globally twice-differentiable. As a result, it has well-defined derivatives by the implicit function theorem, under sufficiently small timestep sizes.
\section{Evaluation}
\begin{wraptable}{r}{0.24\textwidth}
\centering
%\vspace{-10px}
\begin{tabular}{cc}
\toprule
Substep & Percentage  \\
\midrule
$U_c^{ij,i'j'}$ & 30.51\%  \\
$U_f^{ij,i'j'}$ & 36.89\% \\
Backward & 30.85\% \\
Other Steps & 1.75\% \\
\bottomrule
\end{tabular}
\caption{Time breakdown per-iteration on average.}
\label{table:performance}
%\vspace{-10px}
\end{wraptable}
We have implemented our method using C++ with Python interface, where we use OpenMP to parallelize the update of bounding volume hierarchy and evaluation of the contact potential terms $U_{c,f}^{ij,i'j'}$. During the evaluation of each term, we use Newton's method to solve the internal small-scale optimization with respect to $p$ and $\TWO{u}{\omega}$ for $U_c^{ij,i'j'}$ and $U_f^{ij,i'j'}$, respectively. These internal optimizations are solved until the norm of the gradient ($\|\nabla_pU_{c\star}^{ij,i'j'}\|$ or $\|\nabla_{u,\omega}U_{f\star}^{ij,i'j'}\|$) is less than $10^{-8}$ before the energy values and derivatives can be evaluated using the implicit function theorem. We further warm-start these internal optimizations using the solution from the previous iteration, as long as these initial guesses do not lead to undefined values of $U_c^{ij,i'j'}$, in which case we recompute the distance between convex hulls using GJK and initialize the separating plane $p$ from the middle surface. For the SDRS optimization (\prettyref{eq:SDRS}), we use explicit eigen decomposition to factorize the Hessian matrix of $\mathcal{O}$ and we adjust the eigenvalues to be larger than $10^{-3}$ to ensure descendant directions are computed. Thanks to the use of reduced coordinates, the $|\theta|\times|\theta|$ Hessian matrix is rather small, making direct eigen decomposition possible. We terminate the optimization when $\|\nabla\mathcal{O}\|_\infty<10^{-4}$. For solving the co-design problem defined in~\prettyref{sec:codesign}, we use the adaptive moment estimation (Adam) method~\cite{kingma2014adam}. We incorporate our simulator as an operator in the PyTorch framework~\cite{paszke2017automatic} and we use a highly optimized routine for evaluating the derivatives of our simulation function using the adjoint method. During each co-design optimization, we first compute the derivative $\nabla_{d,c}L(\theta^H)$ and then update $\TWO{d}{c}$ via:
\begin{align*}
\argmin{d+\Delta d\in\mathcal{D},c+\Delta c\in\mathcal{C}}\;&L(\theta^H)+\Delta d^T\nabla_dL(\theta^H)+\Delta c^T\nabla_cL(\theta^H)\\
\ST\quad\quad\quad&\|\Delta d\|\leq\bar{\alpha}_d\land\|\Delta c\|\leq\bar{\alpha}_c,
\end{align*}
where $\bar{\alpha_d}$ and $\bar{\alpha_c}$ are hyper-parameters defining the learning rate for robot design and controller. The above optimization is convex and we solve it efficiently using CVX~\cite{grant2009cvx}. All the experiments are performed on a desktop machine with Intel Core i9-13900K CPU and 128 Gb memory. 

\begin{figure}[ht]
\centering
\setlength{\tabcolsep}{1px}
\begin{tabular}{cc}
\includegraphics[trim=5cm 5cm 5cm 5cm,clip,width=.22\textwidth]{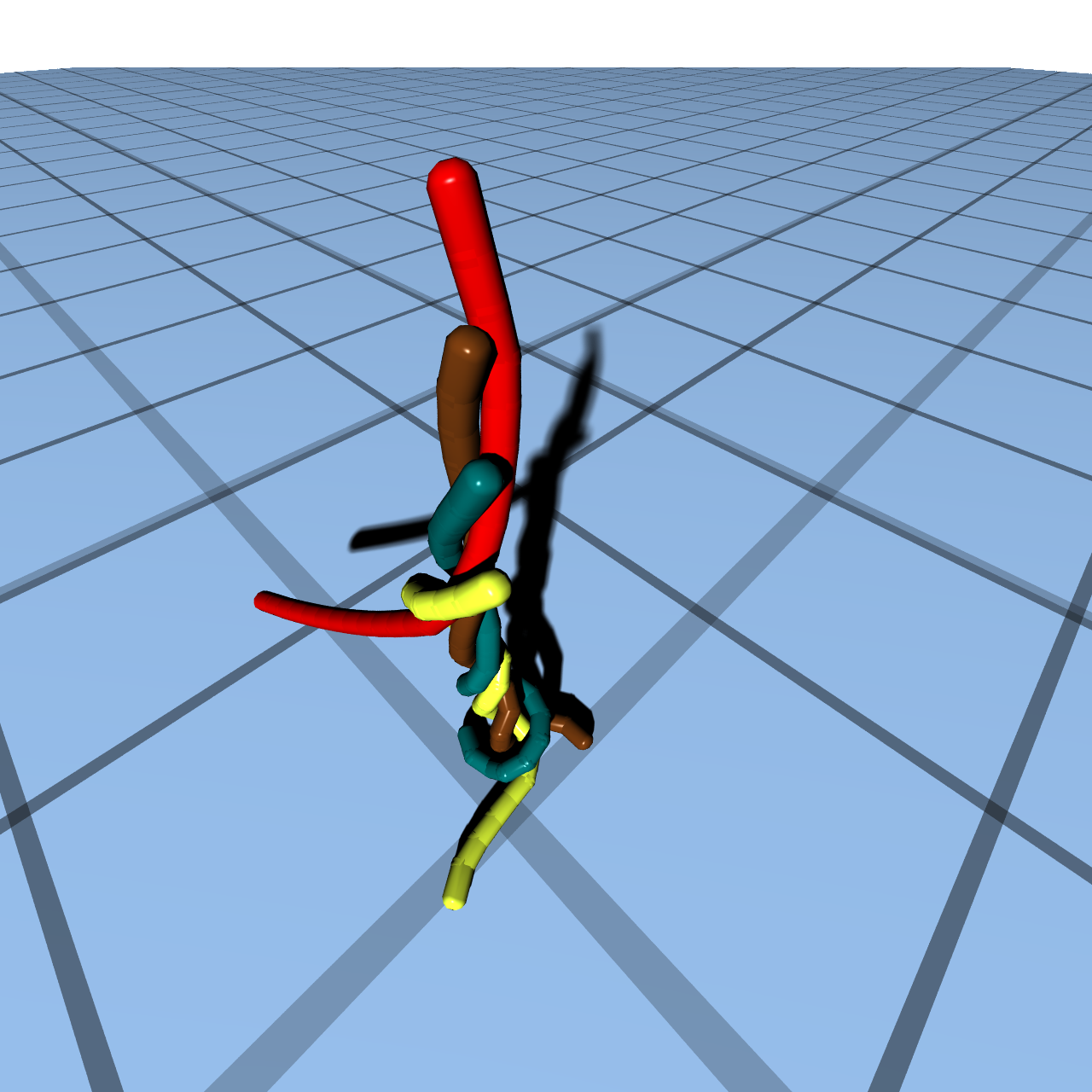} &
\includegraphics[trim=5cm 5cm 5cm 5cm,clip,width=.22\textwidth]{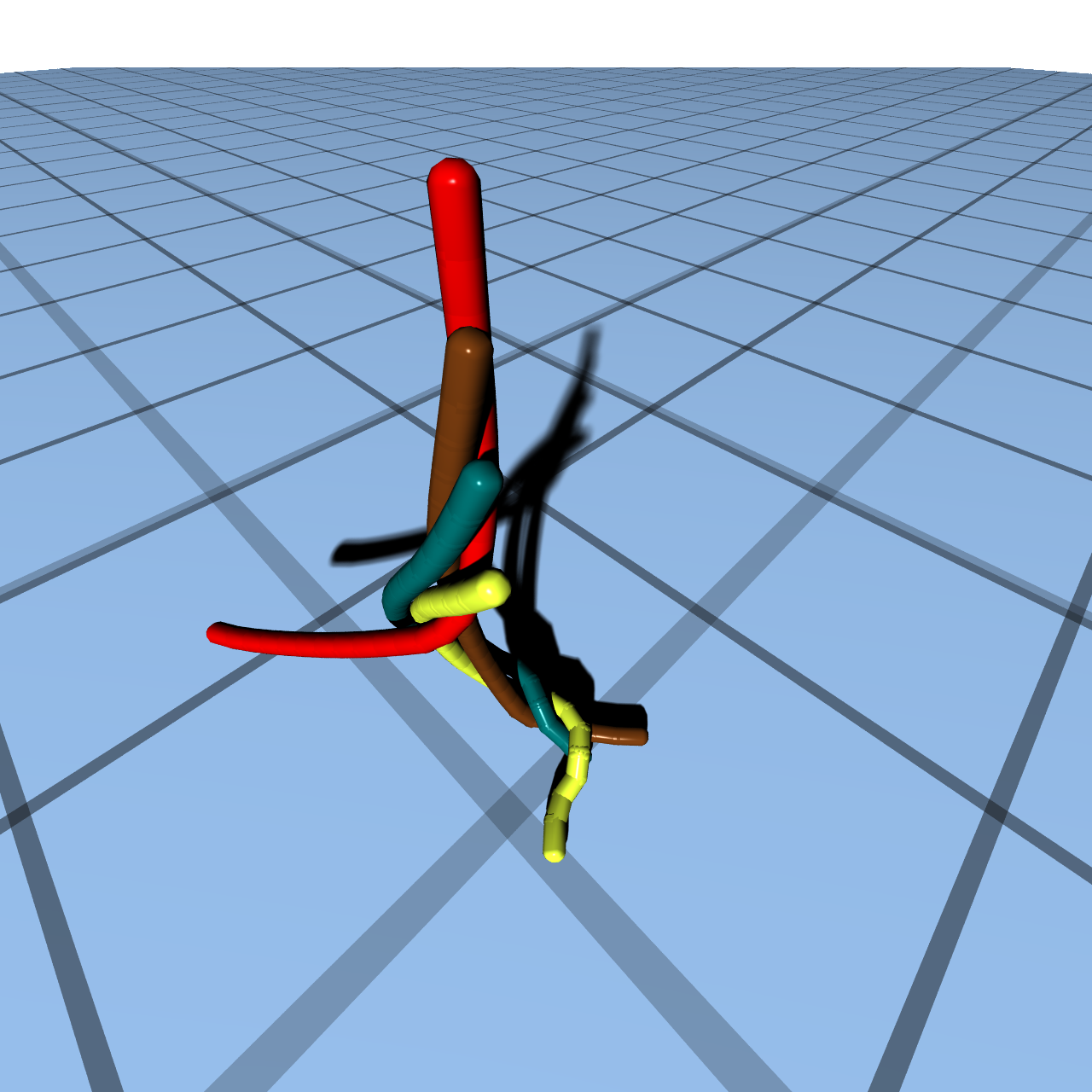}\\
\end{tabular}
%\vspace{-10px}
\caption{Simulated frames of four rotating chains using mesh-based (left) and our convex-polyhedron-based shape representations (right), where we use~\cite{Li2020IPC} as the potential energy for mesh-based representation.}
\label{fig:bench}
\end{figure}
\begin{figure}[ht]
\centering
\includegraphics[width=.45\textwidth]{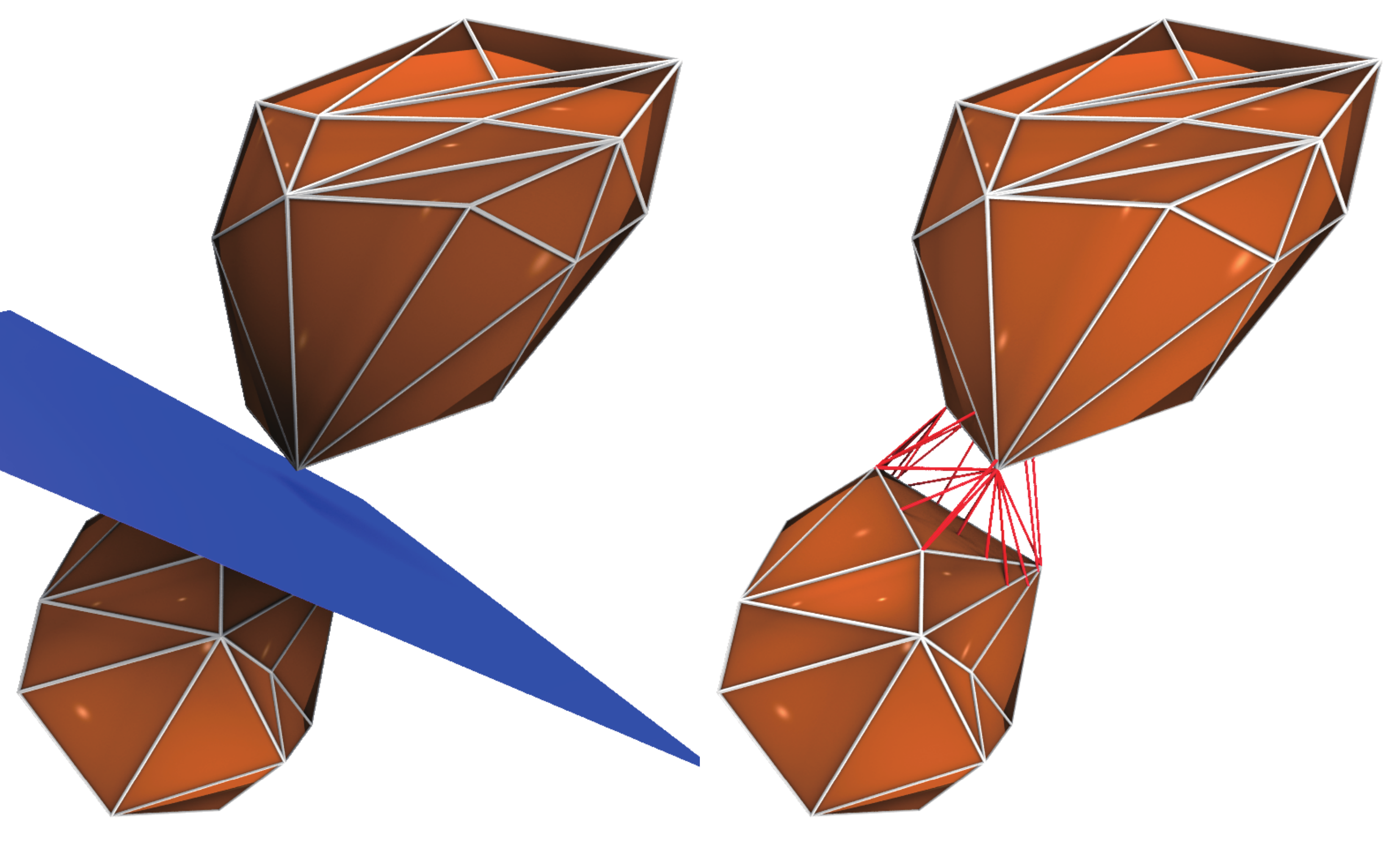}
%\vspace{-10px}
\caption{ We illustrate the two settings for one example. Our method uses a separating plane (blue) to formulate contact potential under a convex-polyhedron-based representation, while under a mesh-based representation, contact potential is formulated via distance between geometric primitives (red).}
\label{fig:cost}
\end{figure}
\begin{figure}[ht]
\centering
\setlength{\tabcolsep}{1px}
\begin{tabular}{cc}
\includegraphics[width=.22\textwidth]{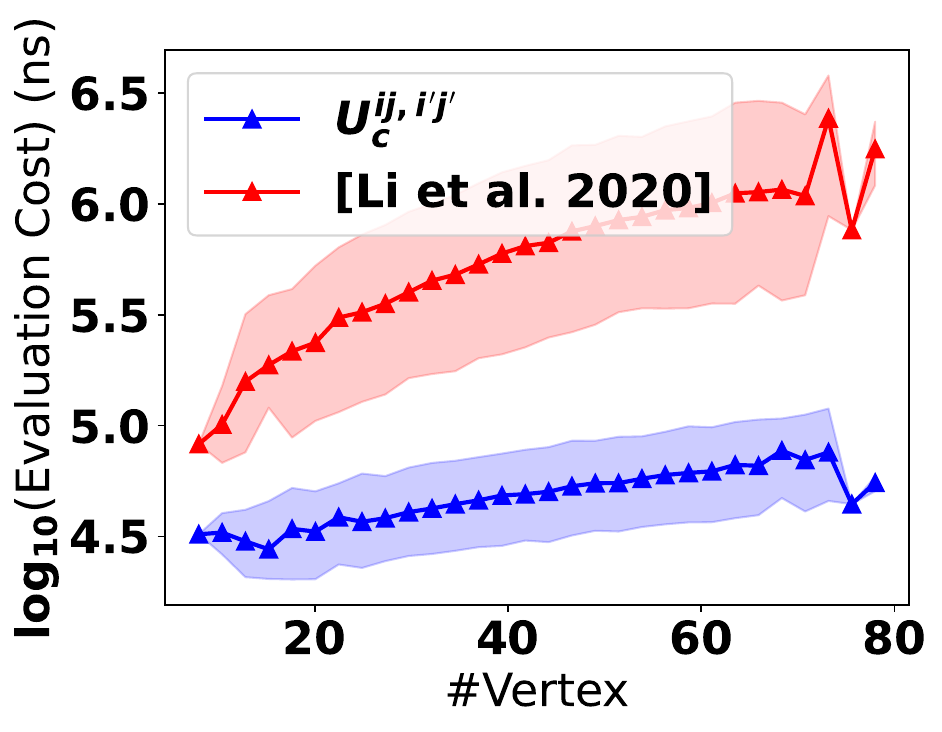} &
\includegraphics[width=.22\textwidth]{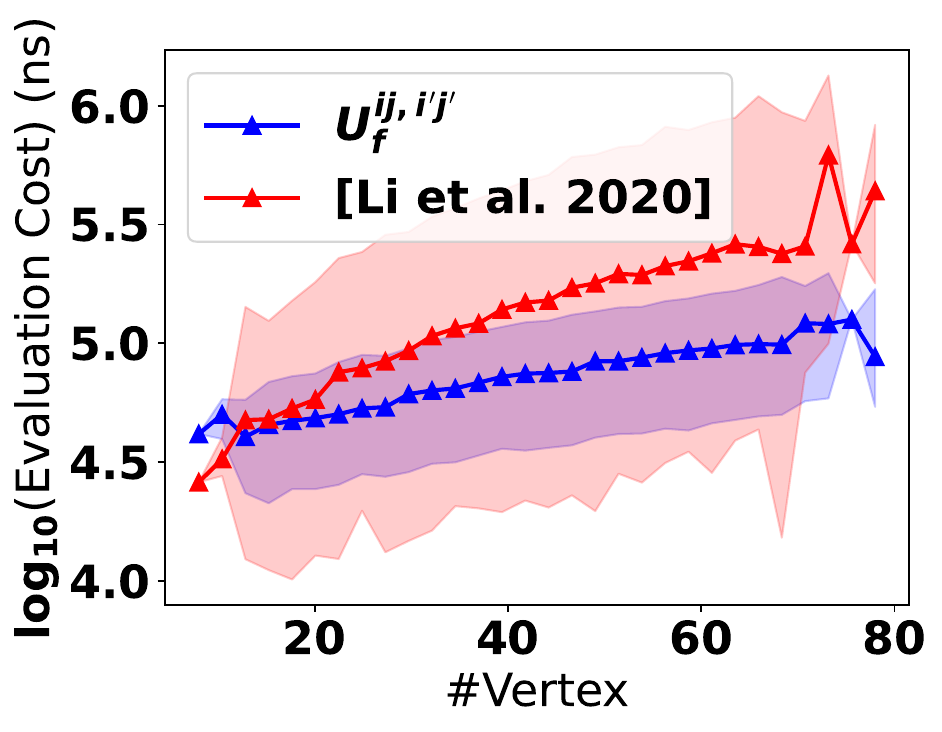} \\
\end{tabular}
\vspace{-10px}
\caption{We compare the evaluation cost of the contact potential ($U_c^{ij,i'j'}$) and the frictional damping potential ($U_f^{ij,i'j'}$) in these two cases.}
\label{fig:cost-time}
\end{figure}
\subsection{Performance Evaluation} 
We evaluate our method in a row of benchmark problems. In \prettyref{fig:bench}, we evaluate the computational cost. We simulate four rotating chains causing a large number of (self-)intersections. Each chain has $\#$ ball joints each having 2 degrees of freedom, totalizing $|\theta|=4\times 2\times\#$ degrees of freedom. The simulation is conducted using two methods, with mesh-based and our shape representations based on convex polyhedrons. Under mesh-based representation, we use the (frictional) contact potentials proposed in~\cite{Li2020IPC} to replace our $U_{c,f}^{ij,i'j'}$. The per-timestep cost of both methods are plotted against $\#$ in~\prettyref{fig:bench-time}. The performance breakdown is summarized in~\prettyref{table:performance}. 
\begin{wrapfigure}{r}{0.24\textwidth}
\centering
\includegraphics[width=.22\textwidth]{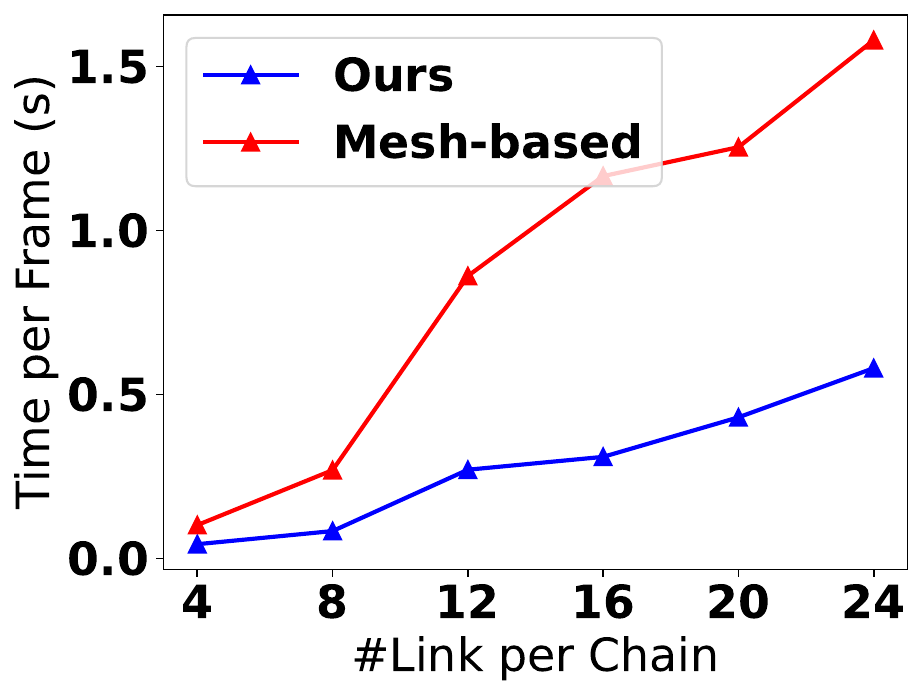}
%\vspace{-10px}
\caption{The per-timestep cost of both methods are plotted against the number of links in each chain.}
\label{fig:bench-time}
%\vspace{-10px}
\end{wrapfigure}
Our major bottleneck lies in the valuation of potential computation and the backward differentiation. Thanks to the use of minimal coordinates, the dimension of our configuration space is rather low and the system matrix inverse in the Newton's step is fast to compute, whose overhead is neglectable ($1.75\%$ in~\prettyref{table:performance}). Our contact potential is faster to evaluate than~\cite{Li2020IPC} as shown in~\prettyref{fig:cost-time}, although our potential takes a more complex form as illustrated in~\prettyref{fig:cost}. This is because, for two convex hulls each with $M$ vertices, our potential $U_c^{ij,i'j'}$ only involves $O(2M)$ terms, while $O(M^2)$ terms are needed in the worst case under a mesh-based representation, one for each pair of geometric primitives. We generate random convex hulls with different number of vertices and plot the cost of computing $U_c^{ij,i'j'}$ in~\prettyref{fig:cost}. Our potential evaluation can be $2.55-32$ times faster than one using mesh-based representation.

\begin{figure*}[ht]
\setlength{\tabcolsep}{1px}
\begin{tabular}{cccc}
\includegraphics[frame,trim=8cm 10cm 8cm 15cm,clip,height=.16\textwidth]{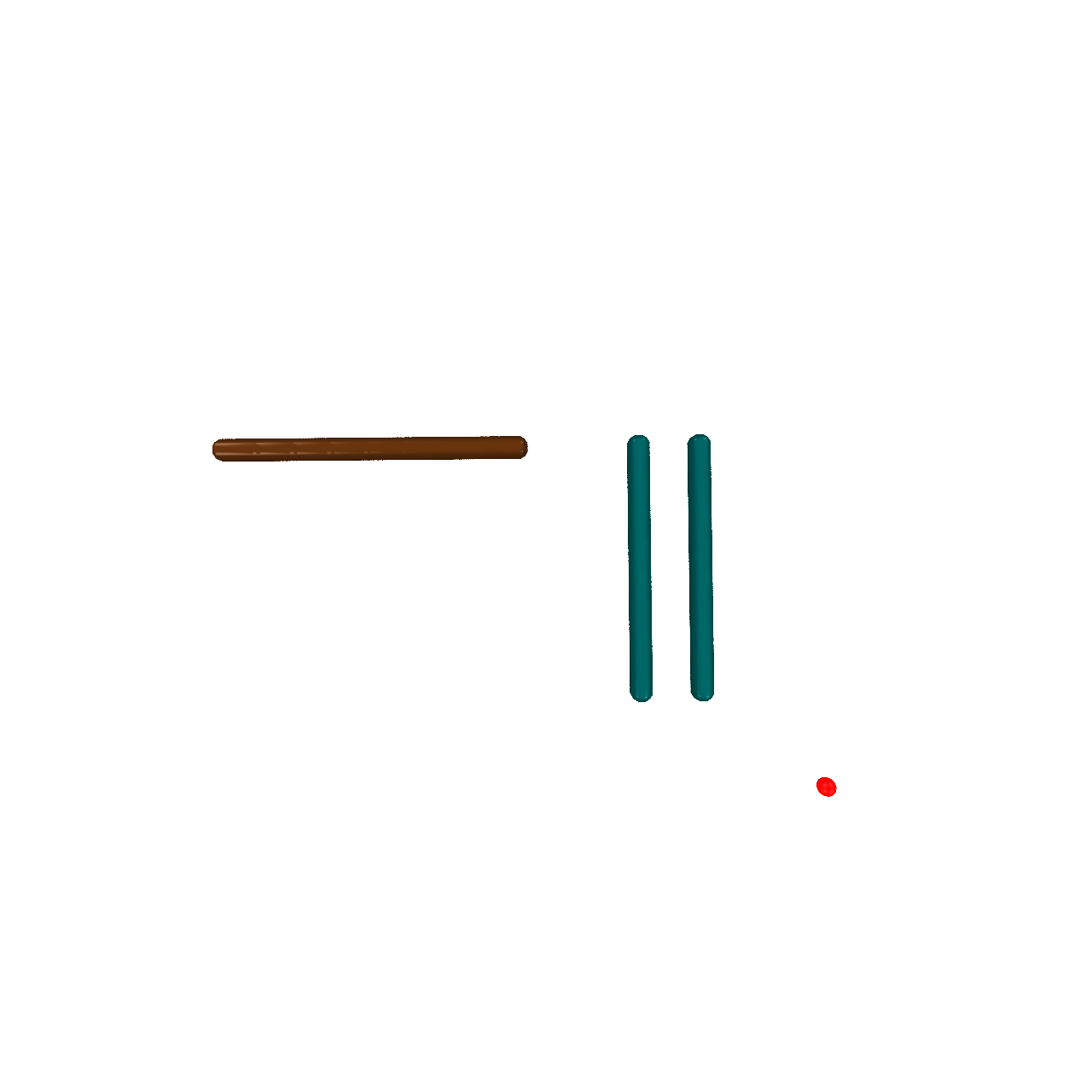} &
\includegraphics[frame,trim=2cm 10cm 8cm 15cm,clip,height=.16\textwidth]{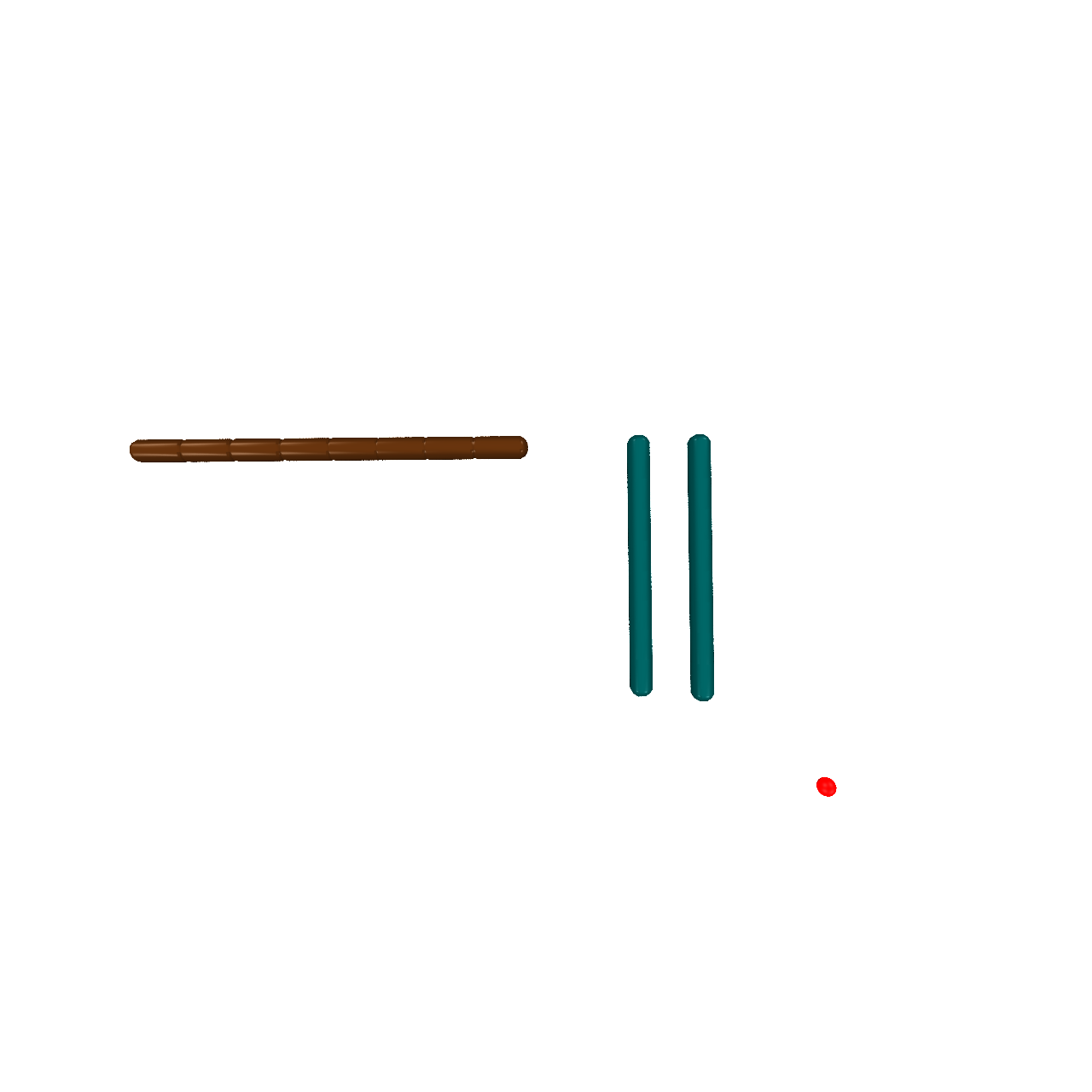} &
\includegraphics[frame,trim=8cm 10cm 8cm 15cm,clip,height=.16\textwidth]{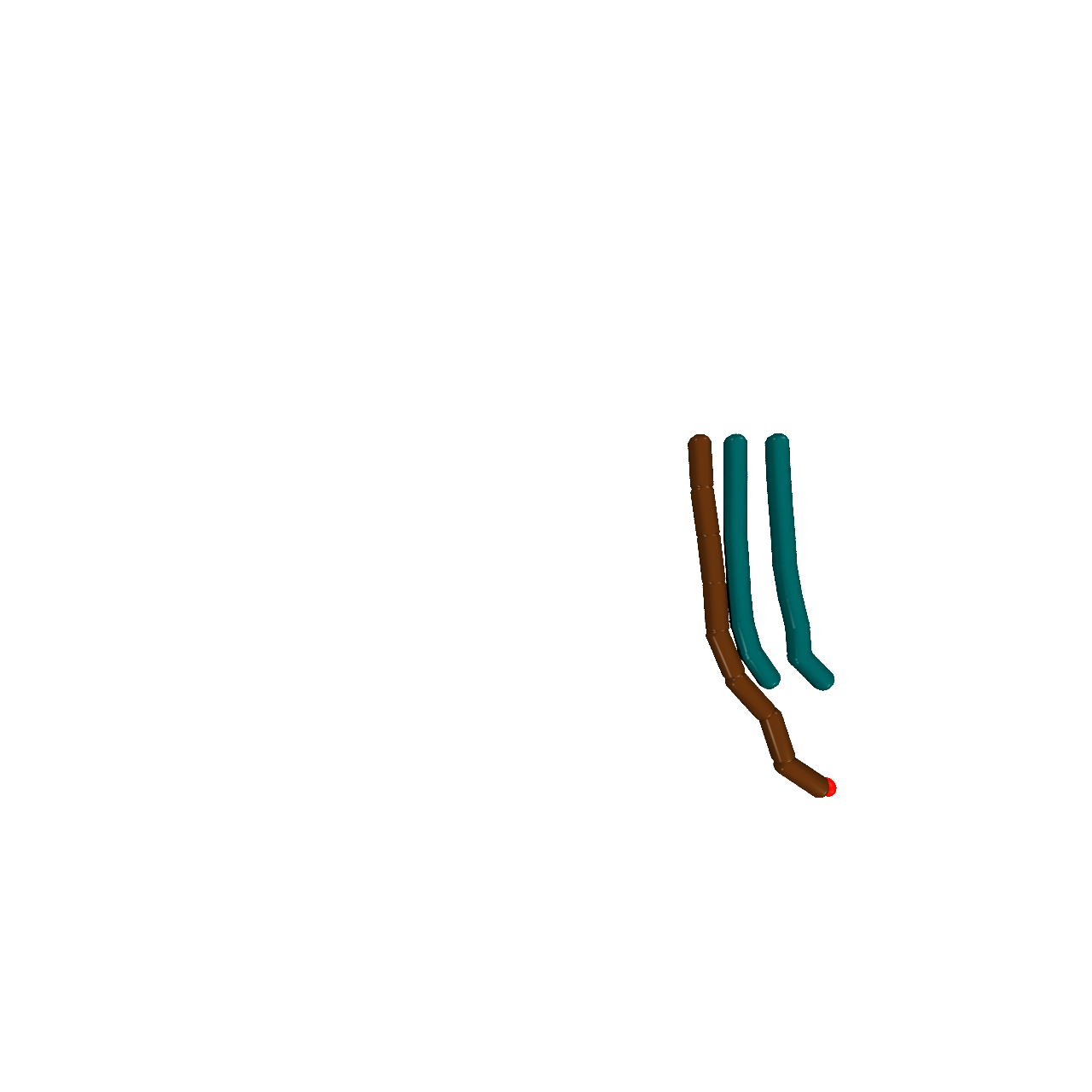}&
\includegraphics[height=.16\textwidth]{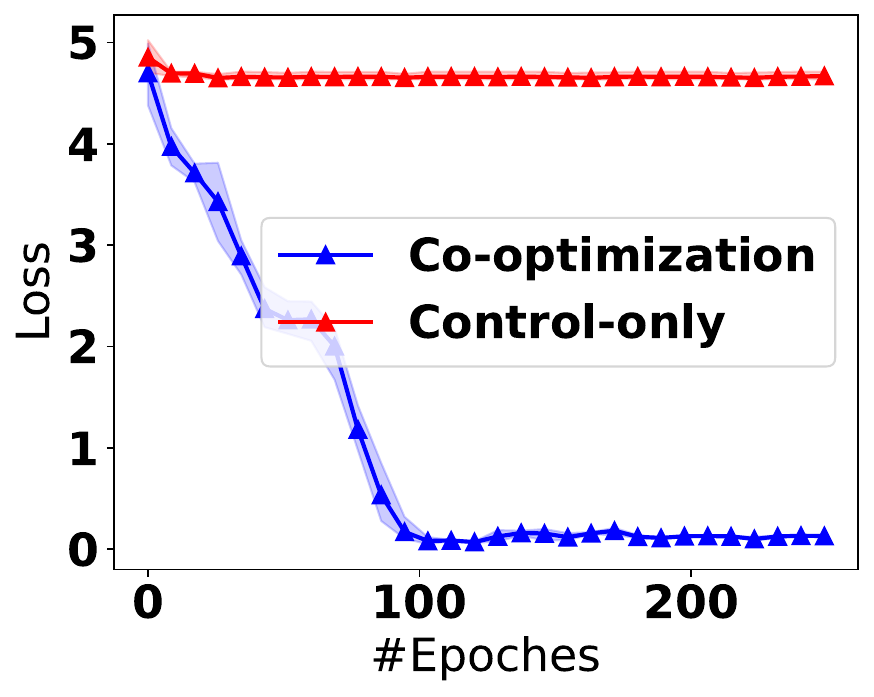} \\
\end{tabular}
\vspace{-10px}
\caption{ We optimize the attachment points and link-lengths of a 8-link chain to reach a target red point (bottom right). From left to right: The initial configuration of two chains; the optimized chain with elongated links; the simulated sequence of the chain reaching the target red point; and the convergence history of the benchmark with and without co-design.}
\label{fig:chain}
\end{figure*}
\subsection{Comprehensive Derivative Information}
In~\prettyref{fig:chain}, we show that SDRS can provide derivatives with respect to various design and control variables. We initialize three 8-link chains, where only the left (gray) chain is actuated and the two right chains are passive. We use an 1D translational actuator for the horizontal base position of the left chain. The goal of control is to have the left chain reach the (red) target point. We parameterize the control signal using a cubic spline with its 4 coefficients being the controller parameters, with an entire horizon of $H=250$. In addition, we include the attachment points of each link on the left chain as additional $8$ design variables, while relying on the shared vertices $x_{ijk}$ to keep the chain connected. As shown in~\prettyref{fig:chain} (right), our co-design can significantly outperform pure controller optimization by elongating the chain to reach the target point, taking $16.63$ (min) to converge. Indeed, our co-design can further decrease the loss function by $98.5\%$ and our optimized chain reaches the red target with an error around $0.2$ times the link length.

\begin{figure}[ht]
\centering
\setlength{\tabcolsep}{1px}
\begin{tabular}{cc}
\includegraphics[height=.2\textwidth]{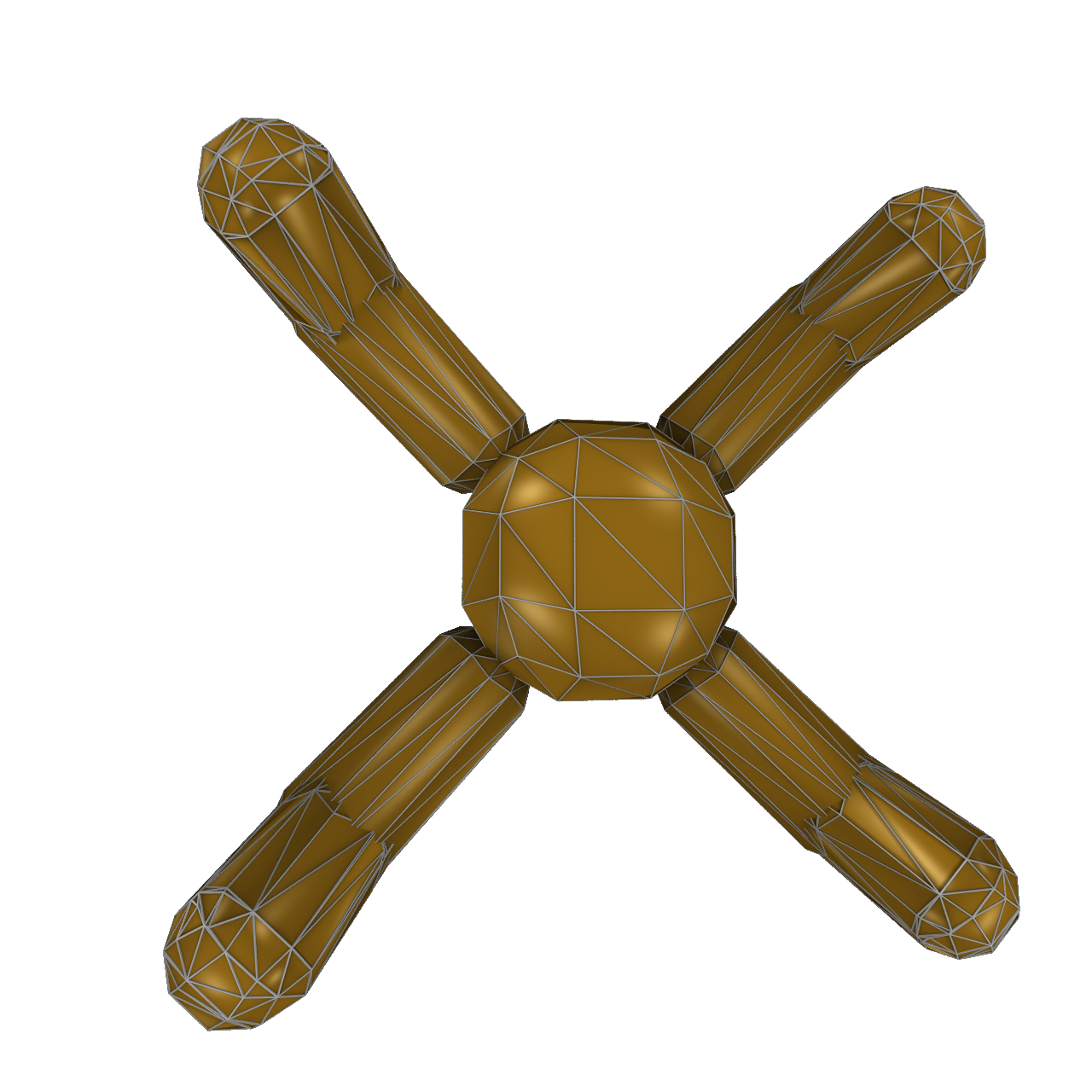} &
\includegraphics[height=.2\textwidth]{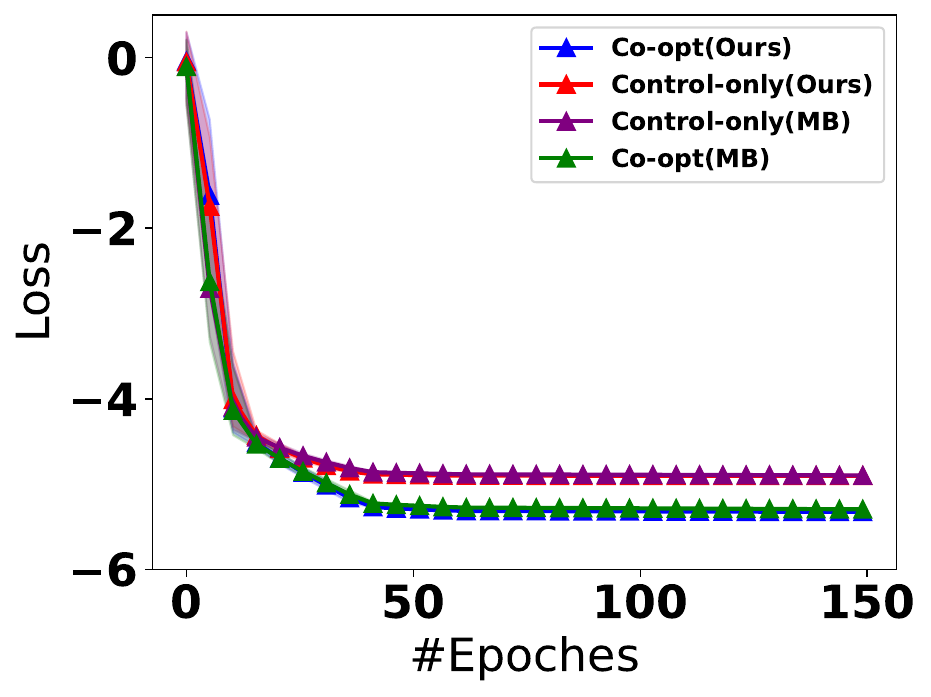}\\
\multicolumn{2}{c}{\includegraphics[height=.2\textwidth]{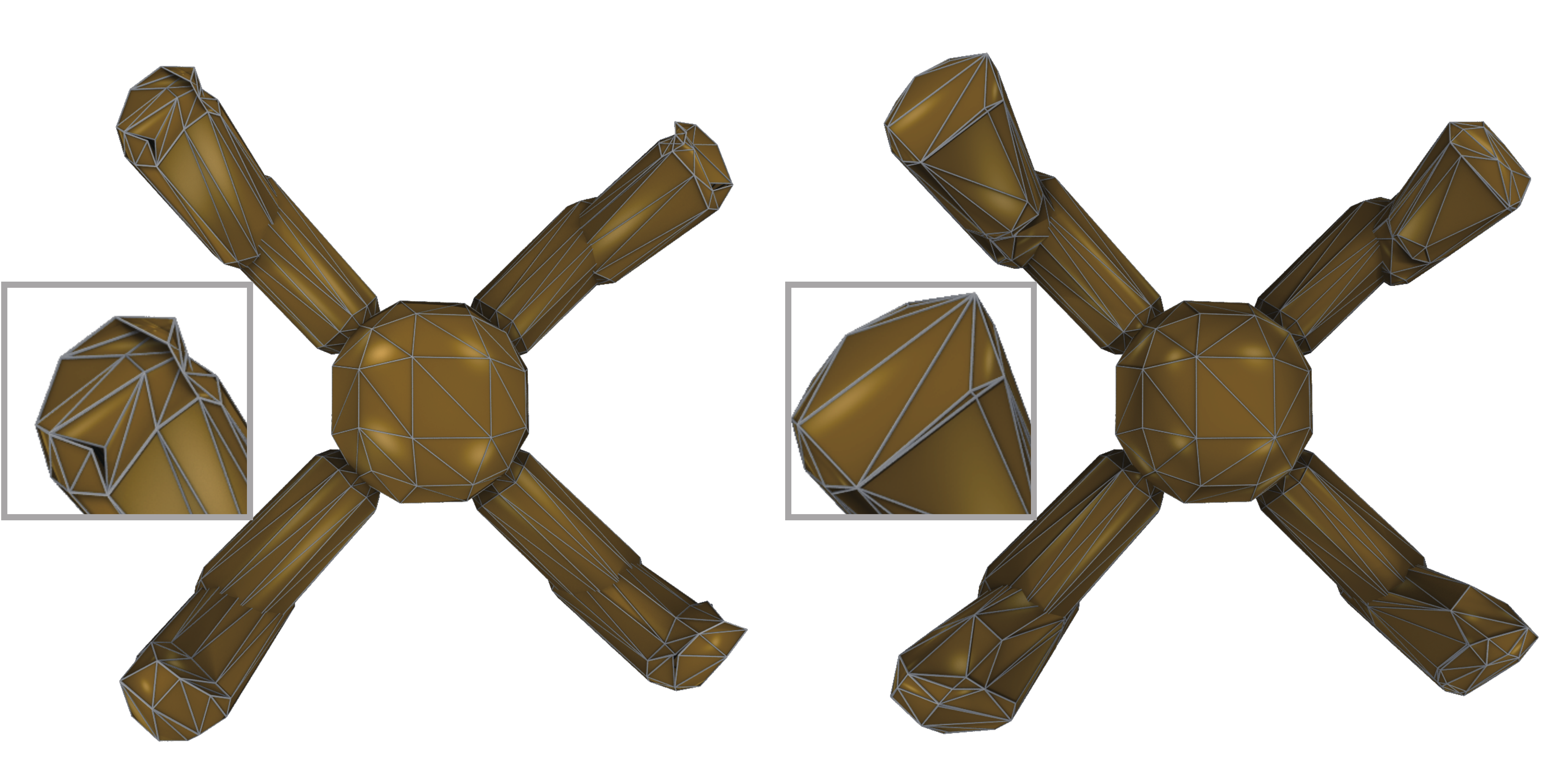}}
\end{tabular}
\vspace{-10px}
\caption{Top row: We show the original spider leg shape (left) and the convergence history (right). Bottom row: We show optimized leg shape with our approach (left) and mesh-based (MB) method (right). The shape of end-effector is highlighted to the left, which shows that mesh-based method creates unnecessary notches and wedges.}
\label{fig:spider}
\end{figure}
\subsection{Benchmark: Quadruped Locomotion}
We illustrate our first benchmark in~\prettyref{fig:spider}, where we optimize the motion of a 9-link spider robot with 16 degrees of freedom walking on a $15^\circ$-sloped ground. We use a set of parameters $\bar{\alpha_c}=3e^{-1}, \bar{\alpha_d}=4e^{-3}$, and $H=550$. The goal of optimization is to have its center-of-mass reach a target position by setting the loss function as $R(\theta^H)=-\|\text{COM}(\theta^H)-(10\cos(15),0,10\sin(15))\|^2$, where the COM is initially at $(0,0,0)$ and $-z$ is the gravitational direction. Our controller is parameterized in a similar way as in~\cite{hu2019chainqueen}, by setting the PD-controller's position target $\theta_\star^t$ to be a linear combination of $4$ sine waves with learnable magnitudes, phase shifts, and frequency. We further set the derivative target $\dot{\theta}_\star^t$ to be the time-derivative of these sine waves. We set the PD gains to be $k_p=1e^2$ and $k_d=1e^1$. During the co-optimization stage, we focus solely on optimizing the shape of the robot's 4 feet. Each robot foot has an initial 96 vertices, and using V-HACD, we represent each foot using 3 convex polyhedrons. The convergence history of the optimization with and without shape co-optimization is plotted in~\prettyref{fig:spider}, both taking 4 (min) to converge, where the version with shape co-optimization further increase the spider's walking distance by $8.78\%$ from $4.90$ to $5.33$. Our improvement in this benchmark is due to the shape co-optimization that deforms the robot foot to have a flat contact surface (\prettyref{fig:spider} bottom right), leading to larger friction and thus better locomotion performance. %The improvement in our second and third benchmarks is due to the shape co-optimization that deforms the robot foot to have a flat contact surface (\prettyref{fig:spider} middle), leading to larger friction and thus better locomotion performance. As a common requirement, user might want to further parameterize the robot's design space. For example,~\citet{Xu-RSS-21} assumed that the robot shape is controlled by a coarse cage mesh. We conduct such an experiment by setting the vertices of the spider leg as a function of a learnable leg length, leading to a 4D design space of leg-length parameters. In this case, our algorithm can quickly converge in 2.10 (min), increasing the spider's front leg length by $5.3\%$ and decreasing the back leg length by $5.66\%$. Its walking distance is improved from $3.55$ to $3.73$.}

We further compare our approach with~\citet{Xu-RSS-21} on this benchmark, which assumed that the robot shape is controlled by a cage mesh. As they use mesh-based (MB) simulator, we use IPC~\cite{Li2020IPC} to handle contact and friction. We use same task settings and simulator configuration as in our approach, to ensure that both simulators will generate nearly identical trajectories under the same PD signal and exhibit nearly identical performance when only controllers are optimized. For fairness, we tune the resolution of the cage used in~\citet{Xu-RSS-21} to have a same number of vertices as those in our method. The convergence history of the optimization with and without shape co-optimization with fine-cage mesh-based method is plotted in~\prettyref{fig:spider}, where the version with shape co-optimization further increase the spider's walking distance by $8.16\%$ from $4.90$ to $5.30$. We observed that the improvements achieved by mesh-based methods are comparable to those of our approach, which is expected since this benchmark does not demand significant shape changes. However, a notable difference emerges in the optimized leg shape. As shown in~\prettyref{fig:spider}, both methods attempt to expand the spider's end effector to increase the contact surface. However, the mesh-based technique introduces additional notches and wedges that are less suitable for fabrication. In contrast, our approach enables convenient adjustment of local details by moving vertices in or out of the convex polyhedra, effectively resolving these artifacts.

\begin{figure}[ht]
\centering
\setlength{\tabcolsep}{0px}
\begin{tabular}{cccc}
\includegraphics[trim=8.5cm 6.5cm 13cm 1cm,clip,width=.15\textwidth]{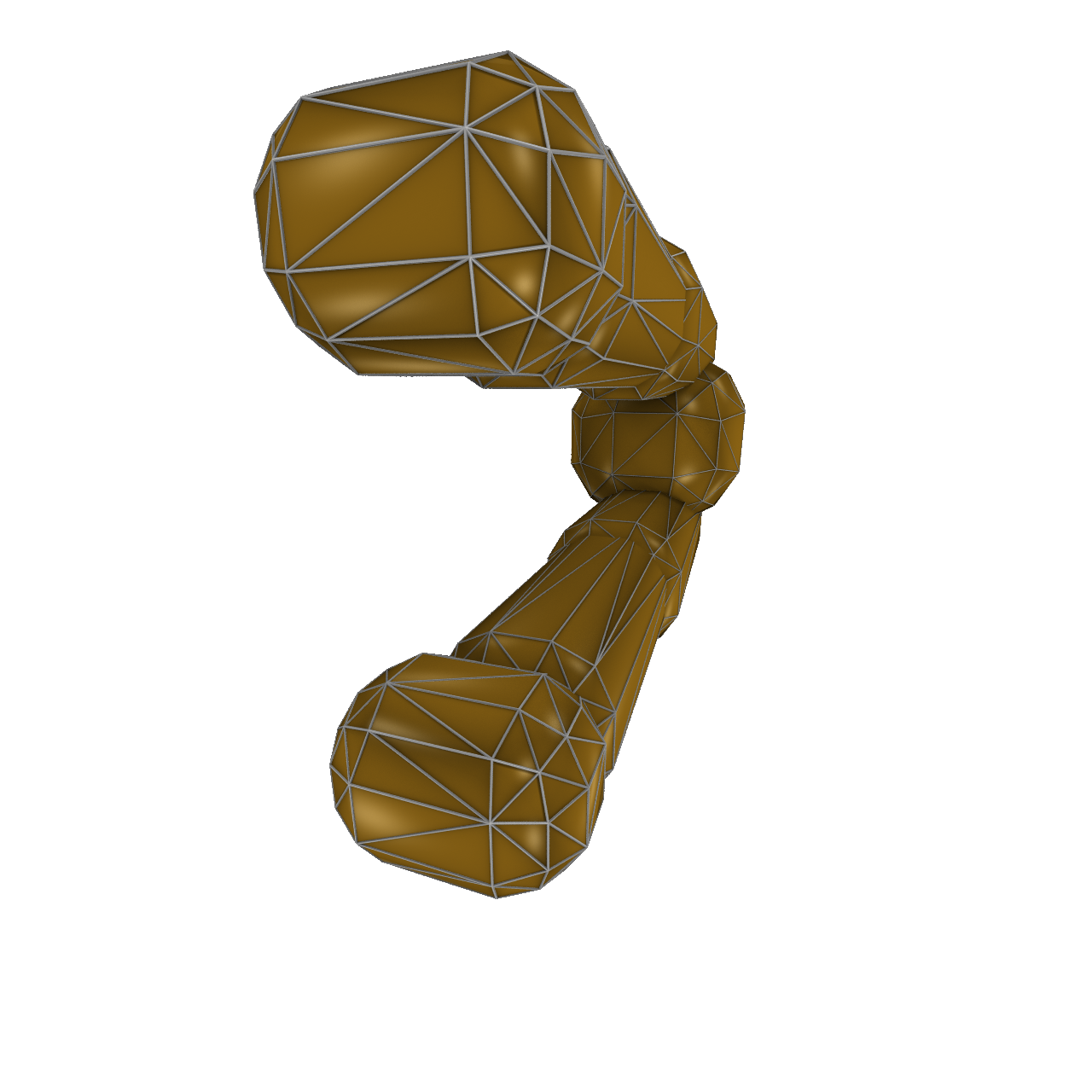}&
\includegraphics[trim=8.5cm 6.5cm 13cm 1cm,clip,width=.15\textwidth]{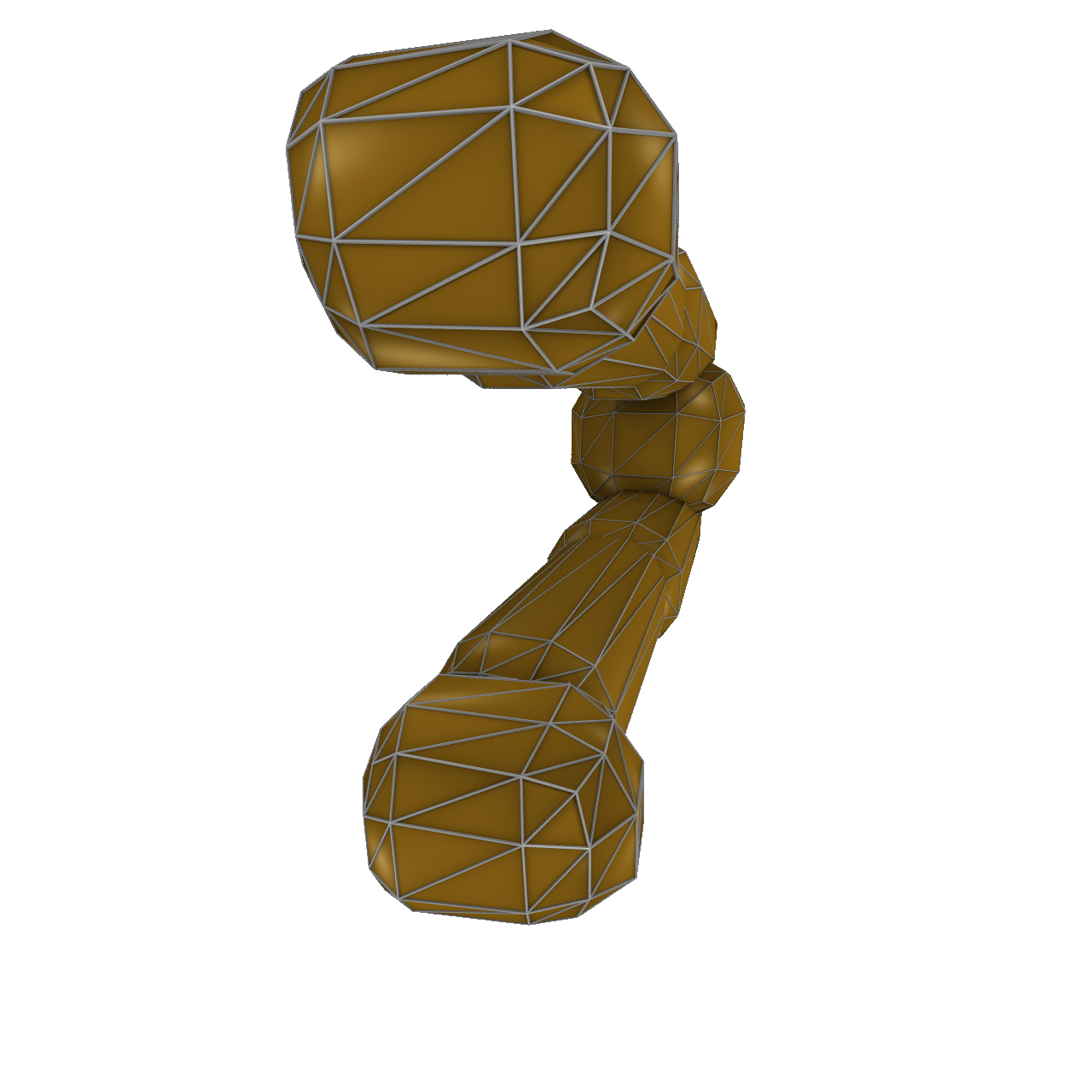}&
\includegraphics[trim=8.5cm 6.5cm 13cm 1cm,clip,width=.15\textwidth]{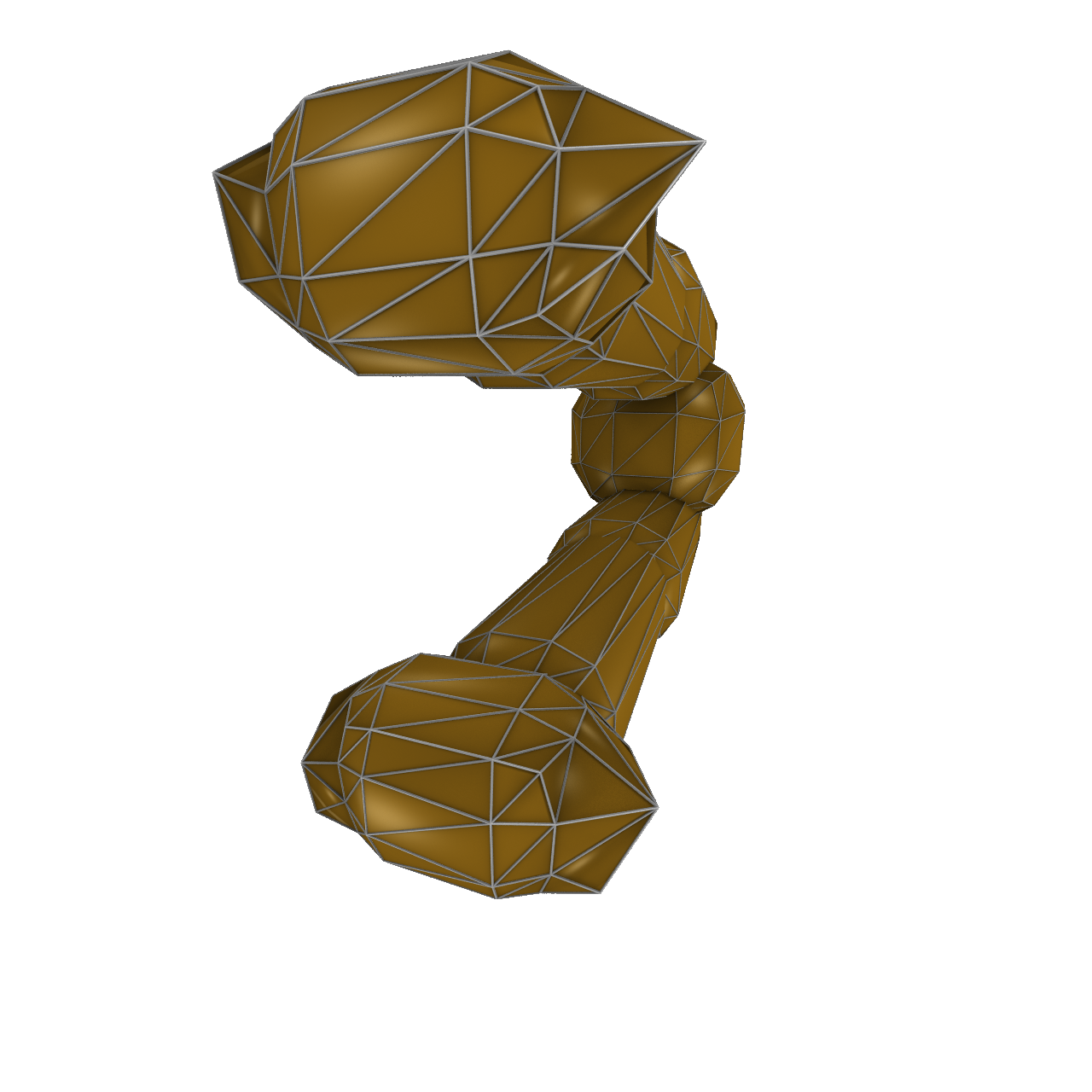}
\end{tabular}
%\vspace{-10px}
\caption{From left to right, we show the original bipedal foot shape, optimized foot shape using mesh-based methods with coarse cages and fine cages.}
\label{fig:bipedal}
\end{figure}
\begin{figure}[ht]
\centering
\setlength{\tabcolsep}{0px}
\begin{tabular}{cccc}
\includegraphics[trim=8cm 5cm 2cm 1cm,clip,height=.2\textwidth]{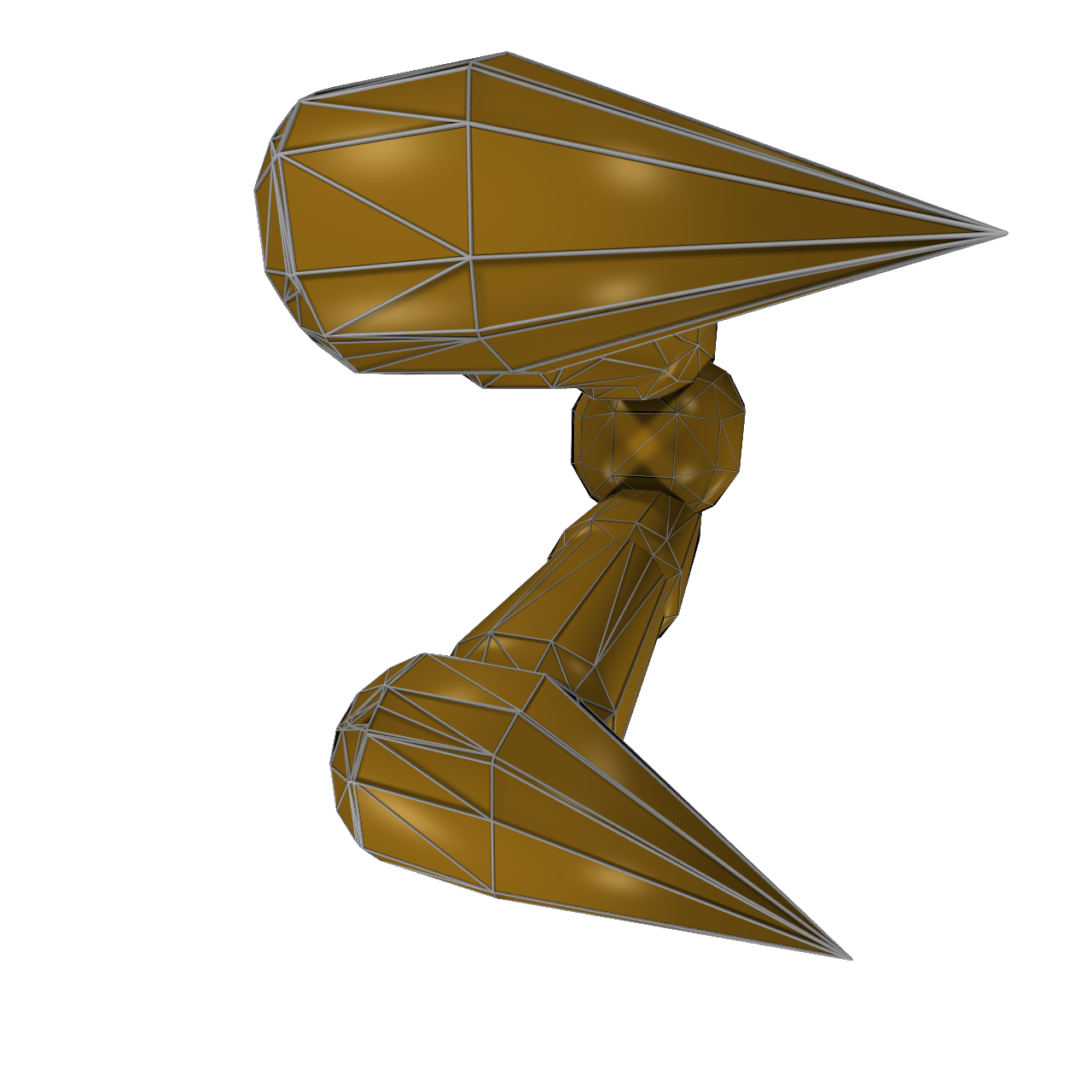}&
\includegraphics[height=.2\textwidth]{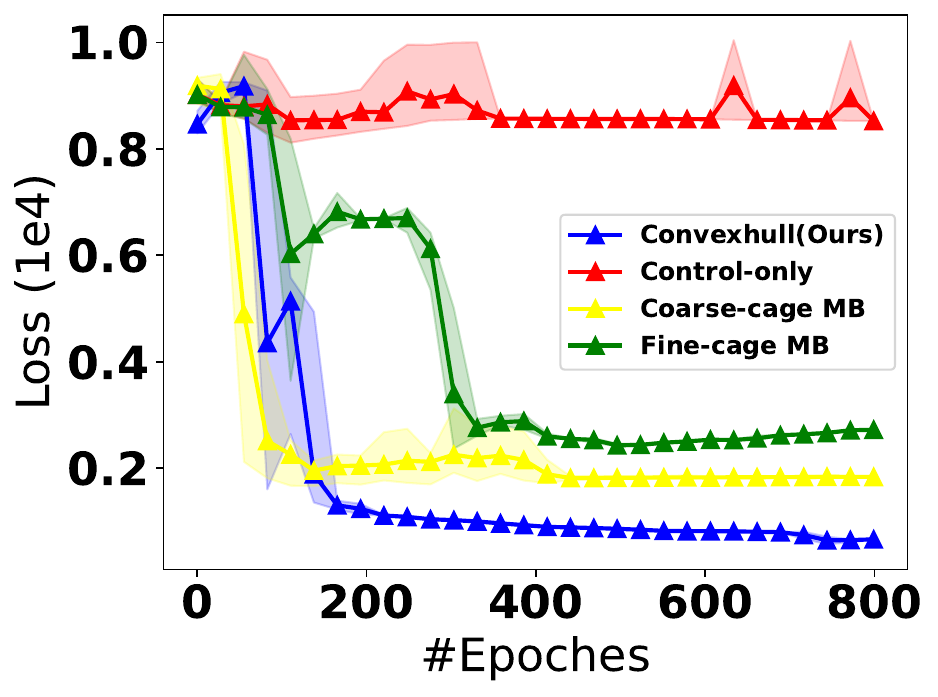}
\end{tabular}
%\vspace{-10px}
\caption{We show optimized bipedal foot shape with our approach (left) and the convergence history (right).}
\label{fig:bipedal_res}
\end{figure}
\subsection{Benchmark: Bipedal Locomotion}
In~\prettyref{fig:bipedal} and~\prettyref{fig:bipedal_res}, we optimize the motion of a 7-link bipedal robot with 12 degrees of freedom walking on the ground. In this benchmark, the initial shape of the robot's foot are not well-designed. They are relatively short in length and have smooth contact surfaces with the ground. Therefore, solely optimizing the controller is insufficient to maintain passive balance for the robot during walking. We use a different set of parameters $R(\theta^H)=-\|\text{COM}(\theta^H)-(0,3,0)\|^2, H=400, \bar{\alpha_c}=1e^{-2}$ and $\bar{\alpha_d}=4e^{-3}$, where the COM is initial at $(0,0,0)$ and $-z$ is the gravitational direction. We set the PD gains to be $k_p=1e^3$ and $k_d=1e^2$. During co-optimization, we confine our design space to the shape of the robot's 2 feet, along with controller parameters. Each robot foot has 96 vertices. Using V-HACD, we represent each foot using 3 convex polyhedrons. We compare our method with ~\citet{Xu-RSS-21}, where we experiment with two resolutions of an axis-aligned case. According to their original paper, we first experimented with a coarse cage having 26 vertices. For fairness, we also increase the cage resolution to have 96 vertices, matching the degree of freedom used in our method. As illustrated in~\prettyref{fig:bipedal}, the coarse cage cannot effectively deform the feet shape to maintain balance, while the fine cage can slightly deform the feet. Instead, our approach in~\prettyref{fig:bipedal_res} significantly deforms the robot's feet, creating rough protrusions, increasing the length of the feet, and generating flat contact with the ground, enabling the robot to easily maintain balance even when walking with an open-loop controller. The convergence history of co-optimization using various methods are plotted in~\prettyref{fig:bipedal_res} (right).

\begin{figure}[ht]
\centering
\setlength{\tabcolsep}{0px}
\begin{tabular}{ccc}
\includegraphics[trim=6cm 0 14cm 0,clip,height=.18\textwidth]{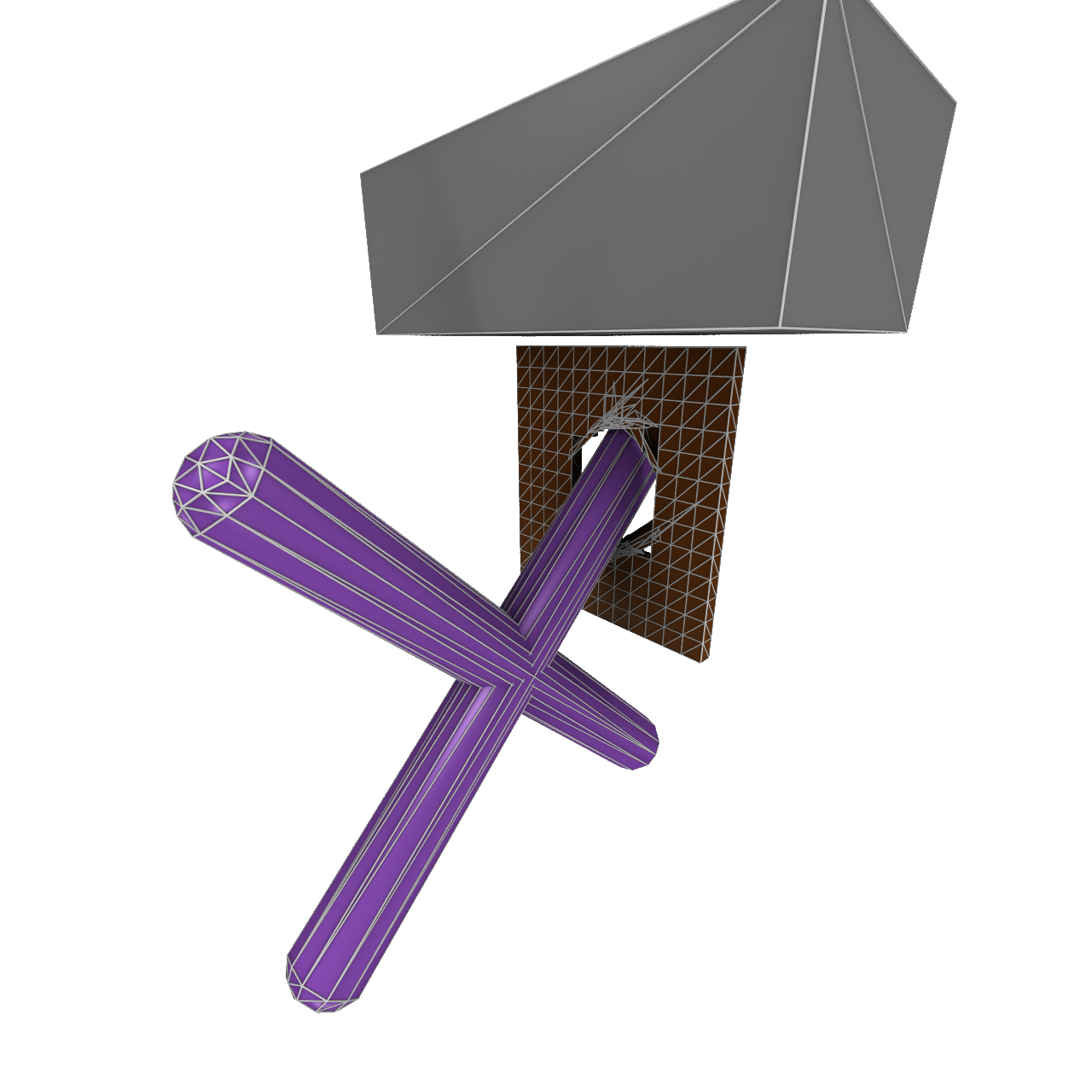}&
\includegraphics[trim=7cm 0 14cm 0,clip,height=.18\textwidth]{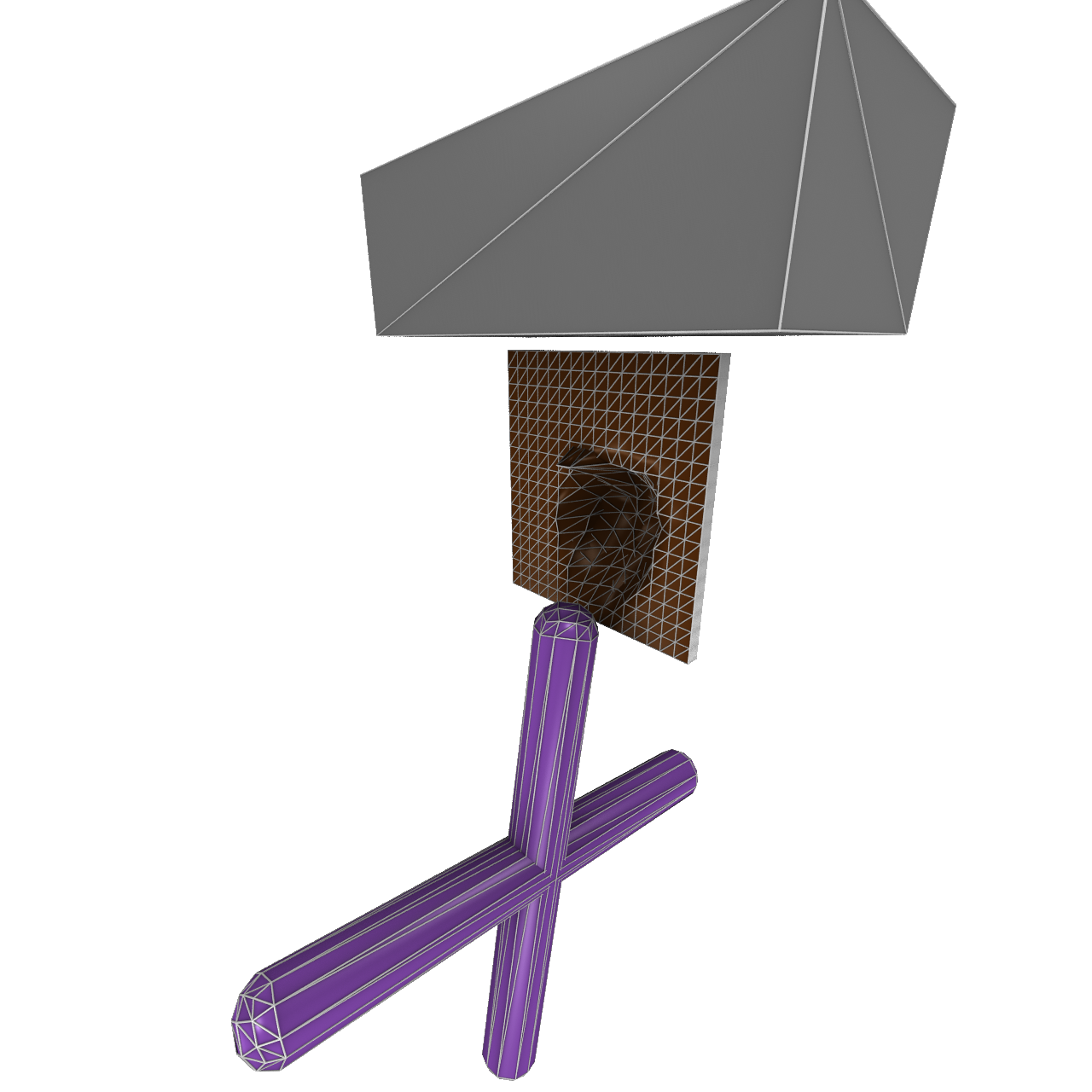}&
\includegraphics[height=.18\textwidth]{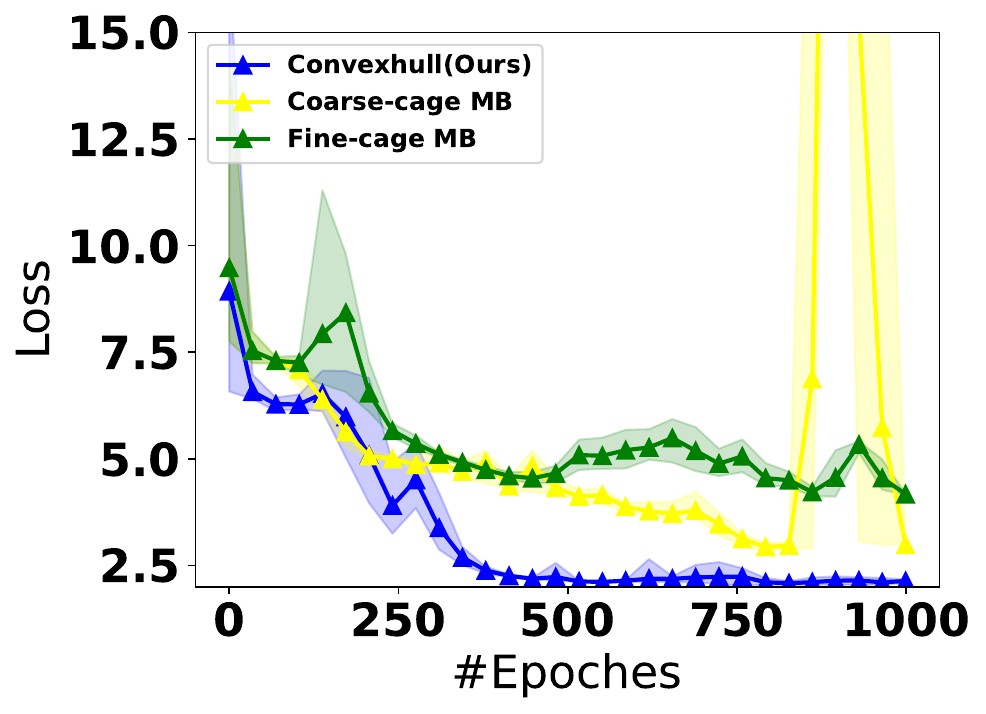}
\end{tabular}
%\vspace{-10px}
\caption{ Our benchmark requires a single-jaw gripper to grasp an X-shaped object. Our co-optimization (left) change the gripper's topology by creating a hole to secure the object, while the mesh-based method (middle) failed to grasp the object. We show the convergence history of the benchmark using our approach and mesh-based method (right).}
\label{fig:passive_gripper}
\end{figure}
\begin{figure}[ht]
\centering
\setlength{\tabcolsep}{1px}
\begin{tabular}{ccc}
\includegraphics[trim=10cm 0 10cm 8cm,clip,height=.21\textwidth]{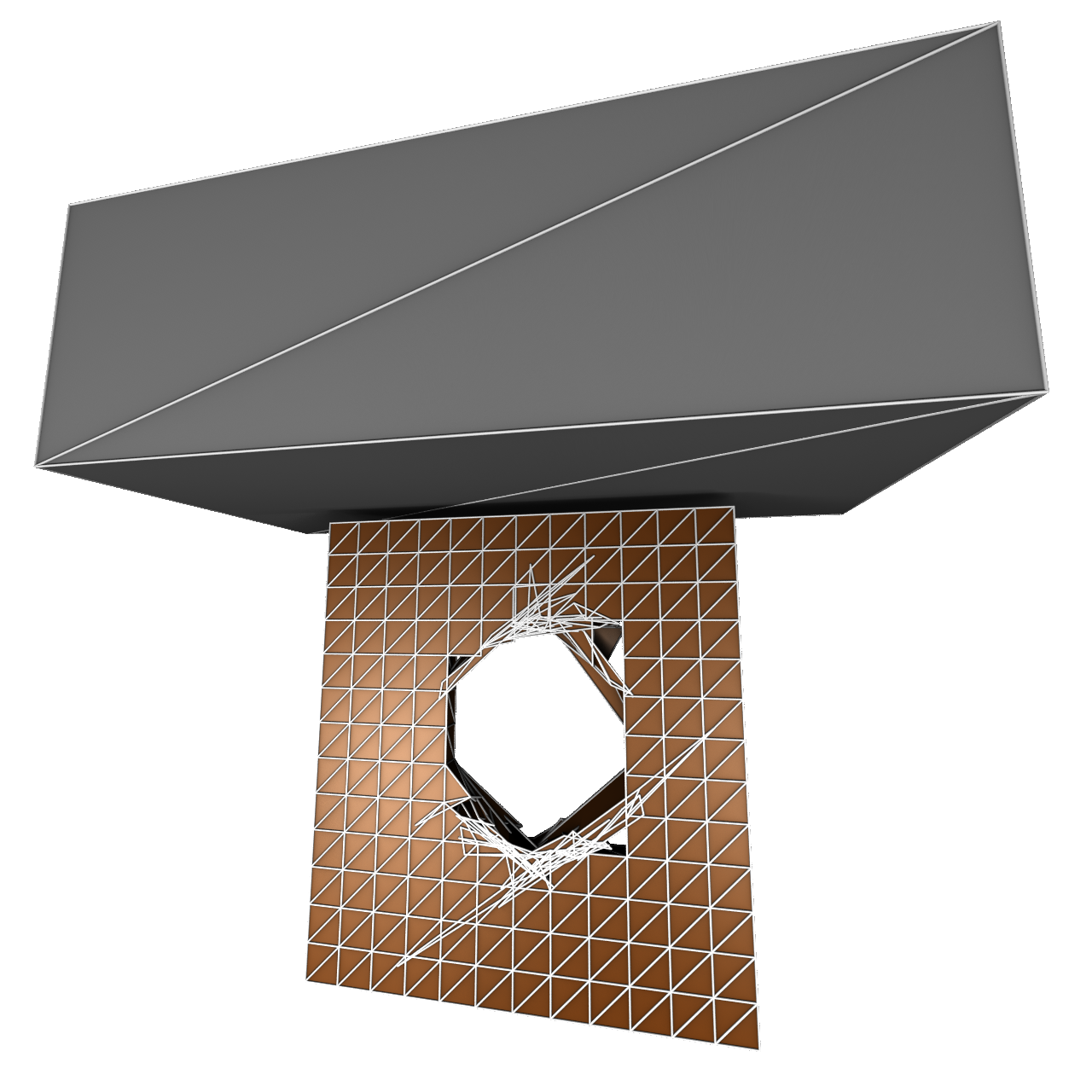}&
\includegraphics[trim=10cm 0 10cm 8cm,clip,height=.21\textwidth]{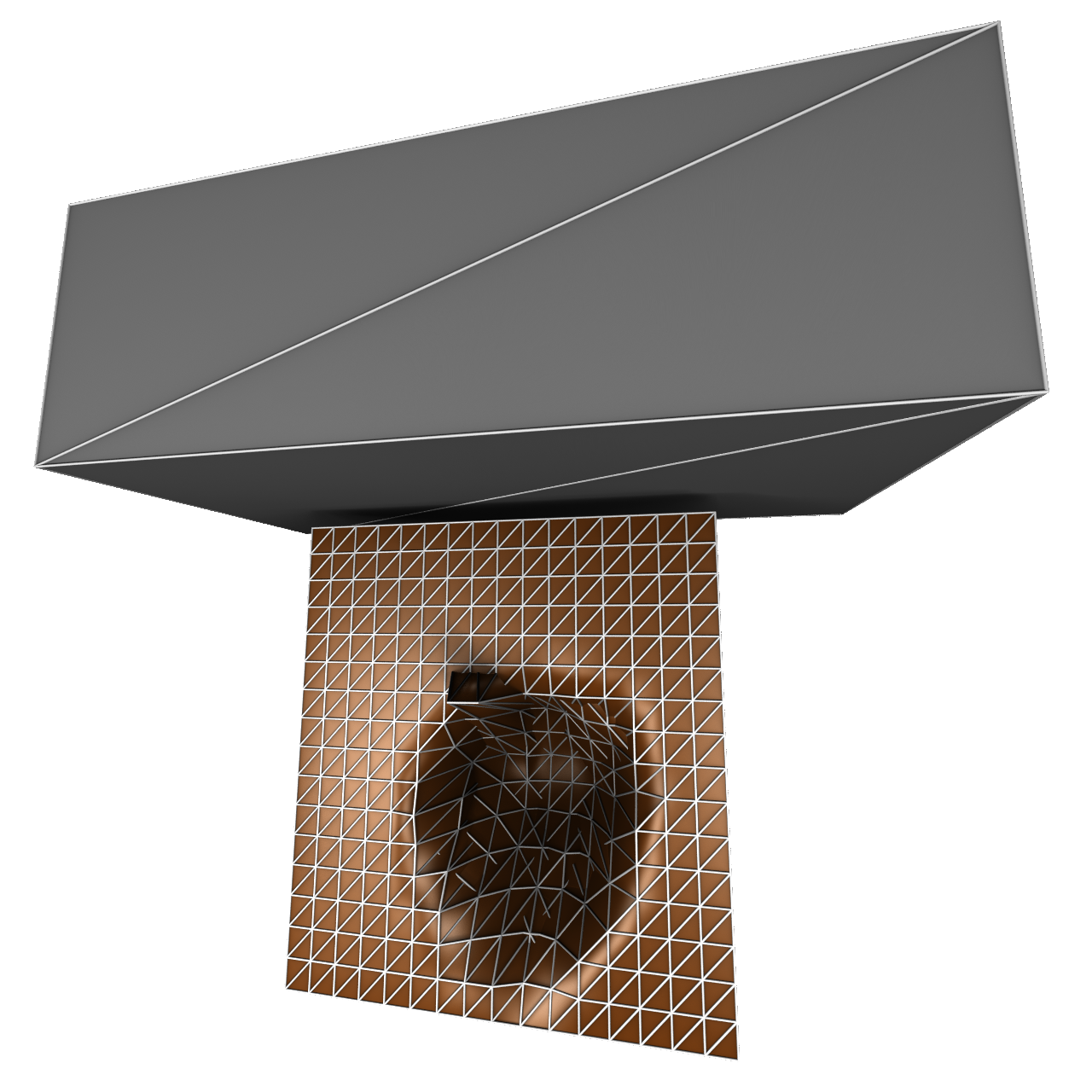}&
\includegraphics[trim=10cm 0 10cm 8cm,clip,height=.21\textwidth]{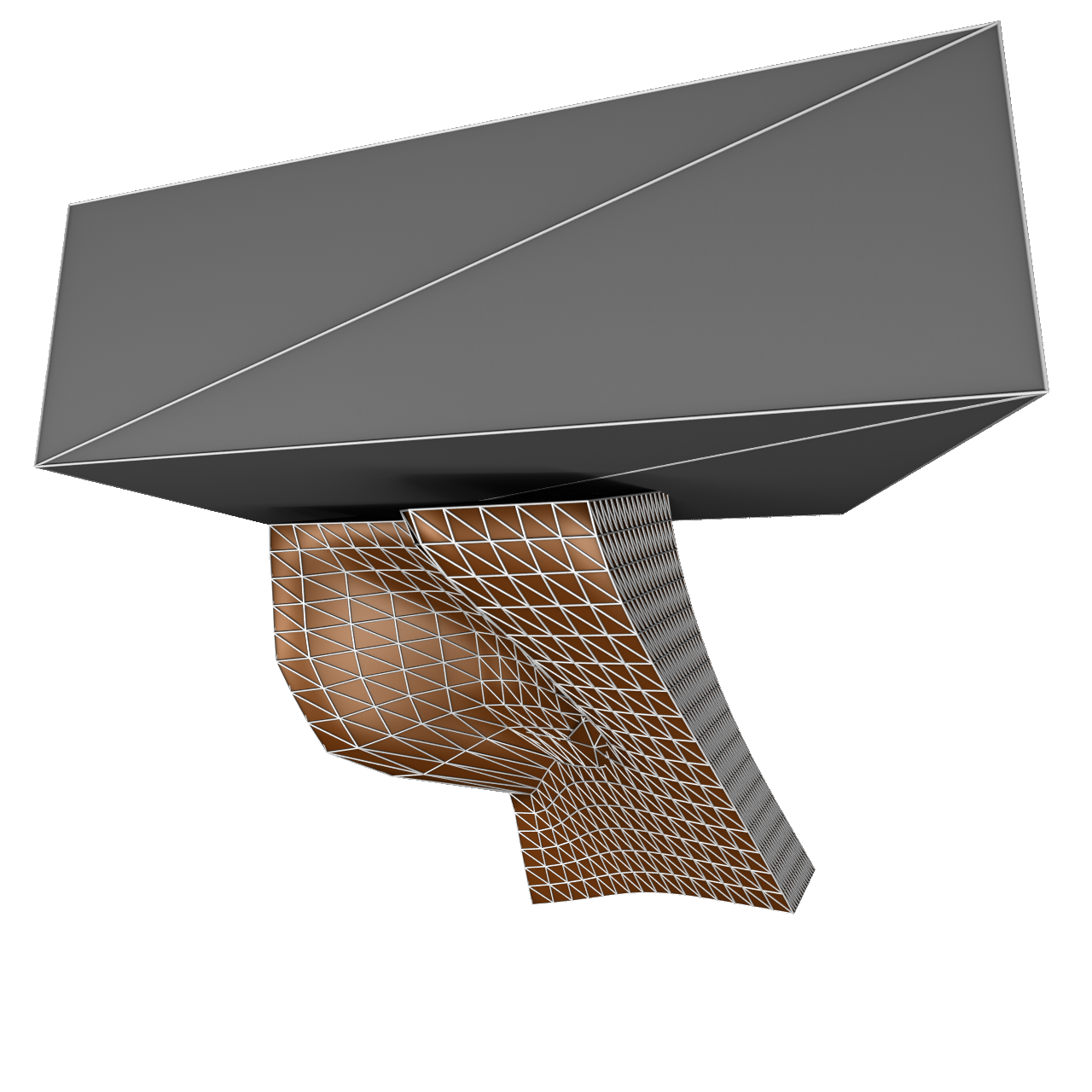}
\end{tabular}
%\vspace{-10px}
\caption{A comparison of the two techniques in optimizing the single-jaw gripper. Our co-optimization (left) change the topology of original gripper shape, creating a hole to form a firm grasp of the object. Co-optimization by mesh-based method using fine cages (middle) generate local concavity. Coarse cages (right) exhibit global deformation and fail to generate useful local deformations.}
\label{fig:passive_gripper_shape}
\end{figure}
\subsection{Benchmark: Single-Jaw Gripper}
We highlight the capability of our design space to handle topological changes. In~\prettyref{fig:passive_gripper}, we optimize the motion and shape of a single-jaw gripper to grasp an X-shaped object placed on the ground. The gripper has 1 link with 6 degrees of freedom and the object also has 6 degrees of freedom. We set $H=480, \bar{\alpha_c}=5e^{-2}$ and $\bar{\alpha_d}=2e^{-3}$. We divide the grasping process into four stages. The first stage involves lowering the gripper to the appropriate position. In the second stage, the gripper approaches and grasps the object. The third stage involves adjusting the gripper to its initial orientation. Finally, in the fourth stage, the gripper is raised to a target position. During the first two stages, we define a penalty function $L(\theta^H)=\|COM_{\text{object}}(\theta^H)\|^2$ with the object's center of mass initialized at zero, aiming to keep the object staying still when the gripper approach and from a firm grasp. During the last two stages, we penalize any relative motions between the object and the gripper: $L(\theta^H)=\|COM_{\text{object}}(\theta^H)-COM_{\text{gripper}}(\theta^H)\|^2$. Clearly, regardless of individual controller optimization efforts, the initial gripper will fail to grasp the object successfully due to the lack of a well-designed mechanism to secure the object, meaning that co-optimization is necessary. We extensively compared the performance of our approach against mesh-based method~\cite{Xu-RSS-21} using coarse and fine cages in~\prettyref{fig:passive_gripper_shape}. The initial gripper has 1538 vertices uniformly distributed. Our coarse cages uses only $98$ vertices, and as usual, we use 1538 vertices for the fine cage for fairness of comparison. We find that mesh-based method using fine cages can optimize a groove on the gripper that could be used to secure the object. However, due to low friction coefficient, the object still cannot be successfully grasped. Instead, the mesh-based method using coarse cages struggles to generate large deformations conducive to grasping the object. Finally, none of these mesh-based methods can induce topological changes to help solving this task. In contrast, our method allows for the optimization of the gripper's topology. We evenly divide the gripper into 156 convex blocks, which brings a vertex count similar to that of the mesh-based method. Our method carefully drills a hole on the gripper for grasping the object, enabling successful object retrieval even with a lower friction coefficient. The convergence history is plotted in~\prettyref{fig:passive_gripper}, where our method takes around 55 (min) to converge. %Here, we avoid updating vertices along the X-axis. This is done to achieve a final shape that is more aesthetically pleasing and easier to manufacture. Additionally, this reduces the complexity of the optimization space to speed up convergence.}

\begin{figure}[ht]
\centering
\setlength{\tabcolsep}{0px}
\begin{tabular}{cc}
\includegraphics[trim=0 15cm 0 0,clip,width=.24\textwidth,frame]{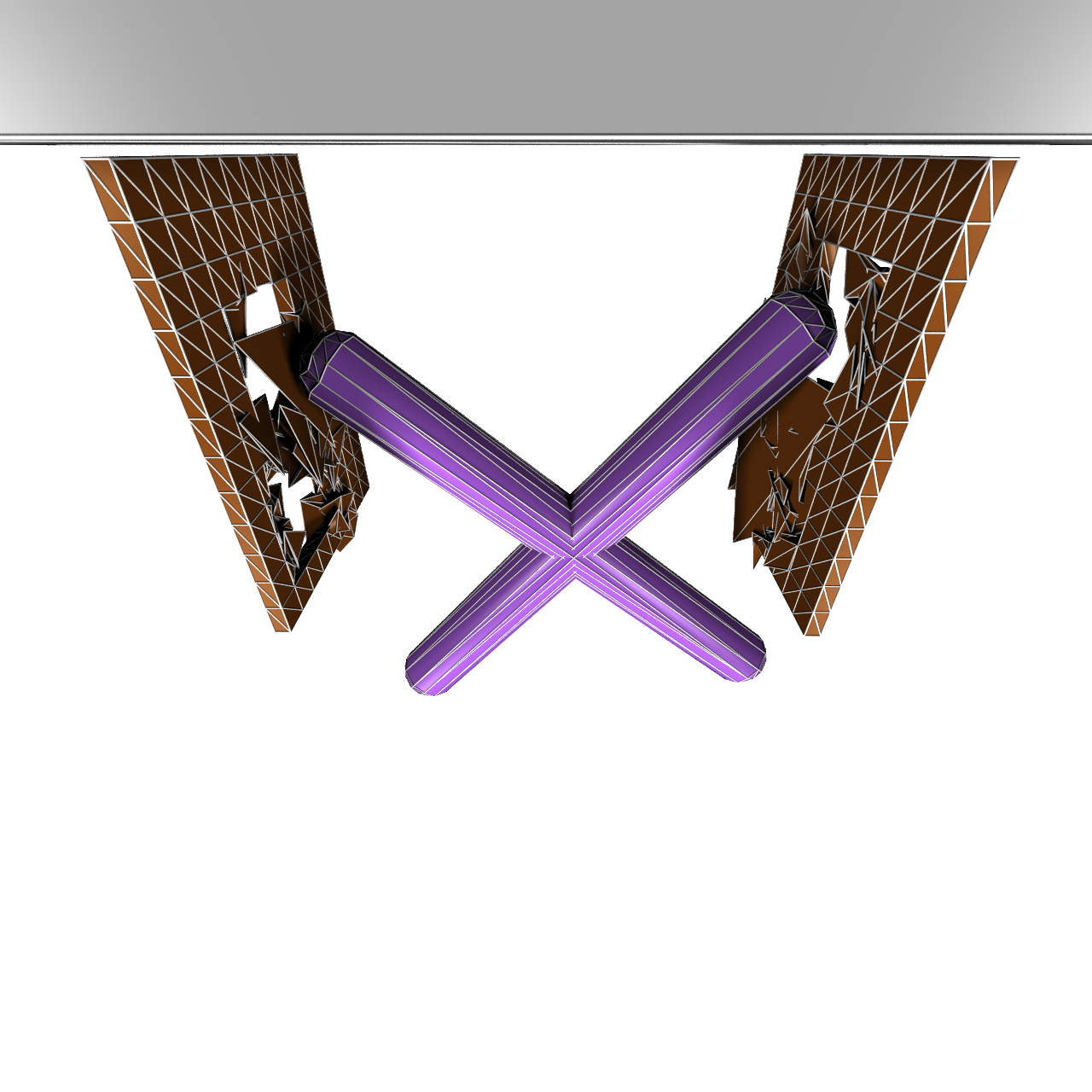}&
\includegraphics[trim=0 15cm 0 0,clip,width=.24\textwidth,frame]{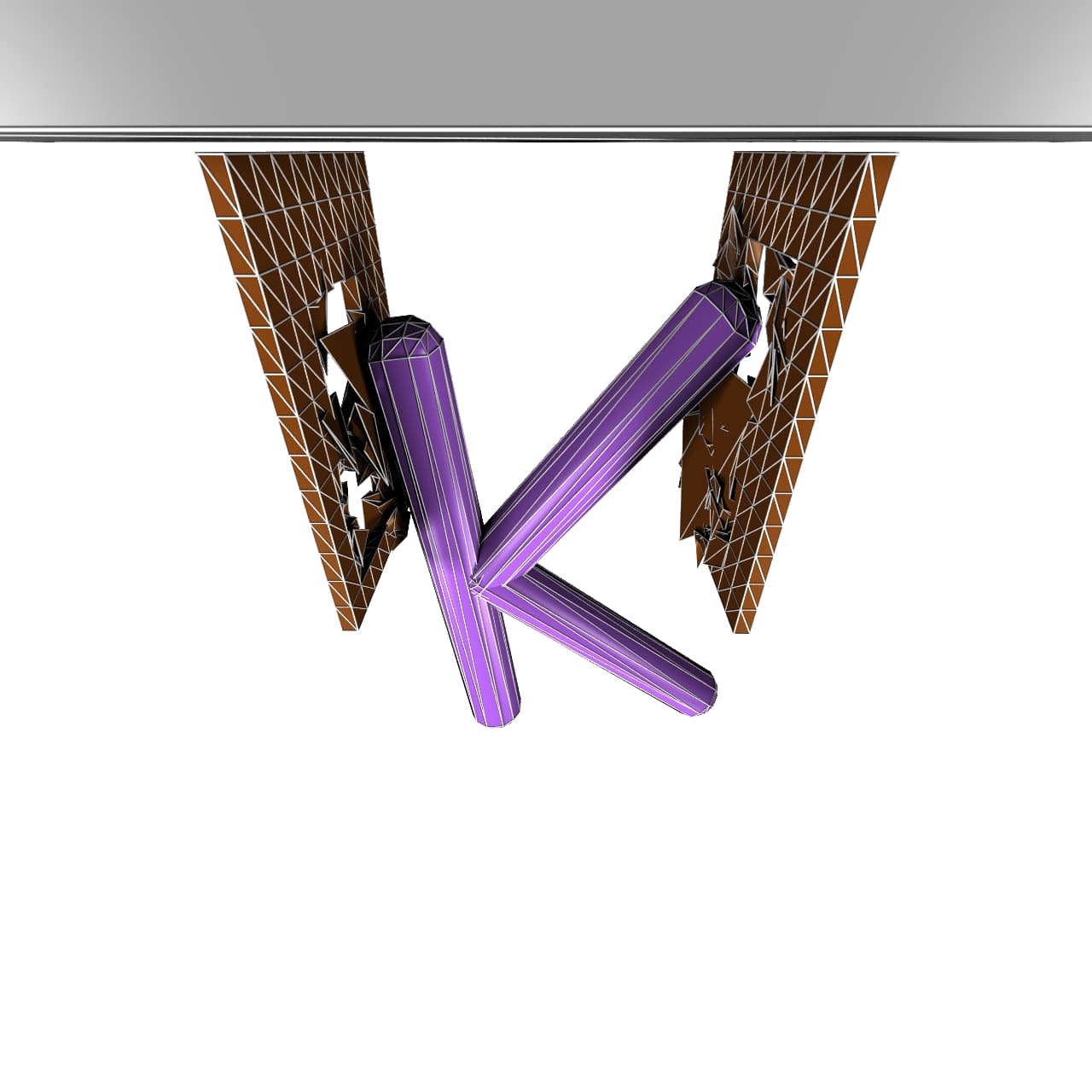}\\
\includegraphics[trim=0 15cm 0 0,clip,width=.24\textwidth,frame]{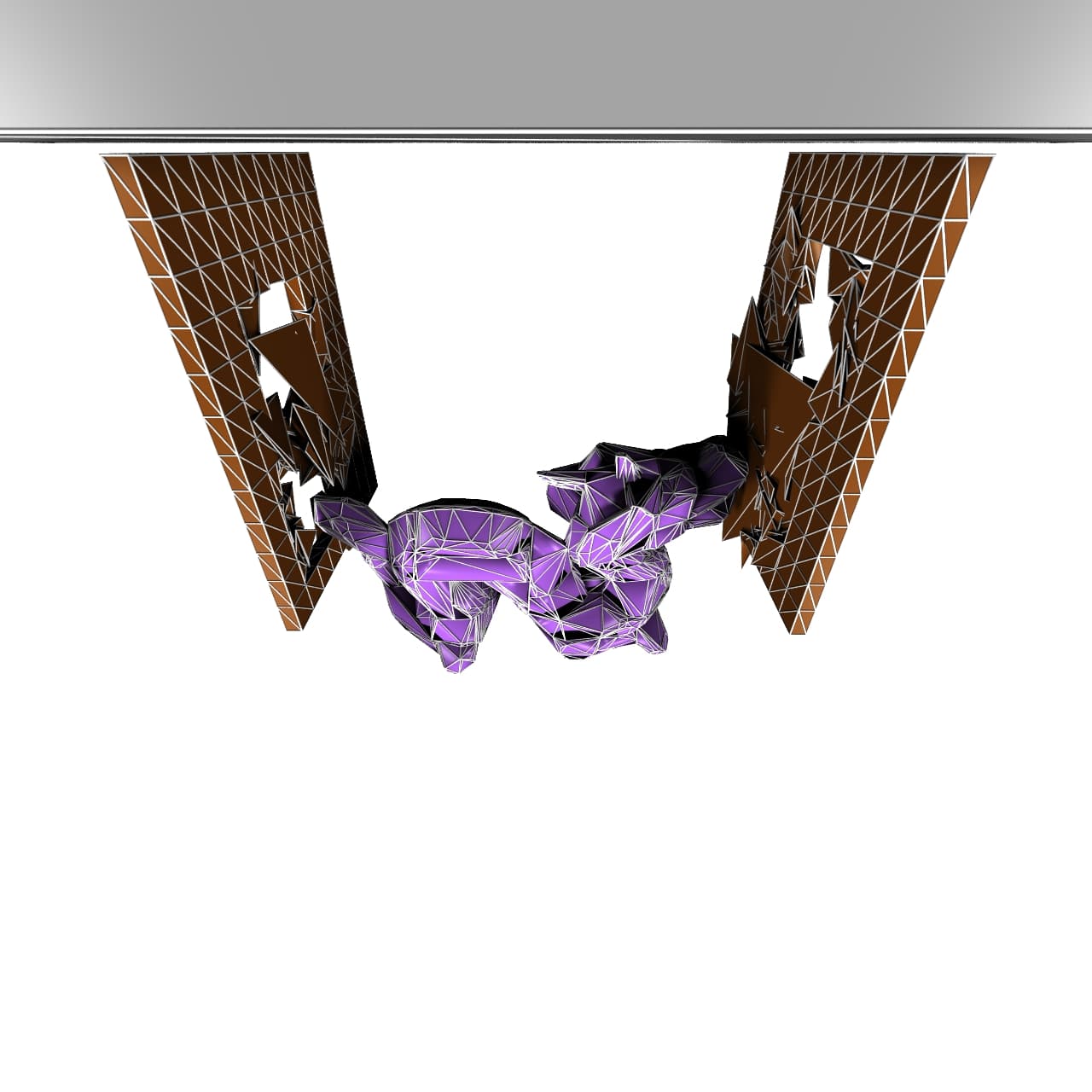}&
\includegraphics[trim=0 15cm 0 0,clip,width=.24\textwidth,frame]{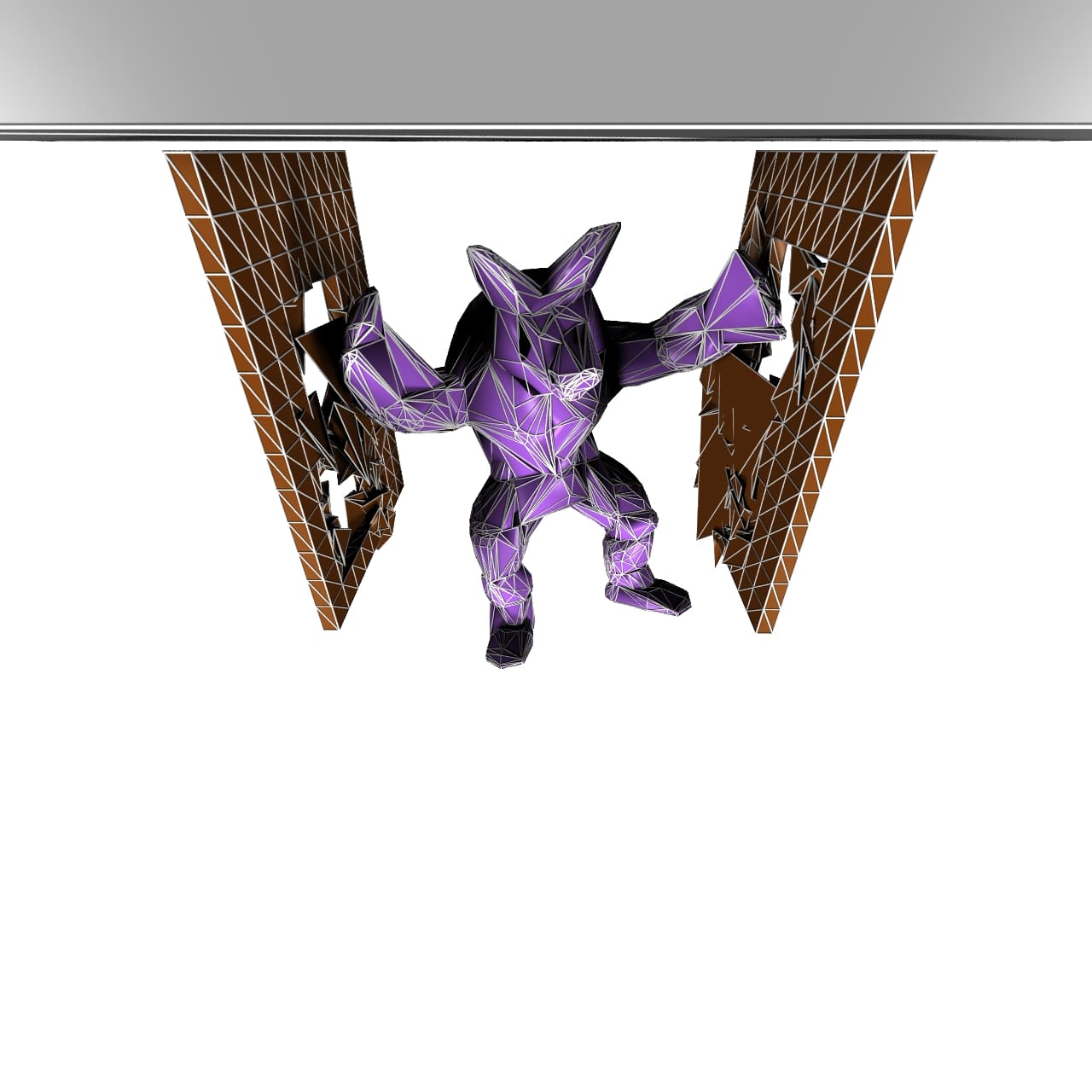}
\end{tabular}
%\vspace{-10px}
\caption{The results of successfully grasping 4 different objects with a parallel-jaw gripper optimized using our approach, inducing both topological and geometric shape change.}
\label{fig:two_gripper}
\end{figure}
\begin{figure}[ht]
\centering
\setlength{\tabcolsep}{0px}
\begin{tabular}{cc}
\includegraphics[trim=0 4cm 0 0,clip,width=.23\textwidth]{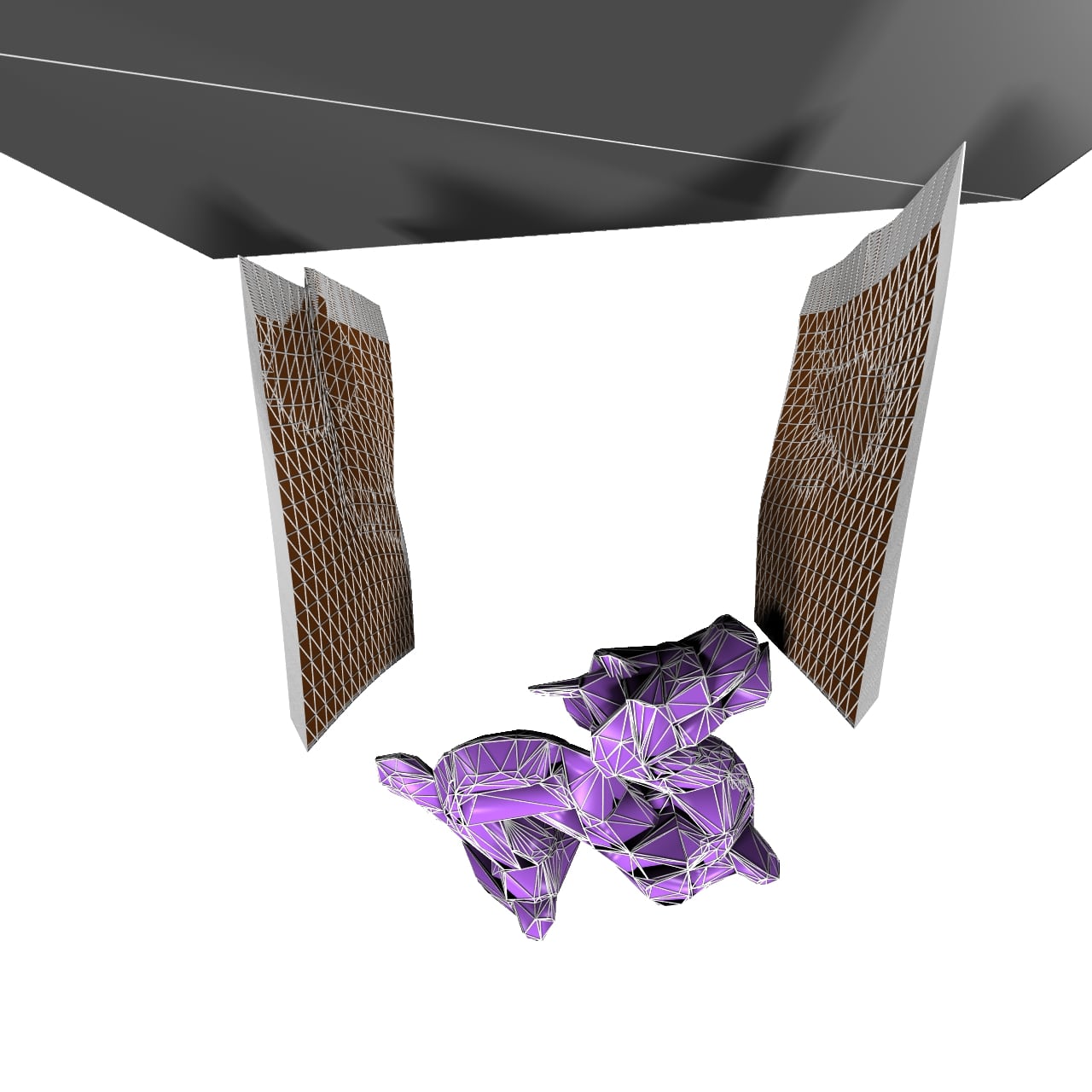}&
\includegraphics[trim=0 4cm 0 0,clip,width=.23\textwidth]{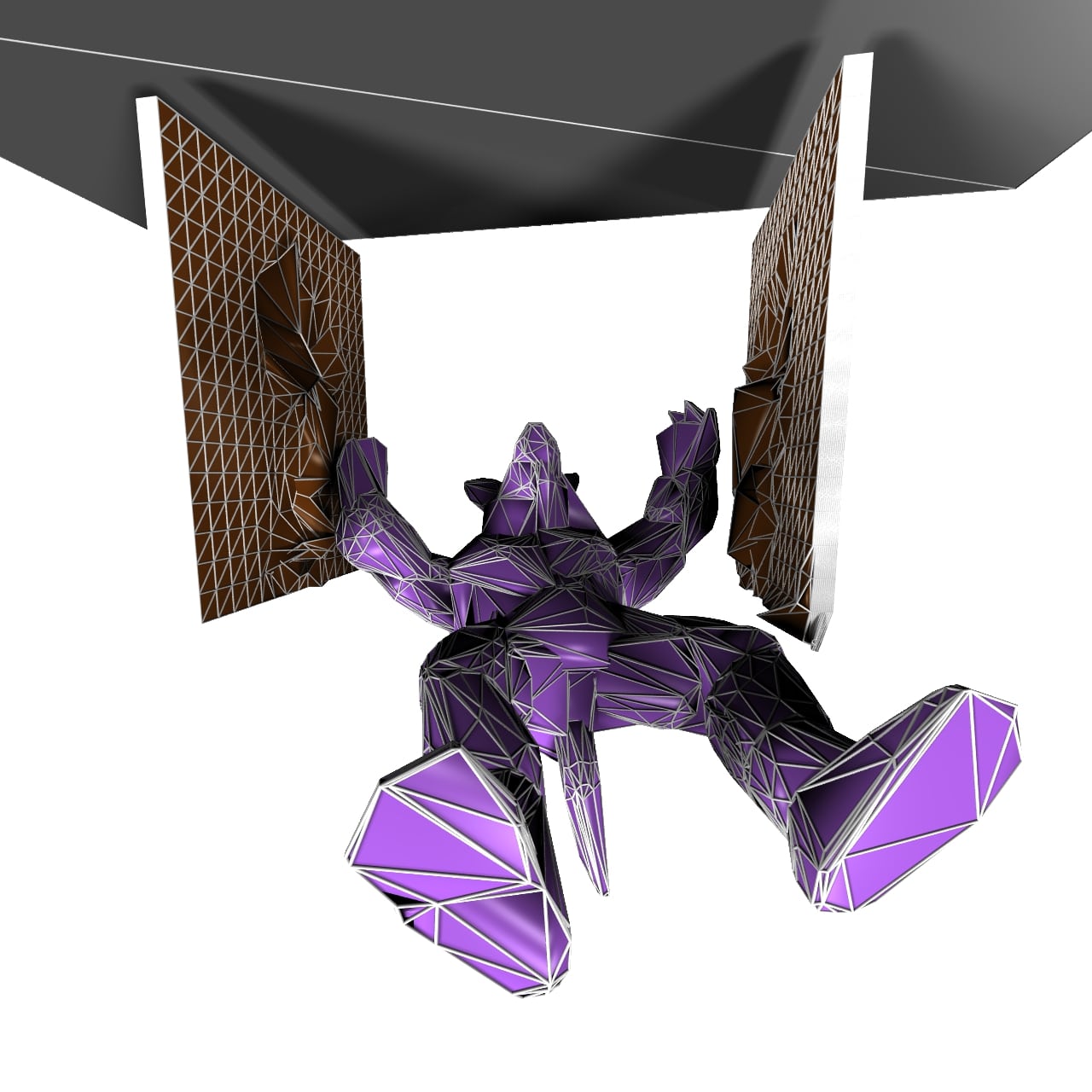}\\
\end{tabular}
%\vspace{-10px}
\caption{The grippers optimized with mesh-based methods using coarse cages (left) and fine cages (right) both fail to grasp certain objects.}
\label{fig:two_gripper_fail}
\end{figure}
\subsection{Benchmark: Universal Parallel-Jaw Gripper}
In~\prettyref{fig:two_gripper}, we use our method to optimize the shape of a parallel-jaw gripper to grasp a set of four different objects and reach the target position located in random orientation. The gripper has 3 links and 5 degrees of freedom, and for this benchmark, we use a preset parameterization of the controller. Specifically, we fix the pose of the closed gripper, denoted as $\theta_\star$, and we set all the position target $\theta_\star^t=\theta_\star$ throughout the trajectory. Our design goal is to have the gripper firmly grasp each objects by penalizing any relative motions between the ball and the gripper: $L(\theta^H)=\|COM_{\text{object}}(\theta^H)-COM_{\text{gripper}}(\theta^H)\|^2$. In each iteration, the gripper sequentially grasps 4 objects and reaches a randomly assigned target position, and we optimize these 4 trajectories collectively. Due to a low friction coefficient, none of these objects can be grasp successfully using initial gripper shape. We run iterations of Adam with a horizon of $H=180$ for each trajectory and the learning rate $\bar{\alpha_d}=5e^{-3}$. When using our approach, we decompose each jaw into 156 convex blocks, and we find our optimization leads to both geometry and topology changes of the gripper, enabling it to grasp all objects successfully. Meanwhile, mesh-based methods using neither coarse cages nor fine cages can grasp all these objects, the failure cases are shown in~\prettyref{fig:two_gripper_fail} and the convergence history is summarized in~\prettyref{fig:parallel_jaw_history}.

\subsection{Benchmark: Three-Fingered Hand}
In~\prettyref{fig:three_gripper}, We optimized the shape of a three-fingered robot gripper to grasp 3 different objects and reach the target position located in any orientation. The gripper has 6 links and 6 degrees of freedom. Using V-HACD, we represent each link using 4 convex polyhedrons. In each iteration, the gripper sequentially grasps 3 objects and reaches a randomly assigned target position, while we optimize these 3 trajectories collectively. We run iterations of Adam with the horizon $H=200$ for each trajectory and the learning rate $\bar{\alpha_d}=1e^{-3}$. 
\begin{wrapfigure}{r}{0.24\textwidth}
\centering
\includegraphics[width=.23\textwidth]{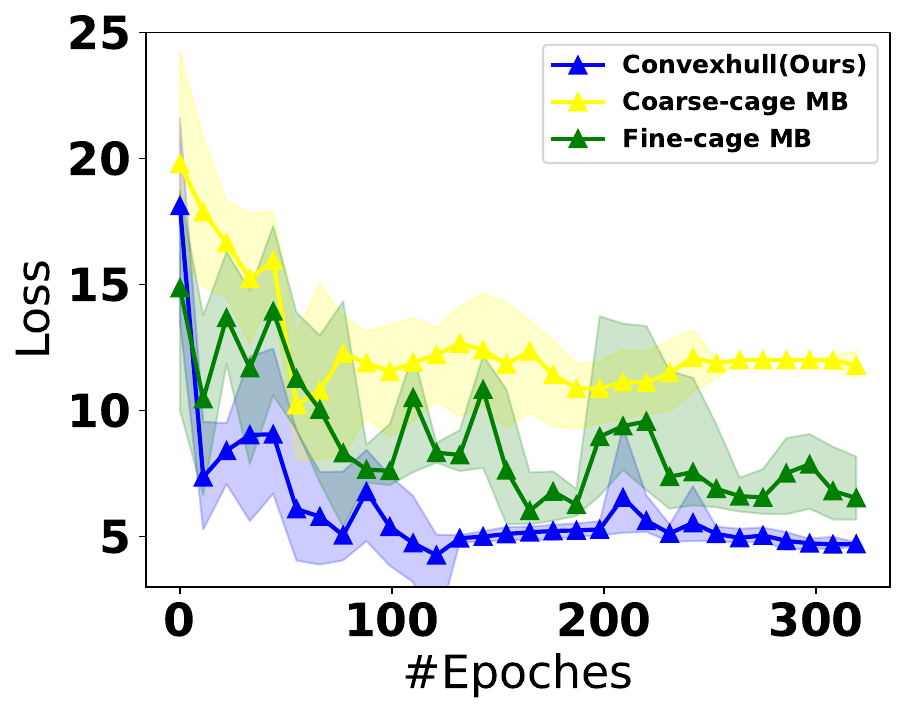}
%\vspace{-10px}
\caption{The convergence history of optimizing the parallel-jaw gripper using various methods.}
\label{fig:parallel_jaw_history}
%\vspace{-10px}
\end{wrapfigure}
Specifically, we fix the pose of the closed gripper, denoted as $\theta_\star$, and we set all the position target $\theta_\star^t=\theta_\star$ throughout the trajectory with PD gains being $k_p=3e^{2}$ and $k_d=3e^{0}$. For each target position, we then move the base of the gripper to that target position, so there are no control parameters. Our design goal is to have the gripper firmly grasp each objects by penalizing any relative motions between the ball and the gripper: $L(\theta^H)=\|COM_{\text{ball}}(\theta^H)-COM_{\text{gripper}}(\theta^H)\|^2$. We compare our approach with mesh-based method with fine cages. The convergence history of this benchmark is summarized in~\prettyref{fig:three_gripper}. We found that both methods performed well in this benchmark and our approach takes around $500$ iterations to converge, while mesh-based method takes $400$ iterations. Again, this is not surprising because the optimized gripper shape does not require topological changes. However, the optimized shape using mesh-based method has some thin structures that are less suitable for manufacturing, our method creates bulky lumps on the robot links that are more manufacture-friendly.

\begin{figure}[ht]
\centering
\setlength{\tabcolsep}{0px}
\begin{tabular}{cc}
\includegraphics[trim=0 6cm 0 0,clip,height=.19\textwidth]{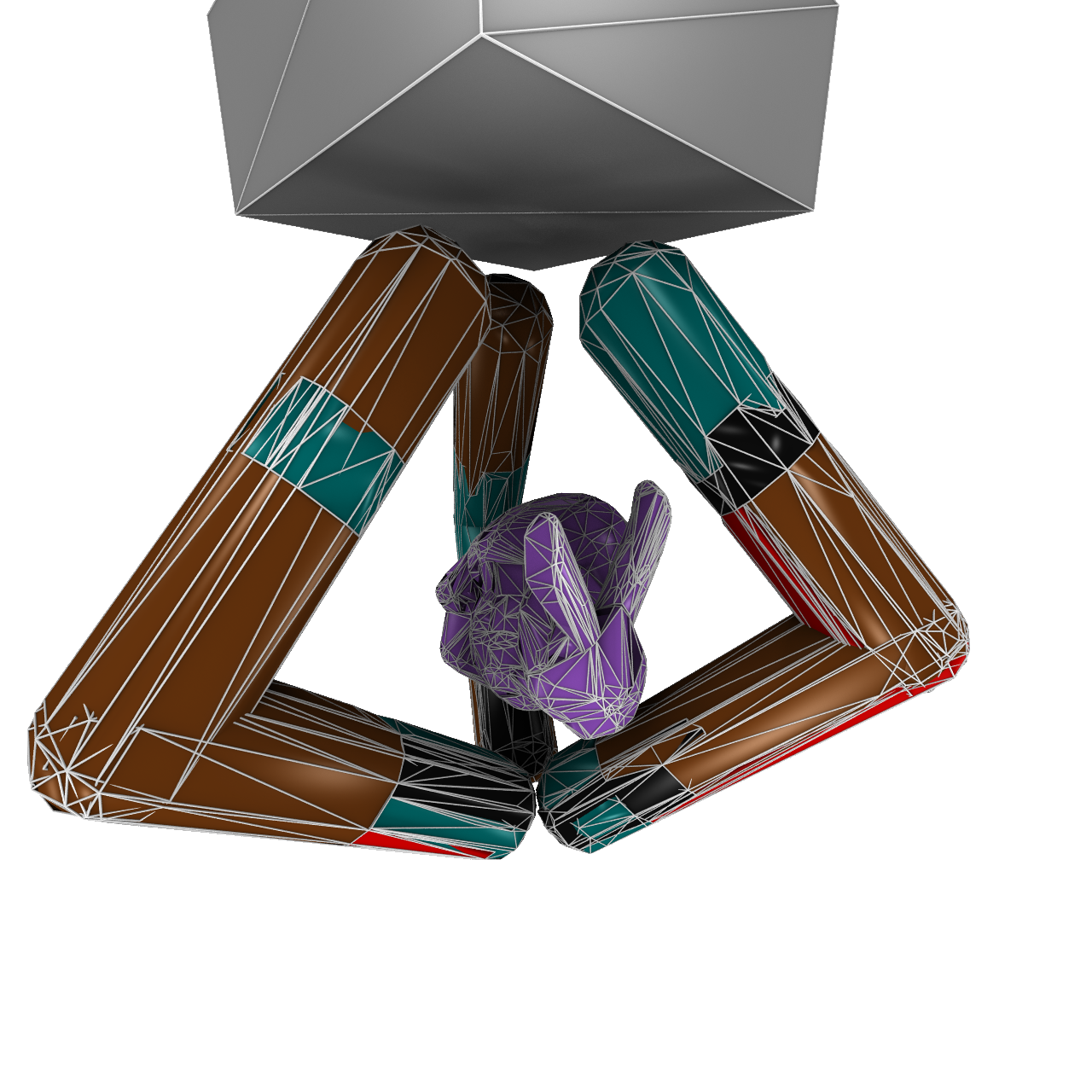}&
\includegraphics[height=.19\textwidth]{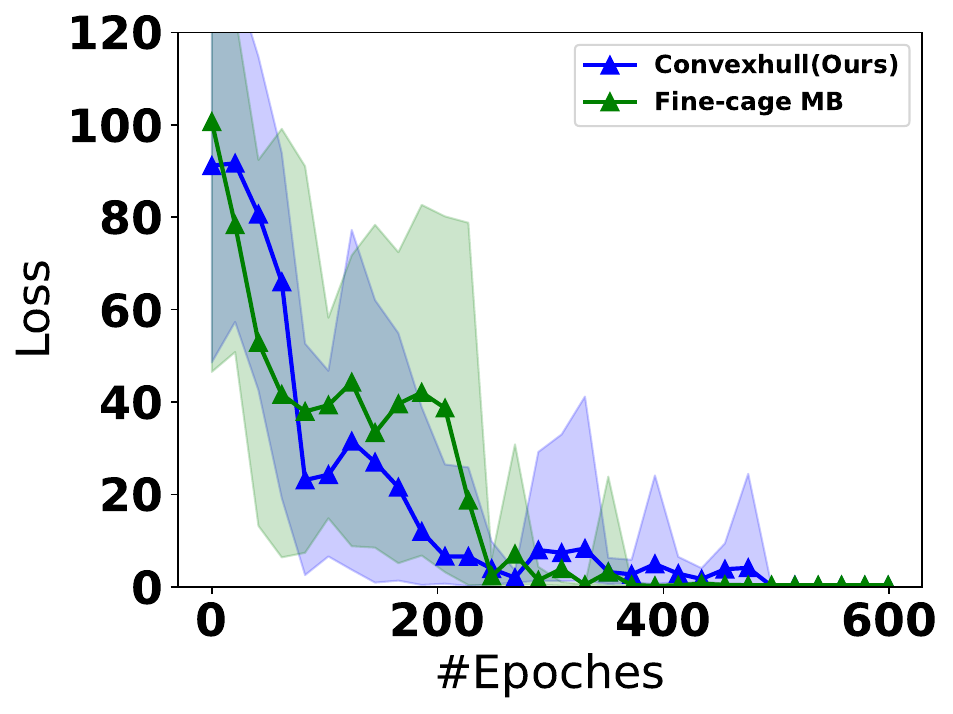}\\
\includegraphics[trim=0 6cm 0 0,clip,height=.19\textwidth]{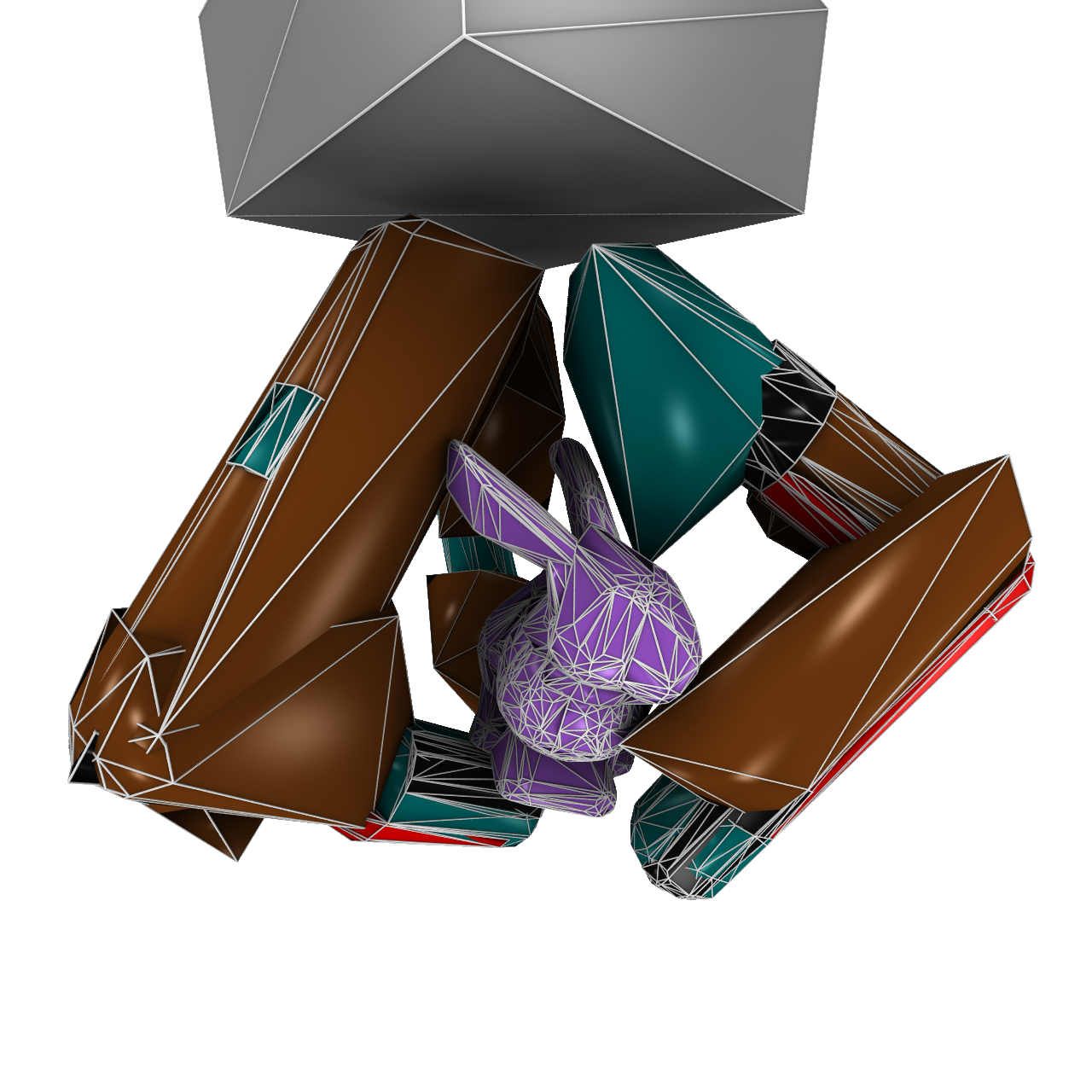}&
\includegraphics[trim=0 6cm 0 0,clip,height=.19\textwidth]{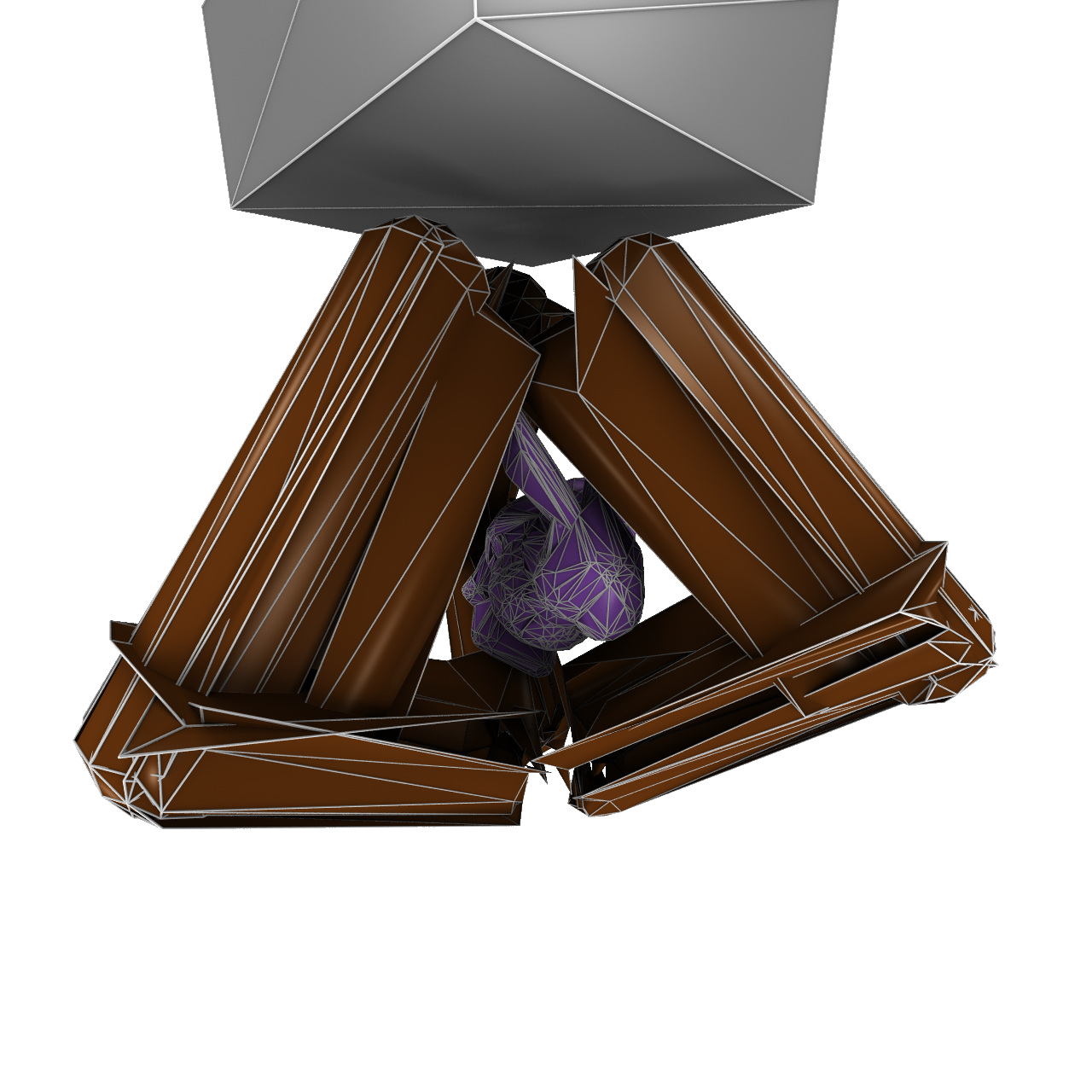}
\end{tabular}
%\vspace{-10px}
\caption{We optimize the three-fingered gripper to grasp three objects. We visualize the optimial gripper shape (top left), the optimized shape using our method (bottom left) and mesh-based method (bottom right). The convergence history of both methods are illustrated in the top right corner.}
\label{fig:three_gripper}
\end{figure}
\section{Conclusion \& Discussion}
We introduce SDRS, a novel robot simulator that features provable differentiability with respect to state, control, and robot shape parameters, accommodating large geometric and topological changes of robot link shapes. %This capability makes SDRS particularly suited for co-design applications aimed at simultaneously optimizing controller parameters and robot designs to enhance task execution. To facilitate differentiability, we employ a unique parameterization of the design space by representing each robot link shape as a collection of (potentially overlapping) convex polyhedrons. We then implement a position-level robot simulator operating under minimal coordinates. To ensure collision-free trajectories, we utilize the interior point method along with a novel barrier potential energy, derived from the separating hyperplane theorem, which prevents the polyhedrons from intersecting. This model is further enhanced to incorporate frictional contact mechanics, by treating the separating plane as an auxiliary physical object with zero mass. We have tested our method across a variety of co-design applications, including locomotion and grasping tasks.
There are several avenues for enhancing our differentiable simulator. Firstly, our current contact potential energy model, which requires solving a small optimization problem at each evaluation, leads to relatively high computational demands compared to mesh-based contact potential models, even with the use of warm-start initial guesses. In the future, we plan to leverage parallel processors, such as GPUs, to accelerate this process. Secondly, our simulator does not guarantee collision-free states throughout continuous articulated motions. While we have not encountered this issue in our experiments, future research incorporating CCD for articulated motions will be crucial for ensuring robust simulations. Further, to circumvent the need for differentiating integrals over convex hulls in $I_{ij}$, we currently assume that the robot's mass is concentrated at the vertices of the convex hulls, which may lead to large discretization error. Moving forward, we plan to develop more precise formulas for the exact derivatives of $I_{ij}$. Lastly, our method relies solely on the gradient-based optimization and cannot design discrete robot parameters such as joint structure, which requires sampling-based optimization algorithms as done in~\cite{yuan2022transformact}.
%\newpage

%\bibliographystyle{ACM-Reference-Format}
%\bibliographystyle{IEEEtran}
%\bibliography{references}
\printbibliography
\section{\label{sec:ContactProof}Well-defined Contact Potential}
In the analysis below, we always assume that $s\in(0,1)$. Under very mild assumptions, we show that our contact potential $U_c^{ij,i'j'}(d,\theta)$ is a well-defined, twice-differentiable function of its parameters $\theta$ and $d$. We further show that the potential induces opposing forces on the two convex hulls $H_{ij}$ and $H_{i'j'}$. The main complication comes from the fact that  $U_c^{ij,i'j'}$ relies on an internal optimization with respect to the separating plane, let us denote its set of locally optimal solution as:
\begin{align*}
p_\star=\TWO{n_\star}{n_{0\star}}\in
\argmin{p}\;U_{c\star}^{ij,i'j'}(d,\theta,p).
\end{align*}
We begin by showing the following basic facts:
\begin{lemma}
\label{lem:contactBasic}
i) For any $U_c^{ij,i'j'}<\infty$, $\|n\|\in(0,1)$; ii) $U_c^{ij,i'j'}$ is a well-defined function of $d,\theta$; iii) If $\dist(H_{ij},H_{i'j'})\geq2s/(1-s)$, then $U_c^{ij,i'j'}=0$; iv) $\lim_{\dist(H_{ij},H_{i'j'})\to0}U_c^{ij,i'j'}=\infty$.
\end{lemma}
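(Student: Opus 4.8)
The plan is to first isolate the elementary facts that all four parts share, and then dispatch them one at a time. I would begin by recording the properties of the penalty $P_s$ that the argument actually uses: on $(0,\infty)$ it is $C^2$ and nonnegative, it vanishes exactly on $[s,\infty)$, it is strictly decreasing on $(0,s)$, and $P_s(x)\to\infty$ as $x\to0^+$; equivalently, extended by $+\infty$ on $(-\infty,0]$, it is a barrier for the half-line $\{x>0\}$ (its convexity, and hence the convexity of $p\mapsto U_{c\star}^{ij,i'j'}(d,\theta,p)$, is not needed here but is worth recording for the later differentiability analysis). Two bookkeeping observations round out the preliminaries: every forward-kinematics matrix $T_i(d,\theta)$ has last row $(0,0,0,1)$, so any transformed point has last homogeneous coordinate $1$ and $p^T(a-b)=n^T[a-b]_3$ for two such points; and, since affine maps commute with convex combinations, the world-space hull $T_i(d,\theta)H_{ij}(d)$ equals $\mathrm{CH}\{T_i(d,\theta)x_{ij}^m\}_{m=1}^M$. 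Throughout, $\dist(H_{ij},H_{i'j'})$ means the distance between the world-space hulls, and $\mathcal P(d,\theta)$ denotes the open set of planes $p=(n,n_0)$ on which every $P_s$ argument is strictly positive, i.e.\ $\|n\|<1$, $p^TT_i(d,\theta)x_{ij}^m<0$, and $p^TT_{i'}(d,\theta)x_{i'j'}^m>0$ for all $m$; by the barrier property, $U_{c\star}^{ij,i'j'}(d,\theta,p)<\infty$ precisely when $p\in\mathcal P(d,\theta)$.

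For (ii) I would show the inner infimum is attained, so that $U_c^{ij,i'j'}(d,\theta)\in[0,\infty]$ is a genuine, single value. For fixed $(d,\theta)$ the set $\mathcal P(d,\theta)$ is bounded — $\|n\|\le1$ bounds $n$, and then each linear inequality confines $n_0$ to a bounded interval with endpoints depending on $(d,\theta)$ — so the $+\infty$-extension of $U_{c\star}^{ij,i'j'}(d,\theta,\cdot)$ is lower semicontinuous (its value tends to $+\infty$ along any sequence leaving $\mathcal P(d,\theta)$, since some $P_s$ argument then approaches $0^+$ or goes negative) with sublevel sets that are closed subsets of the compact set $\overline{\mathcal P(d,\theta)}$, hence compact; Weierstrass gives a minimizer when $\mathcal P(d,\theta)\ne\emptyset$, and otherwise the hulls intersect, $\mathcal P(d,\theta)=\emptyset$, and $U_c^{ij,i'j'}=+\infty$. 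For (i), given $U_c^{ij,i'j'}(d,\theta)<\infty$, the minimizer $p_\star=(n_\star,n_{0\star})$ lies in $\mathcal P(d,\theta)$, whence $\|n_\star\|<1$; and if $n_\star=0$ then $p_\star^TT_ix_{ij}^m=n_{0\star}=p_\star^TT_{i'}x_{i'j'}^m$ for all $m$, contradicting the strict inequalities defining $\mathcal P(d,\theta)$, so $\|n_\star\|>0$.

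For (iii), let $x^\star\in T_iH_{ij}(d)$ and $y^\star\in T_{i'}H_{i'j'}(d)$ realize $\delta:=\dist(H_{ij},H_{i'j'})\ge2s/(1-s)$, and take $p$ with normal $n=(1-s)(y^\star-x^\star)/\delta$ and offset chosen so the plane passes through $\tfrac12(x^\star+y^\star)$. First-order optimality of the closest pair gives the standard supporting-hyperplane estimate $\langle(y^\star-x^\star)/\delta,\,a-\tfrac12(x^\star+y^\star)\rangle\le-\delta/2$ for every $a\in T_iH_{ij}(d)$ and the mirror bound $\ge\delta/2$ on $T_{i'}H_{i'j'}(d)$; scaling by $1-s$ yields $p^TT_ix_{ij}^m\le-(1-s)\delta/2\le-s$ and $p^TT_{i'}x_{i'j'}^m\ge s$ for all $m$, while $P_s(1-\|n\|)=P_s(s)=0$, so $U_{c\star}^{ij,i'j'}(d,\theta,p)=0$ and, since $P_s\ge0$, $U_c^{ij,i'j'}(d,\theta)=0$. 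For (iv), conversely, take any $p\in\mathcal P(d,\theta)$, write the closest points as convex combinations $x^\star=\sum_m\alpha_mT_ix_{ij}^m$, $y^\star=\sum_m\beta_mT_{i'}x_{i'j'}^m$, and set $a_m:=-p^TT_ix_{ij}^m>0$, $b_m:=p^TT_{i'}x_{i'j'}^m>0$; then $\min_m a_m\le\langle\alpha,a\rangle\le\langle\alpha,a\rangle+\langle\beta,b\rangle=p^T(y^\star-x^\star)=n^T[y^\star-x^\star]_3\le\|n\|\,\delta\le\delta$, so for $\delta<s$, $U_{c\star}^{ij,i'j'}(d,\theta,p)\ge P_s(\min_m a_m)\ge P_s(\delta)$; taking the infimum over $p$ (intersecting hulls again giving $+\infty$) yields $U_c^{ij,i'j'}(d,\theta)\ge P_s(\delta)\to\infty$ as $\delta\to0^+$. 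The convex-geometry inputs (existence of closest pairs of polytopes, the supporting-hyperplane bounds, $\langle\alpha,a\rangle\ge\min_m a_m$ for convex weights $\alpha$) are standard; the part I expect to need the most care is making the attainment/coercivity argument in (ii) airtight and, in (iii)–(iv), tracking the $\|n\|\le1$ normalization through the signed-distance identities so that the threshold $2s/(1-s)$ emerges exactly.
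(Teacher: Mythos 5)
Your proposal is correct, and it establishes all four claims, but it handles two of them by a route that differs from the paper's. For (i) and (iii) you essentially mirror the paper: finiteness forces every $P_s$ argument to be positive (so $0<\|n\|<1$), and the scaled mid-plane between the closest points, with $\|n\|=1-s$, makes every term vanish once $\dist(H_{ij},H_{i'j'})\geq 2s/(1-s)$ — you merely spell out the supporting-hyperplane inequalities at the closest pair, which the paper leaves implicit (and your arithmetic $(1-s)\delta/2\geq s$ recovers the same threshold). For (ii) the paper argues well-definedness from convexity of $U_{c\star}^{ij,i'j'}$ in $p$ (a unique minimal value), whereas you prove attainment via coercivity: $\mathcal{P}(d,\theta)$ is bounded, the $+\infty$-extension is lower semicontinuous because some $P_s$ argument tends to $0^+$ on any sequence leaving $\mathcal{P}$, and Weierstrass gives a minimizer; this buys you an honest "min" rather than an "inf", which is what the later implicit-function-theorem arguments implicitly rely on, at the cost of not recording the convexity the paper reuses in Lemma 7.2/7.3. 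For (iv) the paper normalizes by $\|n\|$ to get a point-to-plane distance and then splits into the cases $\|n\|>1-s_0$ and $\|n\|\leq 1-s_0$; you instead write the closest points as convex combinations of the vertices and use $\langle\alpha,a\rangle+\langle\beta,b\rangle=p^T(y^\star-x^\star)\leq\|n\|\delta\leq\delta$ to conclude that some unnormalized argument is at most $\delta$, giving the uniform quantitative bound $U_c^{ij,i'j'}\geq P_s(\delta)$ for $\delta<s$ with no case analysis — a slightly cleaner and marginally stronger statement than the paper's "for every $M$ there is $s_0$" argument. One cosmetic caveat: like the paper, you work with the intended barrier semantics of $P_s$ (support exactly $(0,s)$, $+\infty$ or undefined for nonpositive arguments), not the literal formula $\max(0,(x-s)^4/x^5)$; this is consistent with how the paper itself uses $P_s$, so it is not a gap in your argument.
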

\begin{proof}
i) If $\|n\|\geq1$, then the first term $P_s(1-\|n\|)$ in $U_{c\star}^{ij,i'j'}$ is undefined. If $\|n\|=0$, then the remaining terms in $U_{c\star}^{ij,i'j'}$ reads:
\begin{align*}
\sum_{m=1}^M \left[P_s(-o)+P_s(o)\right].
\end{align*}
At least one of these terms are undefined, so $\|n\|\in(0,1)$.

ii) By construction, each term in $U_{c\star}^{ij,i'j'}$ is a convex function in $p$, so it has only one (global) minimum, proving that $U_c^{ij,i'j'}$ is single-valued, i.e., a well-defined function of $d,\theta$.

iii) In this case, we must have two points $x_{ij}\in H_{ij}$ and $x_{i'j'}\in H_{i'j'}$ realizing the closest distance, i.e., $\dist(x_{ij},x_{i'j'})\geq2s/(1-s)$. Now, let us set $p$ as the scaled, middle separating plane between $x_{ij}$ and $x_{i'j'}$ by setting:
\begin{align*}
n=\frac{x_{i'j'}-x_{ij}}{\|x_{i'j'}+x_{ij}\|}(1-s)\text{ and }
o=-n^T\frac{x_{i'j'}+x_{ij}}{2}(1-s).
\end{align*}
Plugging this solution into $U_{c\star}^{ij,i'j'}$ and we have $U_{c\star}^{ij,i'j'}=0$ by the local support of $P_s$. But by ii), $U_{c\star}^{ij,i'j'}(d,\theta,p)$ is convex in $p$ and its minimal value can only be even smaller. But $U_{c\star}^{ij,i'j'}\geq0$ by construction, so we conclude that $U_c^{ij,i'j'}=0$ as desired.

iv) Given an arbitrary large number $M>0$, there must be some $s_0>0$ such that $P_s(x)>M$ on $(0,s_0)$. We consider any two convex polyhedrons such that $\dist(H_{ij},H_{i'j'})<s_0/(1-s_0)$. In this case, for any separating plane $p$, there must be some point, assumed to be $[T_{i'}(d,\theta)x_{i'j'}^m]$ without a loss of generality, such that:
\begin{align*}
0<\frac{p^TT_{i'}(d,\theta)x_{i'j'}^m}{\|n\|}<\frac{s_0}{1-s_0},
\end{align*}
where we divide by $\|n\|$ to normalize the separating plane, leading to a standard point-to-plane distance. Now consider two cases. Case I: If $\|n\|>1-s_0$, then the first term $P_s(1-\|n\|)>M$, leading to $U_c^{ij,i'j'}>M$. Case II: If $\|n\|\leq1-s_0$, then we have:
\begin{align*}
0<p^TT_{i'}(d,\theta)x_{i'j'}^m<s_0\Rightarrow 
P_s(p^TT_{i'}(d,\theta)x_{i'j'}^m)>M,
\end{align*}
again leading to $U_c^{ij,i'j'}>M$, from which the result follows.
\end{proof}
The above result already allows us to use BVH to quickly prune non-intersecting convex polyhedrons when their distance is larger than $2s/(1-s)$.

\subsection{Local Differentiability}
To proceed, we first prove differentiable properties when $U_c^{ij,i'j'}>0$ and then extend these properties to all cases.
\begin{lemma}
\label{lem:contactPositive}
If $U_c^{ij,i'j'}>0$, then i) at least one of the terms of form $P_s(-p_\star^TT_i(d,\theta)x_{ij}^m)$ or $P_s(p_\star^TT_{i'}(d,\theta)x_{i'j'}^m)$ is positive and ii) $P_s(1-\|n_\star\|)>0$.
\end{lemma}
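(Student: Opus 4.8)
The plan is to obtain both conclusions directly from the first-order optimality condition satisfied by the internal minimizer $p_\star$. By \prettyref{lem:contactBasic}(ii) the map $p\mapsto U_{c\star}^{ij,i'j'}(d,\theta,p)$ is convex with a \emph{unique} minimizer $p_\star$, and by \prettyref{lem:contactBasic}(i), in the relevant regime $0<U_c^{ij,i'j'}<\infty$ we have $\|n_\star\|\in(0,1)$, so the barrier argument $1-\|n_\star\|$ lies in $(0,1)$ and each vertex argument $a_m\triangleq-p_\star^TT_i(d,\theta)x_{ij}^m$ and $b_m\triangleq p_\star^TT_{i'}(d,\theta)x_{i'j'}^m$ is strictly positive (a nonpositive one would make the corresponding $P_s$ term infinite). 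Hence $U_{c\star}^{ij,i'j'}$ is differentiable near $p_\star$, $\|n\|$ is smooth there, and $\nabla_pU_{c\star}^{ij,i'j'}(d,\theta,p_\star)=0$.

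The single computation I would carry out is to contract this stationarity condition with $p_\star$, equivalently to differentiate $\alpha\mapsto U_{c\star}^{ij,i'j'}(d,\theta,\alpha p_\star)$ at $\alpha=1$. Using $\nabla_p(1-\|n\|)\cdot p=-\|n\|$ and $\nabla_p(p^Tv)\cdot p=p^Tv$, this collapses to the scalar identity
\begin{align*}
0=-\|n_\star\|\,P_s'(1-\|n_\star\|)+\sum_{m=1}^{M}\big[\,a_mP_s'(a_m)+b_mP_s'(b_m)\,\big].
\end{align*}
Now I would read off the signs from the shape of $P_s$, which is non-increasing on $(0,\infty)$ with $P_s'<0$ exactly on $(0,s)$ while $P_s$ and $P_s'$ vanish on $[s,\infty)$. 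Since $a_m,b_m>0$, every summand $a_mP_s'(a_m)$, $b_mP_s'(b_m)$ is $\le 0$, and equals $0$ iff the argument is $\ge s$, i.e.\ iff the corresponding vertex term $P_s(a_m)$ (resp.\ $P_s(b_m)$) is zero; and since $\|n_\star\|>0$, the first term $-\|n_\star\|P_s'(1-\|n_\star\|)$ is $\ge 0$, and equals $0$ iff $1-\|n_\star\|\ge s$, i.e.\ iff $P_s(1-\|n_\star\|)=0$.

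The displayed identity is thus a nonnegative quantity plus a nonpositive quantity summing to zero, so either \emph{both} vanish or \emph{both} are strictly nonzero. If both vanish, then the normal-length barrier term and all vertex penalty terms are zero, whence $U_c^{ij,i'j'}=U_{c\star}^{ij,i'j'}(d,\theta,p_\star)=0$, contradicting $U_c^{ij,i'j'}>0$. Therefore both are strictly nonzero: the sum being nonzero forces some $a_m$ or $b_m$ to lie in $(0,s)$, i.e.\ some term $P_s(-p_\star^TT_i(d,\theta)x_{ij}^m)$ or $P_s(p_\star^TT_{i'}(d,\theta)x_{i'j'}^m)$ is positive, which is (i); and the first term being nonzero forces $1-\|n_\star\|\in(0,s)$, i.e.\ $P_s(1-\|n_\star\|)>0$, which is (ii).

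I expect the only delicate part to be the preliminary setup rather than the main argument: one must verify that $p_\star$ is a genuine interior stationary point of a differentiable function, which is precisely where \prettyref{lem:contactBasic}(i)--(ii) and the finiteness of $U_c^{ij,i'j'}$ are used (they keep all $P_s$-arguments positive and $\|n_\star\|$ bounded away from $0$ and $1$), together with the harmless remark that at an argument equal to $s$ both $P_s$ and $P_s'$ vanish. As a more hands-on alternative that avoids differentiating through the clamping of $P_s$, one may argue by contradiction via a radial rescaling $p_\star\mapsto\alpha p_\star$: if (i) fails, take $\alpha<1$; if (ii) fails, take $\alpha>1$; in either case the ``target'' penalty term strictly decreases to first order in $|1-\alpha|$ while any term that is switched back on grows only at higher order (because $P_s$ is flat at $s$), so $U_{c\star}^{ij,i'j'}$ strictly decreases, contradicting the minimality of $p_\star$.
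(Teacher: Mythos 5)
Your proof is correct, but it is organized differently from the paper's. The paper splits the argument in two: for (i) it assumes all vertex terms vanish, notes that then their $n_\star$-gradients vanish while $U_c^{ij,i'j'}>0$ forces $\|n_\star\|\in(1-s,1)$ and hence a nonzero $n_\star$-gradient of $P_s(1-\|n_\star\|)$, contradicting stationarity; for (ii) it then uses the radial contraction $p_\star^T\partial U_{c\star}^{ij,i'j'}/\partial p_\star=0$ together with (i). You instead extract everything from the single radial stationarity identity $0=-\|n_\star\|P_s'(1-\|n_\star\|)+\sum_m\left[a_mP_s'(a_m)+b_mP_s'(b_m)\right]$ (which is exactly the identity the paper later records as~\prettyref{eq:optimalityCond}) and run a sign dichotomy: the nonnegative normal term and the nonpositive vertex sum must either both vanish—impossible since then $U_{c\star}^{ij,i'j'}(p_\star)=0$—or both be strictly nonzero, which simultaneously yields (i) and (ii). The preliminary facts you invoke (finiteness forcing all $P_s$-arguments positive and $\|n_\star\|\in(0,1)$, so $p_\star$ is an interior stationary point of a differentiable function) are supplied by \prettyref{lem:contactBasic}, just as in the paper. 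The benefit of your route is economy and symmetry—one identity, no need for the separate $n_\star$-gradient argument, and no reliance of (ii) on (i); the paper's two-step version, on the other hand, isolates which component of the optimality condition (the normal direction versus the radial direction) is responsible for each conclusion, which mirrors how those components are reused in its later Hessian estimates. Your closing perturbation-based alternative is a reasonable sanity check but is not needed.
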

\begin{proof}
i) If all the terms of form $P_s(-p_\star^TT_i(d,\theta)x_{ij}^m)=0$ and $P_s(p_\star^TT_{i'}(d,\theta)x_{i'j'}^m)=0$, we must have: 
\begin{align*}
\FPP{P_s(-p_\star^TT_i(d,\theta)x_{ij}^m)}{n_\star}=
\FPP{P_s(p_\star^TT_{i'}(d,\theta)x_{i'j'}^m)}{n_\star}=0.
\end{align*}
Next, by~\prettyref{lem:contactBasic} i), we know that $\|n\|\in(0,1)$. But $U_c^{ij,i'j'}>0$ by our assumption, so we have $\|n\|\in(1-s,1)$, so:
\begin{align*}
\FPP{P_s(1-\|n_\star\|)}{n_\star}\neq0.
\end{align*}
Combined, we know that:
\begin{align*}
\FPP{U_{c\star}^{ij,i'j'}}{n_\star}\neq0,
\end{align*}
contradicting the fact that $p_\star$ is optimal.

ii) If $P_s(1-\|n_\star\|)=0$, we know by i) that some other terms are positive and the following inequality holds for all positive terms:
\begin{align*}
p_\star^T\FPP{P_s(-p_\star^TT_i(d,\theta)x_{ij}^m)}{p_\star}<0\\
p_\star^T\FPP{P_s(p_\star^TT_{i'}(d,\theta)x_{i'j'}^m)}{p_\star}<0.
\end{align*}
But since $P_s(1-\|n_\star\|)=0$, we have by definition:
\begin{align*}
p_\star^T\FPP{P_s(1-\|n_\star\|)}{p_\star}=0.
\end{align*}
Combined, we know that:
\begin{align*}
p_\star^T\FPP{U_{c\star}^{ij,i'j'}}{p_\star}<0,
\end{align*}
again contradicting the fact that $p_\star$ is optimal.
\end{proof}
\begin{lemma}
\label{lem:contactLocalDiff}
If $U_c^{ij,i'j'}>0$, then i) $U_{c\star}^{ij,i'j'}$ is a strictly convex function; and ii) $U_c^{ij,i'j'}$ is a locally differentiable function of $d,\theta$ with:
\begin{align}
\label{eq:derivative}
\FDD{U_c^{ij,i'j'}}{\TWO{d}{\theta}}=
\FPP{U_{c\star}^{ij,i'j'}(d,\theta,p_\star)}{\TWO{d}{\theta}}.
\end{align}
Finally, iii) $U_c^{ij,i'j'}$ is locally twice-differentiable function of $d,\theta$.
\end{lemma}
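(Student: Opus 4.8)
All three parts reduce to a single fact: at the optimal separating plane $p_\star$, the Hessian of $U_{c\star}^{ij,i'j'}$ with respect to $p$ is positive definite. Granting this, part~(i) is an immediate local corollary, and parts~(ii)--(iii) follow from the implicit function theorem applied to the stationarity condition $\nabla_pU_{c\star}^{ij,i'j'}(d,\theta,p_\star)=0$ together with the envelope (Danskin) identity. I expect this Hessian positive-definiteness to be the main obstacle, since it is exactly where the standing hypothesis $U_c^{ij,i'j'}>0$ must be exploited, through~\prettyref{lem:contactPositive}, to preclude a degenerate Hessian.

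For the crux, recall from~\prettyref{lem:contactBasic} that every summand of $U_{c\star}^{ij,i'j'}$ is convex in $p$, so $\nabla_p^2U_{c\star}^{ij,i'j'}$ is a sum of positive-semidefinite matrices and it suffices to show that its \emph{active} summands jointly span all directions of $p=\TWO{n}{o}\in\mathbb{R}^4$. First I would compute the $p$-Hessian of the normal barrier $P_s(1-\|n\|)$: with $g=1-\|n\|$ and $\hat n=n/\|n\|$, it vanishes on the $o$ row and column and equals $P_s''(g)\hat n\hat n^T-\tfrac{P_s'(g)}{\|n\|}(I_3-\hat n\hat n^T)$ on the $n$-block. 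By~\prettyref{lem:contactPositive}(ii), $P_s(1-\|n_\star\|)>0$, i.e.\ $1-\|n_\star\|\in(0,s)$, and a short one-variable check shows $P_s'<0$ and $P_s''>0$ on all of $(0,s)$, so this block is positive definite on the three normal directions. Next,~\prettyref{lem:contactPositive}(i) provides at least one active linear summand, say $P_s(-p_\star^TT_i(d,\theta)x_{ij}^m)$ with argument in $(0,s)$; since that argument is affine in $p$ with constant gradient $-y$, where $y\triangleq T_i(d,\theta)x_{ij}^m$ has homogeneous fourth coordinate $1$, its $p$-Hessian is $P_s''(\cdot)\,yy^T$, which is strictly positive along every purely-offset direction $\TWO{0}{\delta o}$ with $\delta o\neq0$. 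Summing these two contributions and discarding the remaining nonnegative ones: for any nonzero $w=\TWO{w_n}{w_o}$ the normal block is positive when $w_n\neq0$ and the linear term is positive when $w_n=0$ and $w_o\neq0$, hence $\nabla_p^2U_{c\star}^{ij,i'j'}(d,\theta,p_\star)\succ0$. Since each summand is $C^2$ in $p$ on the region $n\neq0$ (which contains $p_\star$ by~\prettyref{lem:contactBasic}(i)), the Hessian is continuous near $p_\star$ and stays positive definite there; this proves part~(i) in the form it is used afterward, i.e.\ $U_{c\star}^{ij,i'j'}$ is strictly convex on a neighborhood of $p_\star$, and the minimizer---already unique by~\prettyref{lem:contactBasic}(ii)---is nondegenerate.

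For parts~(ii)--(iii), note that $T_i(d,\theta)$ is smooth in $(d,\theta)$ and $P_s\in C^2$, so $(d,\theta,p)\mapsto U_{c\star}^{ij,i'j'}$ is $C^2$ on a neighborhood of $(d,\theta,p_\star)$; moreover $p_\star$ is interior to the finiteness domain (the barrier blows up as any argument tends to $0^+$), so $\nabla_pU_{c\star}^{ij,i'j'}(d,\theta,p_\star)=0$, and by the previous paragraph the $p$-Jacobian of this system, $\nabla_p^2U_{c\star}^{ij,i'j'}$, is invertible. The implicit function theorem then yields a locally unique $C^1$ branch $(d,\theta)\mapsto p_\star(d,\theta)$, which coincides with the global minimizer by local uniqueness and continuity of the argmin, so $U_c^{ij,i'j'}(d,\theta)=U_{c\star}^{ij,i'j'}(d,\theta,p_\star(d,\theta))$ is $C^1$; differentiating it and using $\nabla_pU_{c\star}^{ij,i'j'}(d,\theta,p_\star)=0$ cancels the chain-rule term containing the derivative of $p_\star$, which leaves exactly~\prettyref{eq:derivative}. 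For~(iii), the right-hand side of~\prettyref{eq:derivative} is the $C^1$ map $\partial_{(d,\theta)}U_{c\star}^{ij,i'j'}$ precomposed with the $C^1$ map $(d,\theta)\mapsto(d,\theta,p_\star(d,\theta))$, so $\nabla U_c^{ij,i'j'}$ is $C^1$ and hence $U_c^{ij,i'j'}\in C^2$; assuming a bit more smoothness of $P_s$, the same bootstrap via the higher-order implicit function theorem delivers matching higher-order regularity, as is used later in~\prettyref{sec:ContactProof}.
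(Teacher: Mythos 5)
Your proposal is correct and follows essentially the same route as the paper: positive definiteness of $\FPPT{U_{c\star}^{ij,i'j'}}{p_\star}$ established from the closed-form Hessian of $P_s(1-\|n_\star\|)$ (via \prettyref{lem:contactPositive} ii) together with one active affine summand whose homogeneous fourth coordinate controls the offset direction (via \prettyref{lem:contactPositive} i), followed by the implicit function theorem and the envelope cancellation to obtain \prettyref{eq:derivative}, and one further differentiation for twice-differentiability. The only cosmetic difference is that you argue positive definiteness directly by summing quadratic forms, whereas the paper argues by contradiction on a putative null vector $q$.
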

\begin{proof}
i) We show that the Hessian of $U_{c\star}^{ij,i'j'}$ at $p_\star$ is non-singular. To this end, we consider a vector $0\neq q\in\mathbb{R}^4$ and prove that:
\begin{align*}
q^T\FPPT{U_{c\star}^{ij,i'j'}}{p_\star}q\neq0.
\end{align*}
We establish contradiction otherwise. By~\prettyref{lem:contactBasic} ii), we know that each term in $U_{c\star}^{ij,i'j'}$ is convex and contributes a positive semi-definite Hessian matrix. We first derive the Hessian of the term $P_s(1-\|n_\star\|)$ in the following closed form:
\begin{equation}
\label{eq:hessian}
\ResizedEq{\FPPT{P_s(1-\|n_\star\|)}{n_\star}=
P_s''\frac{n_\star}{\|n_\star\|}\frac{n_\star}{\|n_\star\|}^T-
P_s'\frac{1}{\|n_\star\|}\left[I-\frac{n_\star}{\|n_\star\|}\frac{n_\star}{\|n_\star\|}^T\right].}
\end{equation}
By~\prettyref{lem:contactPositive} ii), we have:
\begin{align*}
P_s(1-\|n_\star\|)>0\text{ and } 
\FPPT{P_s(1-\|n_\star\|)}{n_\star}\succ0.
\end{align*}
Therefore, the first three elements of $q$ are zero. Since $0\neq q$, we must have the fourth component of $q$ is non-zero. But by~\prettyref{lem:contactPositive} i), we know that one of the other terms in $U_{c\star}^{ij,i'j'}$ is non-zero at $p_\star$. Without a loss of generality, we can assume:
\begin{align*}
&P_s(-p_\star^TT_i(d,\theta)x_{ij}^m)>0\text{ and }\\
&\FPPT{P_s(-p_\star^TT_i(d,\theta)x_{ij}^m)}{p_\star}=P_s''[T_i(d,\theta)x_{ij}^m][T_i(d,\theta)x_{ij}^m]^T\neq0.
\end{align*}
Note we assume homogeneous coordinates for $T_i(d,\theta)x_{ij}^m$. As a result, the fourth component of $q$ must be zero, leading to a contradiction.

ii) Due to i), we can invoke the inverse function theorem, in a local neighborhood, we can assume that $U_c^{ij,i'j'}>0$, $p_\star$ is a (single-valued) function $p_\star(d,\theta)$, and:
\begin{align}
\label{eq:planeDerivative}
\FPP{p_\star}{\TWO{d}{\theta}}=-\left[\FPPT{U_{c\star}^{ij,i'j'}}{p_\star}\right]^{-1}\FPPTT{U_{c\star}^{ij,i'j'}}{p_\star}{\TWO{d}{\theta}}.
\end{align}
Note the above derivative is well-defined because our piecewise function $P_s$ is twice-differentiable by construction. By the chain rule, we have:
\begin{align*}
\FDD{U_c^{ij,i'j'}}{\TWO{d}{\theta}}=
\FPP{U_{c\star}^{ij,i'j'}}{\TWO{d}{\theta}}+
\FPP{U_{c\star}^{ij,i'j'}}{p_\star}\FPP{p_\star}{\TWO{d}{\theta}}=
\FPP{U_{c\star}^{ij,i'j'}}{\TWO{d}{\theta}},
\end{align*}
due to the vanish of gradient at $p_\star$. iii) In the same local neighborhood, we can differentiable the above equation once more to yield:
\begin{equation}
\label{eq:derivative2}
\ResizedEq{\FDDT{U_c^{ij,i'j'}}{\TWO{d}{\theta}}=
\FDD{}{\TWO{d}{\theta}}\FPP{U_{c\star}^{ij,i'j'}}{\TWO{d}{\theta}}=
\FPPT{U_{c\star}^{ij,i'j'}}{\TWO{d}{\theta}}+
\FPPTT{U_{c\star}^{ij,i'j'}}{\TWO{d}{\theta}}{p_\star}\FPP{p_\star}{\TWO{d}{\theta}}.}
\end{equation}
The above step only relies on the second derivatives of $U_{c\star}^{ij,i'j'}$ and is thus well-defined by the construction of $P_s$.
\end{proof}

\subsection{Global Differentiability}
\begin{lemma}
\label{lem:contactGlobalDiff}
$U_c^{ij,i'j'}$ is a globally differentiable function of $d,\theta$ when $U_c^{ij,i'j'}<\infty$ with the derivative~\prettyref{eq:derivative} holds everywhere.
\end{lemma}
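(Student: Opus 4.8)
The plan is to split the domain --- the open set $\{(d,\theta):U_c^{ij,i'j'}<\infty\}$, equivalently the configurations on which the two convex hulls are disjoint (cf.~\prettyref{lem:contactBasic}) --- into the open region $\{U_c^{ij,i'j'}>0\}$ and its complement $\{U_c^{ij,i'j'}=0\}$. On the first region nothing new is needed: \prettyref{lem:contactLocalDiff} already supplies local twice-differentiability together with the formula~\prettyref{eq:derivative}. Thus the entire content of the lemma reduces to proving differentiability at a point $(d_0,\theta_0)$ with $U_c^{ij,i'j'}(d_0,\theta_0)=0$, and in particular at boundary points of $\{U_c^{ij,i'j'}=0\}$ where arbitrarily nearby configurations carry positive contact energy.

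At such a point I would first extract a witness plane. Since $U_{c\star}^{ij,i'j'}(d_0,\theta_0,\cdot)$ is convex with minimum value $0$ by~\prettyref{lem:contactBasic} ii), pick any minimizer $p_0=\TWO{n_0}{o_0}$; then every quantity fed to $P_s$ lies in the clamped-to-zero region, i.e. $1-\|n_0\|\ge s$ and $-p_0^TT_i(d_0,\theta_0)x_{ij}^m\ge s$, $p_0^TT_{i'}(d_0,\theta_0)x_{i'j'}^m\ge s$ for every $m$. Freezing $p_0$ and varying $(d,\theta)$ gives the one-sided bound $0\le U_c^{ij,i'j'}(d,\theta)\le U_{c\star}^{ij,i'j'}(d,\theta,p_0)$. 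In this upper bound the $\|n\|$-term is identically zero (it does not depend on $(d,\theta)$), and each remaining term has the form $P_s(a_m(d,\theta))$ with $a_m$ a smooth --- hence locally Lipschitz --- function of $(d,\theta)$ built from the forward kinematics $T_i(d,\theta)$ and satisfying $a_m(d_0,\theta_0)\ge s$. If $a_m(d_0,\theta_0)>s$ the term is identically zero near $(d_0,\theta_0)$; if $a_m(d_0,\theta_0)=s$ then $|a_m(d,\theta)-s|\le L\|(d,\theta)-(d_0,\theta_0)\|$ locally, and since $P_s$ vanishes to fourth order at $s$ (its numerator carries the factor $\max(0,s-\bullet)^4$ while the denominator stays bounded away from $0$ near $a_m=s$), that term is $O(\|(d,\theta)-(d_0,\theta_0)\|^4)$. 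Summing over $m$ and both hulls, $0\le U_c^{ij,i'j'}(d,\theta)\le C\|(d,\theta)-(d_0,\theta_0)\|^4$ in a neighborhood, which forces $U_c^{ij,i'j'}$ to be differentiable at $(d_0,\theta_0)$ with vanishing gradient.

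It then remains to confirm that~\prettyref{eq:derivative} reproduces this zero gradient, so the formula holds everywhere. For any minimizer $p_\star$ of $U_{c\star}^{ij,i'j'}(d_0,\theta_0,\cdot)$ --- possibly non-unique on the flat region, but all attaining value $0$ --- every $P_s$-argument is $\ge s$, hence $P_s'$ is $0$ there, so $\FPP{U_{c\star}^{ij,i'j'}(d_0,\theta_0,p_\star)}{\TWO{d}{\theta}}=0$ independently of the choice of $p_\star$; the right-hand side of~\prettyref{eq:derivative} is therefore well-defined and equals $0=\nabla U_c^{ij,i'j'}(d_0,\theta_0)$. Combined with~\prettyref{lem:contactLocalDiff} on $\{U_c^{ij,i'j'}>0\}$, this yields global differentiability with~\prettyref{eq:derivative} valid throughout the domain.

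The step I expect to be the crux is the boundary estimate in the second paragraph. The temptation is to differentiate the implicit map $p_\star(d,\theta)$ as in~\prettyref{lem:contactLocalDiff}, but that map need be neither single-valued nor smooth once $U_c^{ij,i'j'}$ reaches $0$; the way around it is to give up two-sided control and instead bound $U_c^{ij,i'j'}$ from above by $U_{c\star}^{ij,i'j'}(\cdot,\cdot,p_0)$ at a frozen, generally sub-optimal plane, and then cash in the high-order flatness that $P_s$ was deliberately designed to have at its clamping threshold $s$ (any rate faster than linear would already give differentiability; the quartic clamp gives the clean $O(\|\cdot\|^4)$).
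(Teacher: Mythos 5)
Your proof is correct, but it takes a genuinely different route from the paper's. The paper handles the delicate case---a zero-energy configuration that is a limit of positive-energy ones, where the implicit function theorem breaks down---by restricting $U_c^{ij,i'j'}$ to the segment joining $(d,\theta)$ to $(d_n,\theta_n)$, applying the mean value theorem on the portion where the energy is positive, and showing the difference quotient vanishes because the gradient given by \prettyref{eq:derivative} tends to zero as $U_c^{ij,i'j'}\to0$ (every $P_s'$ factor vanishes while the remaining factors stay bounded). You instead bound the function value itself: freezing a plane $p_0$ attaining zero energy at $(d_0,\theta_0)$ gives $0\le U_c^{ij,i'j'}(d,\theta)\le U_{c\star}^{ij,i'j'}(d,\theta,p_0)$, and the quartic flatness of $P_s$ at the clamping threshold $s$, combined with the locally Lipschitz dependence of the plane-to-vertex arguments on $(d,\theta)$, yields an $O(\|(d,\theta)-(d_0,\theta_0)\|^4)$ bound, hence differentiability with zero gradient; checking that \prettyref{eq:derivative} also evaluates to zero there (all arguments are at least $s$, so every $P_s'$ term vanishes for any minimizer) closes the claim. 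Your argument is more elementary---no mean value theorem and no limiting analysis of the gradient formula near the zero set, which is where the paper is at its most informal (``all the terms approach zero consistently'')---and it treats the interior and the boundary of the zero set uniformly. What the paper's route buys is reusability: the same segment/mean-value scheme, combined with the Hessian-vanishing estimate of \prettyref{lem:contactHessianVanish}, is exactly what is recycled to prove global twice-differentiability in \prettyref{lem:contactGlobalDiff2}, whereas your frozen-plane bound controls only values, not gradients, at nearby positive-energy points (where $p_\star$ moves), so the second-order statement would still require the paper's machinery. A final minor remark: both you and the paper implicitly assume the inner minimum over $p$ is attained at zero-energy configurations; this is harmless since the sublevel sets of $U_{c\star}^{ij,i'j'}$ in $p$ are compact.
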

\begin{proof}
To extend the property from the local neighborhood, we consider three cases. Case I: If $U_c^{ij,i'j'}>0$ then differentiability follows from~\prettyref{lem:contactLocalDiff}. Case II: If $U_c^{ij,i'j'}=0$ in a local neighborhood, then differentiability follows trivially and agrees with~\prettyref{eq:derivative} by direct verification. Case III: In the last case, we have $U_c^{ij,i'j'}(d,\theta)=0$ but there is a sequence of $\TWO{d_n}{\theta_n}\to\TWO{d}{\theta}$ and $U_c^{ij,i'j'}(d_n,\theta_n)>0$. In such case, the implicit function theorem fails. To overcome this difficulty, we now define the following auxiliary function:
\begin{align*}
g_n(\beta)=U_c^{ij,i'j'}(\beta(d_n-d)+d,\beta(\theta_n-\theta)+\theta).
\end{align*}
Clearly, we have $g_n(0)=0$ and $g_n(1)>0$. We can then define $\bar{\beta}=\sup\{\beta|g_n(\beta)=0\}$. Since $U_c^{iji'j'}$ is a continuous function, we must have $\bar{\beta}\in(0,1)$. Now since $g_n>0$ on $(\bar{\beta},1)$,  $g_n'$ can be computed using~\prettyref{eq:derivative} on $(\bar{\beta},1)$. Applying intermediary value theorem on $(\bar{\beta},1)$, we can find $\beta_n\in(\bar{\beta},1)$ satisfying:
\begin{equation*}
\ResizedEq{&U_c^{ij,i'j'}(d_n,\theta_n)=g_n(1)=g_n'(\beta_n)(1-\bar{\beta})\\
=&\TWO{(d_n-d)^T}{(\theta_n-\theta)^T}\FDD{U_c^{ij,i'j'}}{\TWO{d}{\theta}}
\Big|_{\TWO{\beta_n(d_n-d)+d}{\beta_n(\theta_n-\theta)+\theta}}(1-\bar{\beta}).}
\end{equation*}
Using this result and we derive the following limit:
\begin{equation*}
\ResizedEq{0\leq&\liminf_{n\to\infty}\frac{U_c^{ij,i'j'}(d_n,\theta_n)-U_c^{ij,i'j'}(d,\theta)}{\|\TWO{d_n-d}{\theta_n-\theta}\|}\\
\leq&\limsup_{n\to\infty}\frac{U_c^{ij,i'j'}(d_n,\theta_n)-U_c^{ij,i'j'}(d,\theta)}{\|\TWO{d_n-d}{\theta_n-\theta}\|}\\
\leq&\limsup_{n\to\infty}\left\|\FDD{U_c^{ij,i'j'}}{\TWO{d}{\theta}}
\Big|_{\TWO{\beta_n(d_n-d)+d}{\beta_n(\theta_n-\theta)+\theta}}\right\|=0,}
\end{equation*}
where the last equality is derived by the fact that the derivative at $\TWO{\beta_n(d_n-d)+d}{\beta_n(\theta_n-\theta)+\theta}$ follows from~\prettyref{eq:derivative}, which tends to zero when $U_c^{ij,i'j'}$ approaches zero. Indeed, when $U_c^{ij,i'j'}$ approaches zero, all the terms approach zero consistently, so all the first and second derivatives, i.e. $P_s'$ and $P_s''$, also tend to zero. With other terms being bounded, we conclude that \prettyref{eq:derivative} tends to zero. This result shows that any directional derivative tends to zero in the last case and agrees with~\prettyref{eq:derivative}, thus all is proved.
\end{proof}

\subsection{Global Twice-Differentiability}
Next, we prove twice-differentiability using a similar argument as in~\prettyref{lem:contactGlobalDiff}, but we need the following auxiliary result:
\begin{lemma}
\label{lem:contactHessianVanish}
Suppose there is a sequence $\TWO{d_n}{\theta_n}\to\TWO{d}{\theta}$ such that $U_c^{ij,i'j'}(d_n,\theta_n)>0$ and $U_c^{ij,i'j'}(d,\theta)=0$, then:
\begin{align*}
\lim_{n\to\infty}\FDDT{U_c^{ij,i'j'}}{\TWO{d}{\theta}}\Big|_{\TWO{d_n}{\theta_n}}=0.
\end{align*}
\end{lemma}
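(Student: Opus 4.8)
The plan is to follow the template of the proof of \prettyref{lem:contactGlobalDiff}, but one differentiation order higher, starting from the closed form of the Hessian that \prettyref{lem:contactLocalDiff} supplies throughout the open set $\{U_c^{ij,i'j'}>0\}$. Write $r_n\triangleq U_c^{ij,i'j'}(d_n,\theta_n)$; by continuity $r_n\to U_c^{ij,i'j'}(d,\theta)=0$, and for $n$ large $(d_n,\theta_n)$ lies in $\{U_c^{ij,i'j'}>0\}$, so $p_\star^n\triangleq p_\star(d_n,\theta_n)$ is a well-defined $C^1$ map and, combining \prettyref{eq:derivative2} with \prettyref{eq:planeDerivative}, the Hessian at $(d_n,\theta_n)$ is the Schur complement
\begin{equation*}
\ResizedEq{\FDDT{U_c^{ij,i'j'}}{\TWO{d}{\theta}}\Big|_{(d_n,\theta_n)}=H_{xx}^n-H_{xp}^n\left[H_{pp}^n\right]^{-1}H_{px}^n,}
\end{equation*}
where $H_{xx}^n=\FPPT{U_{c\star}^{ij,i'j'}}{\TWO{d}{\theta}}$, $H_{xp}^n=(H_{px}^n)^T=\FPPTT{U_{c\star}^{ij,i'j'}}{\TWO{d}{\theta}}{p_\star}$, and $H_{pp}^n=\FPPT{U_{c\star}^{ij,i'j'}}{p_\star}$ are evaluated at $(d_n,\theta_n,p_\star^n)$, with $H_{pp}^n\succ0$ by \prettyref{lem:contactLocalDiff}. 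I want to show the right-hand side tends to $0$.

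First I would quantify the decay of the numerator. Each entry of $H_{xx}^n$, $H_{xp}^n$, $H_{pp}^n$ is a bounded geometric coefficient (a smooth function of $(d_n,\theta_n)$, polynomial in the homogeneous coordinates $[T_i(d_n,\theta_n)x_{ij}^m]$, $[T_{i'}(d_n,\theta_n)x_{i'j'}^m]$ and in $p_\star^n$) times $P_s'$ or $P_s''$ evaluated at the argument $a_k^n$ of one of the terms of $U_{c\star}^{ij,i'j'}$; these coefficients stay bounded because $(d_n,\theta_n)$ converges, $\|n_\star^n\|\in(0,1)$ by \prettyref{lem:contactBasic}, and $p_\star^n$ stays in a bounded region, exactly as argued for \prettyref{lem:contactGlobalDiff}. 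Since every term of $U_{c\star}^{ij,i'j'}$ is nonnegative and their sum at $p_\star^n$ equals $r_n\to0$, each $P_s(a_k^n)\to0$; because the quartic factor $(x-s)^4$ makes $P_s$, $P_s'$, $P_s''$ vanish to orders $4$, $3$, $2$ wherever $P_s=0$, for $n$ large all the $a_k^n$ lie in a fixed compact subinterval of $(0,\infty)$ bounded away from $0$, on which $P_s''(x)\le C\sqrt{P_s(x)}$ and $|P_s'(x)|\le CP_s(x)^{3/4}$ with $C$ depending only on $s$. Hence $\|H_{xx}^n\|,\|H_{xp}^n\|,\|H_{pp}^n\|=O(\sqrt{r_n})$, and using $\|H_{xp}^n\big[H_{pp}^n\big]^{-1}H_{px}^n\|\le\|H_{xp}^n\|^2/\lambda_{\min}(H_{pp}^n)$, everything reduces to a uniform-in-$n$ spectral lower bound on $H_{pp}^n$.

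The main obstacle is exactly that $\big[H_{pp}^n\big]^{-1}$ blows up since $H_{pp}^n\to0$, so the product $H_{xp}^n\big[H_{pp}^n\big]^{-1}H_{px}^n$ is an indeterminate form; I would finish by proving $\lambda_{\min}(H_{pp}^n)\ge c\,r_n^{\gamma}$ for some $c>0$ and exponent $\gamma<1$ (concretely $\gamma=3/4$), which yields $\|H_{xp}^n\|^2/\lambda_{\min}(H_{pp}^n)=O(r_n^{1-\gamma})\to0$ and hence $\FDDT{U_c^{ij,i'j'}}{\TWO{d}{\theta}}\big|_{(d_n,\theta_n)}\to0$. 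To obtain this bound I would work in the split $p=\TWO{n}{o}$: by \prettyref{lem:contactPositive}, whenever $U_c^{ij,i'j'}>0$ the term $P_s(1-\|n\|)$ is active, and by \prettyref{eq:hessian} it contributes to $H_{pp}^n$ a Hessian supported on the normal block that is bounded below by a positive multiple of $\min\{P_s''(1-\|n_\star^n\|),\,|P_s'(1-\|n_\star^n\|)|/\|n_\star^n\|\}I_3$, while each active vertex term contributes a rank-one Hessian $P_s''(a_k^n)v_kv_k^T$ whose vector $v_k$ is a homogeneous coordinate (last component $\pm1$), charging the offset direction with strength $\sum_kP_s''(a_k^n)$. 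The stationarity $\nabla_pU_{c\star}^{ij,i'j'}=0$ at $p_\star^n$ couples these magnitudes: projecting its $n$-component shows $|P_s'(1-\|n_\star^n\|)|$ is comparable to a combination of the $|P_s'(a_k^n)|$ weighted by pairwise vertex distances, which stay bounded away from $0$ near the critical separation $2s/(1-s)$, so that $H_{pp}^n$ cannot degenerate along any direction faster than the slowest-decaying numerator block does. A case split on a unit test vector $q$ according to whether its offset component is small (then $P_s(1-\|n\|)$'s Hessian controls $q^TH_{pp}^nq$) or not (then the vertex terms do) then gives the claimed lower bound. I expect the careful execution of this spectral estimate—excluding adversarial cancellations among several simultaneously active vertex terms and degenerate contact geometries—to be the delicate part; once it is in place, the conclusion follows from the bounds above, mirroring the structure of the proof of \prettyref{lem:contactGlobalDiff}.
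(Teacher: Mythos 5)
Your proposal follows the same skeleton as the paper's proof: express the Hessian at $(d_n,\theta_n)$ as the Schur complement obtained from \prettyref{eq:derivative2} and \prettyref{eq:planeDerivative}, observe that every block carries a factor $P_s'$ or $P_s''$ evaluated at arguments whose $P_s$-values tend to zero, and defeat the blow-up of $\left[\FPPT{U_{c\star}^{ij,i'j'}}{p_\star}\right]^{-1}$ by a spectral lower bound that is coupled, through the stationarity of $p_\star$, to the very quantities that make the numerator small. The difference is only the bookkeeping currency: the paper measures everything against $-P_s'(1-\|n_\star\|)$, proving $\FPPT{U_{c\star}^{ij,i'j'}}{p_\star}\succeq-P_s'(1-\|n_\star\|)\min\left[1,\frac{1-s}{2Ms}\right]I$ via \prettyref{eq:hessianPartA}--\prettyref{eq:hessianPartB} and then disposing of the three product types I/II/III using \prettyref{eq:limitDerivativePInequality} and $P_s''=O(|P_s'|^{2/3})$; you measure against the total energy $r_n$, getting $\|H_{xp}^n\|=O(r_n^{1/2})$, $\lambda_{\min}(H_{pp}^n)\gtrsim r_n^{3/4}$, and an explicit $O(r_n^{1/4})$ rate. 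Your exponents are consistent with $|P_s'|\asymp P_s^{3/4}$ and $P_s''\asymp P_s^{1/2}$ near the cutoff, so the two arguments are the same estimates in different clothes; what yours buys is a quantitative decay rate, what the paper's buys is that it never needs a two-sided comparison of the normal and vertex energies.

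The one genuinely fragile point in your sketch is the coupling step you yourself flag. Projecting the stationarity condition onto its $n$-component gives $-P_s'(1-\|n_\star\|)=\sum_k\pm P_s'(a_k^n)\,n_\star^T[T x]_3/\|n_\star\|$, whose terms have no fixed sign (and whose weights are not pairwise vertex distances), so cancellations can destroy the comparison your $r_n^{3/4}$ bound requires. The paper's device repairs exactly this: contract the gradient radially, $p_\star^T\FPP{U_{c\star}^{ij,i'j'}}{p_\star}=0$ as in \prettyref{eq:optimalityCond}, which yields $-P_s'(1-\|n_\star\|)\|n_\star\|=\sum_k a_k^n|P_s'(a_k^n)|$ with all terms nonnegative and weights $a_k^n\in(s/2,s)$ for large $n$. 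That identity gives the two-sided comparability $P_s(1-\|n_\star\|)\asymp\sum_kP_s(a_k^n)$, which you need in both branches of your case split (the normal-block branch needs the normal energy to be a definite fraction of $r_n$, the offset branch needs the vertex energies to be, together with \prettyref{lem:contactPositive} i) to guarantee an active vertex term and a split threshold chosen relative to the bounded vertex coordinates). With that substitution your spectral estimate goes through and the rest of your argument, mirroring \prettyref{lem:contactGlobalDiff}, is sound.
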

\begin{proof}
There are two terms in~\prettyref{eq:derivative2} and for the first term:
\begin{align}
\lim_{n\to\infty}\FPPT{U_{c\star}^{ij,i'j'}}{\TWO{d_n}{\theta_n}}=0,
\end{align}
due to the property of $P_s$. Therefore, our main goal is to show the following second term in~\prettyref{eq:derivative2} also tend to zero, which is largely non-trivial:
\begin{align}
\label{eq:hessianTerm2}
-\FPPTT{U_{c\star}^{ij,i'j'}}{\TWO{d_n}{\theta_n}}{p_\star}
\left[\FPPT{U_{c\star}^{ij,i'j'}}{p_\star}\right]^{-1}
\FPPTT{U_{c\star}^{ij,i'j'}}{p_\star}{\TWO{d_n}{\theta_n}}.
\end{align}
To begin with, we claim the following property, which can be directly verified for our specific $P_s(x)=\max(0,(x-s)^4/x^5)$:
\begin{align}
\label{eq:penaltyLimit}
\lim_{x\to s}P_s'/P_s''=0\text{ and }
\lim_{x\to s}[P_s'']^{1.5}/P_s'=6\sqrt{3}/s^{2.5}.
\end{align}
Next, we derive the following identity by the optimality of $p_\star$:
\begin{equation}
\label{eq:optimalityCond}
\ResizedEq{0=&p_\star^T\FPP{U_{c\star}^{ij,i'j'}}{p_\star}=-P_s'(1-\|n_\star\|)\|n_\star\|+\\
&\sum_{m=1}^M\left[\bar{P}_s'(-p_\star^TT_i(d_n,\theta_n)x_{ij}^m)+\bar{P}_s'(p_\star^TT_{i'}(d_n,\theta_n)x_{i'j'}^m)\right].}
\end{equation}
where we define $\bar{P}_s'(x)=xP_s'(x)$. Due to~\prettyref{eq:penaltyLimit}, as $\lim_{n\to\infty}U_c^{ij,i'j'}(d_n,\theta_n)=0$, we can choose sufficiently large $n$ such that: $-p_\star^TT_i(d_n,\theta_n)x_{ij}^m>s/2$ for all $x_{ij}^m$ and $p_\star^TT_{i'}(d_n,\theta_n)x_{i'j'}^m>s/2$ for all $x_{i'j'}^m$, $1>\|n_\star\|>1-s$. Finally, because all the terms in~\prettyref{eq:optimalityCond} are positive, we have the following inequality for $x_{ij}^m$ and $x_{i'j'}^m$ chosen as above:
\begin{equation}
\label{eq:limitDerivativePInequality}
\ResizedEq{
&-P_s'(1-\|n_\star\|)\geq-P_s'(1-\|n_\star\|)\|n_\star\|\\
\geq&-[-p_\star^TT_i(d_n,\theta_n)x_{ij}^m]P_s'(-p_\star^TT_i(d_n,\theta_n)x_{ij}^m)\\
\geq&-\frac{s}{2}P_s'(-p_\star^TT_i(d_n,\theta_n)x_{ij}^m)\\
&-P_s'(1-\|n_\star\|)\geq-P_s'(1-\|n_\star\|)\|n_\star\|\\
\geq&-[p_\star^TT_{i'}(d_n,\theta_n)x_{i'j'}^m]P_s'(p_\star^TT_{i'}(d_n,\theta_n)x_{i'j'}^m)\\
\geq&-\frac{s}{2}P_s'(p_\star^TT_{i'}(d_n,\theta_n)x_{i'j'}^m).}
\end{equation}
Again due to~\prettyref{eq:penaltyLimit}, we can choose sufficiently large $n$ such that $P_s''(1-\|n_\star\|)\geq-P_s'(1-\|n_\star\|)$ so that we have the following estimate due to~\prettyref{eq:hessian}:
\begin{equation}
\label{eq:hessianPartA}
\ResizedEq{\FPPT{P_s(1-\|n_\star\|)}{n_\star}=&
P_s''\frac{n_\star}{\|n_\star\|}\frac{n_\star}{\|n_\star\|}^T-
P_s'\frac{1}{\|n_\star\|}\left[I-\frac{n_\star}{\|n_\star\|}\frac{n_\star}{\|n_\star\|}^T\right]\\
\succeq&-P_s'\frac{n_\star}{\|n_\star\|}\frac{n_\star}{\|n_\star\|}^T-
P_s'\left[I-\frac{n_\star}{\|n_\star\|}\frac{n_\star}{\|n_\star\|}^T\right]\\
=&P_s'(1-\|n_\star\|)I.}
\end{equation}
Finally, among all the terms of form $-\bar{P}_s'(-p_\star^TT_i(d_n,\theta_n)x_{ij}^m)$ and $-\bar{P}_s'(p_\star^TT_{i'}(d_n,\theta_n)x_{i'j'}^m)$ in~\prettyref{eq:optimalityCond}, we can assume some term $-\bar{P}_s'(-p_\star^TT_i(d_n,\theta_n)x_{ij}^{m\star})$ achieves the maximal value. (It is also possible that some term $-\bar{P}_s'(p_\star^TT_{i'}(d_n,\theta_n)x_{i'j'}^{m\star})$ on the other convex hull achieves the maximal value, and the case is symmetric.) Once again due to~\prettyref{eq:penaltyLimit} we can choose sufficiently large $n$ such that $P_s''(-p_\star^TT_i(d_n,\theta_n)x_{ij}^{m\star})\geq-P_s'(-p_\star^TT_i(d_n,\theta_n)x_{ij}^{m\star})$. For such maximal term, we have the following estimate:
\begin{equation}
\label{eq:hessianPartB}
\ResizedEq{&2MsP_s''(-p_\star^TT_i(d_n,\theta_n)x_{ij}^{m\star})\\
\geq&-2MsP_s'(-p_\star^TT_i(d_n,\theta_n)x_{ij}^{m\star})\\
\geq&-2M\bar{P}_s'(-p_\star^TT_i(d_n,\theta_n)x_{ij}^{m\star})\\
\geq&-P_s'(1-\|n_\star\|)\|n_\star\|\geq-P_s'(1-\|n_\star\|)(1-s).}
\end{equation}
Combining~\prettyref{eq:hessianPartA} and~\prettyref{eq:hessianPartB}, we have the following estimate of the norm of Hessian:
\begin{align*}
\FPPT{U_{c\star}^{ij,i'j'}}{p_\star}\succeq&
\MTT{\FPPT{P_s(1-\|n_\star\|)}{p_\star}}{}{}{P_s''(-p_\star^TT_i(d_n,\theta_n)x_{ij}^{m\star})}\\
\succeq&-P_s'(1-\|n_\star\|)\min\left[1,\frac{1-s}{2Ms}\right]I,
\end{align*}
 which implies that:
\begin{align}
\label{eq:hessianNorm}
\left\|\left[\FPPT{U_{c\star}^{ij,i'j'}}{p_\star}\right]^{-1}\right\|\leq\frac{1}{-P_s'(1-\|n_\star\|)\min\left[1,\frac{2Ms}{1-s}\right]}.
\end{align}
We are now ready to bound~\prettyref{eq:hessianTerm2} by noting that the mixed derivative can be expanded as:
\begin{align*}
&\FPPTT{U_{c\star}^{ij,i'j'}}{p_\star}{\TWO{d_n}{\theta_n}}=\\
&\sum_{m=1}^MP_s'(-p_\star^TT_i(d_n,\theta_n)x_{ij}^m)\left[-\FPP{T_ix_{ij}^m}{\TWO{d_n}{\theta_n}}\right]+\\
&\sum_{m=1}^MP_s''(-p_\star^TT_i(d_n,\theta_n)x_{ij}^m)\left[-T_ix_{ij}^m\right]\FPP{\left[-p_\star^TT_i(d_n,\theta_n)x_{ij}^m\right]}{\TWO{d_n}{\theta_n}}^T+\\
&\sum_{m=1}^MP_s'(p_\star^TT_{i'}(d_n,\theta_n)x_{i'j'}^m)\left[\FPP{T_{i'}x_{i'j'}^m}{\TWO{d_n}{\theta_n}}\right]+\\
&\sum_{m=1}^MP_s''(p_\star^TT_{i'}(d_n,\theta_n)x_{i'j'}^m)\left[T_{i'}x_{i'j'}^m\right]\FPP{\left[p_\star^TT_{i'}(d_n,\theta_n)x_{i'j'}^m\right]}{\TWO{d_n}{\theta_n}}^T.
\end{align*}
Combining~\prettyref{eq:hessianNorm} and with other terms being bounded, we see that~\prettyref{eq:hessianTerm2} is a summation of following terms:
\begin{equation*}
\ResizedEq{I\triangleq&O(P_s'(-p_\star^TT_i(d_n,\theta_n)x_{ij}^m)
P_s'(-p_\star^TT_{i'}(d_n,\theta_n)x_{i'j'}^{m'})P_s'(1-\|n_\star\|)^{-1})\\
II\triangleq&O(P_s''(-p_\star^TT_i(d_n,\theta_n)x_{ij}^m)
P_s'(-p_\star^TT_{i'}(d_n,\theta_n)x_{i'j'}^{m'})P_s'(1-\|n_\star\|)^{-1})\\
III\triangleq&O(P_s''(-p_\star^TT_i(d_n,\theta_n)x_{ij}^m)
P_s''(-p_\star^TT_{i'}(d_n,\theta_n)x_{i'j'}^{m'})P_s'(1-\|n_\star\|)^{-1}).}
\end{equation*}
And we show that each term tends to zero, for term of type I, we use~\prettyref{eq:limitDerivativePInequality} to derive:
\begin{equation*}
\ResizedEq{I=&O(P_s'(-p_\star^TT_i(d_n,\theta_n)x_{ij}^m)
P_s'(-p_\star^TT_{i'}(d_n,\theta_n)x_{i'j'}^{m'})P_s'(1-\|n_\star\|)^{-1})\\
\leq&O(P_s'(-p_\star^TT_i(d_n,\theta_n)x_{ij}^m)
P_s'(1-\|n_\star\|)P_s'(1-\|n_\star\|)^{-1})\\
=&O(P_s'(-p_\star^TT_i(d_n,\theta_n)x_{ij}^m))\to0}
\end{equation*}
For term of type II, we use~\prettyref{eq:limitDerivativePInequality} and choose sufficiently large $n$ to have $P_s''(-p_\star^TT_i(d_n,\theta_n)x_{ij}^m)\geq-P_s'(-p_\star^TT_i(d_n,\theta_n)x_{ij}^m)$ due to~\prettyref{eq:penaltyLimit}, yielding the following estimate:
\begin{equation*}
\ResizedEq{II=&O(P_s''(-p_\star^TT_i(d_n,\theta_n)x_{ij}^m)
P_s'(-p_\star^TT_{i'}(d_n,\theta_n)x_{i'j'}^{m'})P_s'(1-\|n_\star\|)^{-1})\\
\leq&O(P_s'(-p_\star^TT_i(d_n,\theta_n)x_{ij}^m)
P_s'(1-\|n_\star\|)P_s'(1-\|n_\star\|)^{-1})\\
=&O(P_s'(-p_\star^TT_i(d_n,\theta_n)x_{ij}^m))\to0.}
\end{equation*}
For term of type III, ~\prettyref{eq:limitDerivativePInequality} and~\prettyref{eq:penaltyLimit} yields:
\begin{equation*}
\ResizedEq{III=&O(P_s''(-p_\star^TT_i(d_n,\theta_n)x_{ij}^m)
P_s''(-p_\star^TT_{i'}(d_n,\theta_n)x_{i'j'}^{m'})P_s'(1-\|n_\star\|)^{-1})\\
=&O([P_s'(-p_\star^TT_i(d_n,\theta_n)x_{ij}^m)
P_s'(-p_\star^TT_{i'}(d_n,\theta_n)x_{i'j'}^{m'})]^{2/3}P_s'(1-\|n_\star\|)^{-1})\\
\leq&O(P_s'(1-\|n_\star\|)^{1/3})\to0,}
\end{equation*}
thus all is proved.
\end{proof}

\begin{lemma}
\label{lem:contactGlobalDiff2}
$U_c^{ij,i'j'}$ is a globally twice-differentiable function of $d,\theta$ when $U_c^{ij,i'j'}<\infty$ with the second-derivative~\prettyref{eq:derivative2} i) well-defined and ii) holds everywhere.
\end{lemma}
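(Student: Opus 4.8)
The plan is to reproduce the three-case template used in the proof of~\prettyref{lem:contactGlobalDiff}, but now applied to the map $\nabla U_c^{ij,i'j'}$ (which is defined on all of $\{U_c^{ij,i'j'}<\infty\}$ by~\prettyref{lem:contactGlobalDiff}) rather than to $U_c^{ij,i'j'}$ itself, with~\prettyref{lem:contactHessianVanish} playing the role that the vanishing of~\prettyref{eq:derivative} played there. In \textbf{Case I}, $U_c^{ij,i'j'}>0$: claim (i), well-definedness of~\prettyref{eq:derivative2}, is just the invertibility of the Hessian $\FPPT{U_{c\star}^{ij,i'j'}}{p_\star}$ appearing in~\prettyref{eq:planeDerivative}, which is precisely the non-singularity established in~\prettyref{lem:contactLocalDiff} i); claim (ii) is~\prettyref{lem:contactLocalDiff} iii). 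In \textbf{Case II}, $U_c^{ij,i'j'}\equiv0$ on a neighborhood: the function is locally constant, so its second derivative is $0$; on the set $\{U_c^{ij,i'j'}=0\}$ I would read~\prettyref{eq:derivative2} as this limiting value $0$ (consistent, by~\prettyref{lem:contactHessianVanish}, with limits taken from $\{U_c^{ij,i'j'}>0\}$), so (i)--(ii) hold there as well.

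The substance is \textbf{Case III}: $U_c^{ij,i'j'}(d,\theta)=0$, yet there is a sequence $\TWO{d_n}{\theta_n}\to\TWO{d}{\theta}$ with $U_c^{ij,i'j'}(d_n,\theta_n)>0$, so the implicit function theorem is unavailable at $\TWO{d}{\theta}$. I would show directly that $\nabla U_c^{ij,i'j'}$ is Fr\'echet differentiable at $\TWO{d}{\theta}$ with derivative $0$, which is the value we have assigned to~\prettyref{eq:derivative2} there. Two preliminary facts are needed. First, $\nabla U_c^{ij,i'j'}$ vanishes at every point where $U_c^{ij,i'j'}=0$ and is continuous everywhere on $\{U_c^{ij,i'j'}<\infty\}$: both follow from~\prettyref{lem:contactGlobalDiff} and the observation that~\prettyref{eq:derivative} only involves $P_s'$ evaluated at the optimal plane, all such values vanishing where $U_c^{ij,i'j'}=0$. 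Second, a \emph{uniform} Hessian bound: for every $\epsilon>0$ there is a ball $B\ni\TWO{d}{\theta}$ on which $U_c^{ij,i'j'}<\infty$ (by continuity of $U_c^{ij,i'j'}$ together with $U_c^{ij,i'j'}(d,\theta)=0$) and on which $\bigl\|\FDDT{U_c^{ij,i'j'}}{\TWO{d}{\theta}}\bigr\|<\epsilon$ at every point with $U_c^{ij,i'j'}>0$. I would obtain this from~\prettyref{lem:contactHessianVanish} by contradiction: a sequence inside $B\cap\{U_c^{ij,i'j'}>0\}$ converging to $\TWO{d}{\theta}$ with Hessian norm bounded below by $\epsilon$ would directly violate~\prettyref{lem:contactHessianVanish}.

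With such a $B$ in hand, I would finish Case III by a segment-wise mean value argument, exactly parallel to the one in~\prettyref{lem:contactGlobalDiff}. Fix $\TWO{d'}{\theta'}\in B$; if $U_c^{ij,i'j'}$ vanishes at $\TWO{d'}{\theta'}$ (or identically on the segment to $\TWO{d}{\theta}$) then $\nabla U_c^{ij,i'j'}(d',\theta')=0$ and nothing more is required, so assume $U_c^{ij,i'j'}(d',\theta')>0$. Parameterize the segment as $\beta\mapsto\gamma(\beta)$, $\beta\in[0,1]$, with $\gamma(0)=\TWO{d}{\theta}$ and $\gamma(1)=\TWO{d'}{\theta'}$, and set $\bar\beta=\sup\{\beta:U_c^{ij,i'j'}(\gamma(\beta))=0\}$; since the segment lies in the convex set $B$, $U_c^{ij,i'j'}\circ\gamma$ is finite-valued and continuous, so its zero set is closed and $\bar\beta\in[0,1)$ is attained, giving $U_c^{ij,i'j'}(\gamma(\bar\beta))=0$, hence $\nabla U_c^{ij,i'j'}(\gamma(\bar\beta))=0$, while $U_c^{ij,i'j'}\circ\gamma>0$ on $(\bar\beta,1]$. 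Applying the mean value theorem component-by-component to $\beta\mapsto\nabla U_c^{ij,i'j'}(\gamma(\beta))$ on $(\bar\beta,1)$ --- where this map is differentiable by~\prettyref{lem:contactLocalDiff} iii) with derivative $\FDDT{U_c^{ij,i'j'}}{\TWO{d}{\theta}}$ at $\gamma(\beta)$ applied to $\TWO{d'-d}{\theta'-\theta}$ --- and using $\nabla U_c^{ij,i'j'}(\gamma(\bar\beta))=0$ and the bound on $B$, one obtains $\|\nabla U_c^{ij,i'j'}(d',\theta')\|\le C\epsilon\,\|\TWO{d'-d}{\theta'-\theta}\|$ with $C$ purely dimensional. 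Letting $\epsilon\to0$ this is $o(\|\TWO{d'-d}{\theta'-\theta}\|)$, so $\nabla U_c^{ij,i'j'}$ is differentiable at $\TWO{d}{\theta}$ with zero derivative; hence $U_c^{ij,i'j'}$ is twice-differentiable there and~\prettyref{eq:derivative2}, in the reading above, holds. As Cases I--III exhaust $\{U_c^{ij,i'j'}<\infty\}$, this yields both (i) and (ii).

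The main obstacle is Case III, and within it the step that promotes the pointwise decay of~\prettyref{lem:contactHessianVanish} to the uniform-on-$B$ Hessian bound and then transports it --- by a mean value estimate along segments anchored at a point where $\nabla U_c^{ij,i'j'}$ is already known to vanish --- into a genuine Fr\'echet derivative of $\nabla U_c^{ij,i'j'}$ at $\TWO{d}{\theta}$. All of the delicate free-boundary analysis of the mixed term in~\prettyref{eq:derivative2} is effectively already absorbed into~\prettyref{lem:contactHessianVanish}, so no estimates on $P_s$ beyond~\prettyref{eq:penaltyLimit} should be needed here; the remaining work is elementary real analysis (closedness of level sets, the mean value theorem, and convexity of balls).
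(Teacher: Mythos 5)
Your proposal is correct and follows essentially the same route as the paper: the same Case I/II/III decomposition, well-definedness at the zero set via the vanishing of the relevant terms, and a mean-value argument along segments anchored at $\bar{\beta}$ (where the gradient vanishes), with \prettyref{lem:contactHessianVanish} supplying the key decay of the Hessian. The only difference is presentational: you first upgrade \prettyref{lem:contactHessianVanish} to a uniform $\epsilon$-ball bound by contradiction and then conclude Fr\'echet differentiability of $\nabla U_c^{ij,i'j'}$, whereas the paper runs the same estimate sequence-by-sequence with a $\limsup$.
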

\begin{proof}
i) \prettyref{eq:derivative2} is well-defined when $U_c^{ij,i'j'}>0$. When $U_c^{ij,i'j'}=0$, we have:
\begin{align*}
\FPPT{U_{c\star}^{ij,i'j'}}{\TWO{d}{\theta}}=0\text{ and }
\FPPTT{U_{c\star}^{ij,i'j'}}{\TWO{d}{\theta}}{p_\star}=0,
\end{align*}
and \prettyref{eq:derivative2} is well-defined by the convention $0\cdot\infty=0$.

ii) Following the same argument as~\prettyref{lem:contactGlobalDiff}, we establish twice-differentiability for case I and case II. For case III, where the inverse function theorem fails, define:
\begin{align*}
g_n^k(\beta)=\left[\FDD{U_c^{ij,i'j'}}{\TWO{d}{\theta}}\Big|_{\TWO{\beta(d_n-d)+d}{\beta(\theta_n-\theta)+\theta}}\right]^k,
\end{align*}
and invoke the intermediary value theorem to yield:
\begin{align*}
&g_n^k(\beta)=\TWO{(d_n-d)^T}{(\theta_n-\theta)^T}\\
&\left[\FDDT{U_c^{ij,i'j'}}{\TWO{d}{\theta}}\Big|_{\TWO{\beta_n(d_n-d)+d}{\beta_n(\theta_n-\theta)+\theta}}\right]^k(1-\bar{\beta}),
\end{align*}
where $[\bullet]^k$ here indicates the $k$th element or column of the gradient vector or the Hessian matrix, respectively. The limiting argument leads to:
\begin{equation*}
\ResizedEq{0\leq&\liminf_{n\to\infty}\frac{|g_n^k(1)|}{\|\TWO{d_n-d}{\theta_n-\theta}\|}
\leq\limsup_{n\to\infty}\frac{|g_n^k(1)|}{\|\TWO{d_n-d}{\theta_n-\theta}\|}\\
\leq&\limsup_{n\to\infty}\left\|\left[\FDDT{U_c^{ij,i'j'}}{\TWO{d}{\theta}}\Big|_{\TWO{\beta_n(d_n-d)+d}{\beta_n(\theta_n-\theta)+\theta}}\right]^k\right\|\\
\leq&\limsup_{n\to\infty}\left\|\FDDT{U_c^{ij,i'j'}}{\TWO{d}{\theta}}\Big|_{\TWO{\beta_n d_n}{\beta_n \theta_n}}\right\|=0,}
\end{equation*}
where the last equality is due to~\prettyref{lem:contactHessianVanish}, verifying~\prettyref{eq:derivative2} and all is proved.
\end{proof}

\subsection{Momentum Preservation}
Finally, by using the globally valid derivative formula, we are ready to show the preservation of linear and angular momentum:
\begin{lemma}
$U_c^{ij,i'j'}$ preserves i) linear and ii) angular momentum of $H_{ij}$ and $H_{i'j'}$.
\end{lemma}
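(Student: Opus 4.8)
The plan is to write down explicitly the contact forces that $U_c^{ij,i'j'}$ induces on the two hulls, and to check that their net force and net torque both vanish, using the single fact that the internal minimization over the separating plane makes $\partial_p U_{c\star}^{ij,i'j'}$ vanish at the optimizer $p_\star$. By the globally valid derivative formula of \prettyref{lem:contactGlobalDiff}, the force acting on the world-space vertex $[T_i(d,\theta)x_{ij}^m]_3$ is $-\partial U_{c\star}^{ij,i'j'}/\partial[T_i(d,\theta)x_{ij}^m]_3$ evaluated at $p_\star$, which, as already recorded in \prettyref{sec:friction}, equals $P_s'(-p_\star^T T_i(d,\theta)x_{ij}^m)\,n_\star$; symmetrically the force on $[T_{i'}(d,\theta)x_{i'j'}^m]_3$ is $-P_s'(p_\star^T T_{i'}(d,\theta)x_{i'j'}^m)\,n_\star$. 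If $U_c^{ij,i'j'}=0$ all these forces vanish and the claim is trivial, so assume $U_c^{ij,i'j'}>0$; then by \prettyref{lem:contactLocalDiff} $p_\star$ is an interior stationary point and $\partial_p U_{c\star}^{ij,i'j'}(d,\theta,p_\star)=0$.

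For i), summing the vertex forces over both hulls gives a vector parallel to $n_\star$ whose scalar coefficient is $\sum_m P_s'(-p_\star^T T_i x_{ij}^m) - \sum_m P_s'(p_\star^T T_{i'}x_{i'j'}^m)$. Up to sign this coefficient is exactly the $o$-component of $\partial_p U_{c\star}^{ij,i'j'}$ at $p_\star$ — the barrier term $P_s(1-\|n\|)$ contributes nothing because it is independent of the offset $o$ — hence it vanishes by stationarity, the net force is zero, and linear momentum is conserved.

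For ii), the net torque about the origin is $v\times n_\star$ with $v\triangleq \sum_m P_s'(-p_\star^T T_i x_{ij}^m)[T_i x_{ij}^m]_3 - \sum_m P_s'(p_\star^T T_{i'}x_{i'j'}^m)[T_{i'}x_{i'j'}^m]_3$. The $n$-block of the stationarity condition $\partial_p U_{c\star}^{ij,i'j'}(d,\theta,p_\star)=0$, together with $\partial P_s(1-\|n\|)/\partial n=-P_s'(1-\|n\|)\,n/\|n\|$, shows that $v$ is a scalar multiple of $n_\star$; hence $v\times n_\star=0$ and angular momentum is conserved. An equivalent, more conceptual route is to note that $U_{c\star}^{ij,i'j'}$ depends on the world vertices only through the half-space values $p^T(\cdot)$ and through $\|n\|$, so a common translation of both hulls is absorbed by the free variable $o$ and a common rotation by $n$, leaving $U_c^{ij,i'j'}=\min_p U_{c\star}^{ij,i'j'}$ invariant; translation- and rotation-invariance of the world-space potential are precisely conservation of linear and angular momentum.

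I expect no genuine obstacle: the whole argument is just the stationarity condition $\partial_p U_{c\star}^{ij,i'j'}=0$ split into its $o$- and $n$-blocks, plus the degenerate case $U_c^{ij,i'j'}=0$ handled by observing all induced forces are zero. The only points needing mild care are keeping the signs straight and invoking the \emph{globally} valid force formula of \prettyref{lem:contactGlobalDiff} so that the reasoning still applies on the non-smooth boundary where $U_c^{ij,i'j'}\to 0$.
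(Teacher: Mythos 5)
Your proposal is correct and follows essentially the same route as the paper: it uses the globally valid vertex-force formula and splits the stationarity condition $\partial_p U_{c\star}^{ij,i'j'}(d,\theta,p_\star)=0$ into its $o$-component (net force vanishes) and $n$-component (the torque vector reduces to $-\partial P_s(1-\|n_\star\|)/\partial n_\star$, which is parallel to $n_\star$, so the cross product vanishes). The explicit handling of the $U_c^{ij,i'j'}=0$ case and the closing remark on translation/rotation invariance are harmless additions but do not change the argument.
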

\begin{proof}
i) The total negative force applied on $H_{ij}$ and $H_{i'j'}$ is:
\begin{align*}
f_{ij}\triangleq\sum_{m=1}^M\FDD{U_c^{ij,i'j'}}{[T_i(d,\theta)x_{ij}^m]_3}
=\sum_{m=1}^M-P_s'(-p_\star^TT_i(d,\theta)x_{ij}^m)n_\star\\
f_{i'j'}\triangleq\sum_{m=1}^M\FDD{U_c^{ij,i'j'}}{[T_{i'}(d,\theta)x_{i'j'}^m]_3}=\sum_{m=1}^MP_s'(p_\star^TT_{i'}(d,\theta)x_{i'j'}^m)n_\star,
\end{align*}
where we have used the derivative~\prettyref{eq:derivative}. Combined, we have:
\begin{align*}
f_{ij}+f_{i'j'}=\FPP{U_{c\star}^{ij,i'j'}}{o_\star}n_\star=0,
\end{align*}
by the optimality of $p_\star$, proving linear momentum preservation.

ii) The total negative torque applied on $H_{ij}$ and $H_{i'j'}$ is:
\begin{align*}
\tau_{ij}\triangleq&\sum_{m=1}^M[T_i(d,\theta)x_{ij}^m]_3\times\FDD{U_c^{ij,i'j'}}{[T_i(d,\theta)x_{ij}^m]_3}\\
=&\sum_{m=1}^M-P_s'(-p_\star^TT_i(d,\theta)x_{ij}^m)[T_i(d,\theta)x_{ij}^m]_3\times n_\star\\
\tau_{i'j'}\triangleq&\sum_{m=1}^M[T_{i'}(d,\theta)x_{i'j'}^m]_3\times\FDD{U_c^{ij,i'j'}}{[T_{i'}(d,\theta)x_{i'j'}^m]_3}\\
=&\sum_{m=1}^MP_s'(p_\star^TT_{i'}(d,\theta)x_{i'j'}^m)[T_{i'}(d,\theta)x_{i'j'}^m]_3\times n_\star,
\end{align*}
where we have used the derivative~\prettyref{eq:derivative}. Combined, we have:
\begin{align*}
&\tau_{ij}+\tau_{i'j'}=
\FPP{\left[U_{c\star}^{ij,i'j'}-P_s(1-\|n_s\|)\right]}{n_\star}\times n_\star\\
=&-\FPP{P_s(1-\|n_\star\|)}{n_\star}\times n_\star=0,
\end{align*}
again by the optimality of $p_\star$, proving angular momentum preservation. 
\end{proof}
We summarize our results in the following theorem:
\begin{framed}
\begin{theorem}
\label{thm:contact}
The contact potential $U_c^{ij,i'j'}$ has the following properties:
\begin{itemize}
\item $U_c^{ij,i'j'}=0$ when $\dist(H_{ij},H_{i'j'})\geq2s/(1-s)$.
\item $\lim_{\dist(H_{ij},H_{i'j'})\to0}U_c^{ij,i'j'}=\infty$.
\item $U_c^{ij,i'j'}=0$ is globally twice-differentiable in $d$ and $\theta$.
\item $U_c^{ij,i'j'}$ preserves linear and angular momentum.
\end{itemize}
\end{theorem}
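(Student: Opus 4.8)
The plan is to assemble the theorem directly from the lemmas proved earlier in this section; no new argument is needed, only a careful accounting of which prior result supplies each bullet. First I would dispatch the two limiting behaviors. The vanishing of $U_c^{ij,i'j'}$ once $\dist(H_{ij},H_{i'j'})\geq2s/(1-s)$ is exactly \prettyref{lem:contactBasic} iii), whose proof exhibits the scaled middle separating plane as a witness of zero energy and then invokes convexity of $U_{c\star}^{ij,i'j'}$ in $p$ together with its nonnegativity. The blow-up $\lim_{\dist(H_{ij},H_{i'j'})\to0}U_c^{ij,i'j'}=\infty$ is \prettyref{lem:contactBasic} iv), which for any target threshold $M$ forces either some vertex-to-plane barrier term or else the unit-normal barrier term to exceed $M$.

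For the twice-differentiability claim I would split on whether the internal minimization over $p$ is non-degenerate. When $U_c^{ij,i'j'}>0$, \prettyref{lem:contactPositive} shows the unit-normal barrier term is active, and \prettyref{lem:contactLocalDiff} then gives strict convexity of $U_{c\star}^{ij,i'j'}$ at $p_\star$, so the inverse function theorem applies and yields the derivative formula \prettyref{eq:derivative} and its once-differentiated form \prettyref{eq:derivative2}. When $U_c^{ij,i'j'}\equiv0$ on a neighborhood the claim is immediate with all derivatives zero. The remaining case — $U_c^{ij,i'j'}(d,\theta)=0$ but approached along a sequence where it is positive — is where the implicit function theorem breaks down; here I would reuse the restriction-to-a-line-segment trick of \prettyref{lem:contactGlobalDiff} and \prettyref{lem:contactGlobalDiff2}, applying the intermediate value theorem to $g_n(\beta)$ and to each component of its gradient, then squeezing the difference quotients using that the right-hand sides of \prettyref{eq:derivative} and \prettyref{eq:derivative2} tend to zero as $U_c^{ij,i'j'}\to0$.

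The hard part will be that last boundary case, specifically the Hessian vanishing in \prettyref{lem:contactHessianVanish}: the mixed-derivative-squared contribution to \prettyref{eq:derivative2} carries a factor $[\partial^2_{p_\star}U_{c\star}^{ij,i'j'}]^{-1}$ whose norm grows like $1/(-P_s'(1-\|n_\star\|))$, so one must show the surviving $P_s'$ and $P_s''$ factors decay fast enough to compensate. I would lean on the two asymptotics in \prettyref{eq:penaltyLimit}, namely $P_s'/P_s''\to0$ and $[P_s'']^{1.5}/P_s'\to6\sqrt{3}/s^{2.5}$, together with the optimality identity \prettyref{eq:optimalityCond}, which via $\bar P_s'(x)=xP_s'(x)$ bounds every active vertex term of $-P_s'$ by a constant multiple of the normal-barrier term $-P_s'(1-\|n_\star\|)$. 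Classifying the product terms into the three types I, II, III and bounding each by $P_s'(1-\|n_\star\|)^{1/3}$ or better then closes this step. Finally, momentum preservation is the last lemma of the section: with the globally valid \prettyref{eq:derivative} in hand, summing the vertex forces gives $f_{ij}+f_{i'j'}=\partial_{o_\star}U_{c\star}^{ij,i'j'}\,n_\star=0$ and summing the vertex torques gives $\tau_{ij}+\tau_{i'j'}=-\partial_{n_\star}P_s(1-\|n_\star\|)\times n_\star=0$, both by optimality of $p_\star$. Collecting the four bullets completes the proof.
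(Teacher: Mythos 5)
Your proposal is correct and mirrors the paper's own treatment: the theorem is assembled exactly as you describe, with the first two bullets supplied by \prettyref{lem:contactBasic} iii) and iv), the twice-differentiability bullet by \prettyref{lem:contactLocalDiff}, \prettyref{lem:contactGlobalDiff}, \prettyref{lem:contactHessianVanish}, and \prettyref{lem:contactGlobalDiff2} (including the line-segment/intermediate-value argument and the $P_s'$, $P_s''$ asymptotics for the degenerate boundary case), and the momentum bullet by the final lemma via optimality of $p_\star$ in $o_\star$ and $n_\star$. No gaps; this is essentially the same route as the paper.
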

\end{framed}
\section{\label{sec:FrictionProof}Well-defined Frictional Damping Potential}
Similar to the case with contact potential, we show that the frictional damping potential is also globally twice differentiable and preserves the linear and angular momentum. 
\begin{lemma}
i) $U_{f\star}^{ij,i'j'}$ is a convex function in $\TWO{\omega}{u}$ so $U_f^{ij,i'j'}$ is well-defined; ii) If $U_{f\star}^{ij,i'j'}\not\equiv0$, $U_{f\star}^{ij,i'j'}$ is strictly convex in $u$ and either strictly convex in $\omega$ as well or flat in $\omega$.
\end{lemma}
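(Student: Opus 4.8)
The structural fact that drives everything is that, inside every square root appearing in $U_{f\star}^{ij,i'j'}=\sum_m[D_{ij}^m+D_{i'j'}^m]$, the vector $\dot{x}_{ij,\parallel}^m-\dot{p}_{ij,\parallel}^m$ is an \emph{affine} function of $(u,\omega)$. Indeed $\dot{p}_{ij,\parallel}^m=u+\omega w_{ij}^m$ with $w_{ij}^m\triangleq[B^{ij,i'j'}]^T\big[n_\star^t\times[T_i(d,\theta^t)x_{ij}^m]_3\big]$, and both $w_{ij}^m$, $\dot{x}_{ij,\parallel}^m$, and the prefactor $c_{ij}^m\triangleq\mu\Delta t\,\mathcal{A}_\epsilon(f_{ij,\perp}^m(d,\theta^t))\geq0$ depend only on $d,\theta^{t+1},\theta^t$ (through $p_\star^t$), never on $(u,\omega)$. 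Writing $\phi(y)\triangleq\sqrt{\|y\|^2+\epsilon}$, the first step of the plan is to record that each term has the normal form $c_{ij}^m\,\phi(a_{ij}^m-u-\omega w_{ij}^m)$ with $a_{ij}^m,c_{ij}^m$ constants and $c_{ij}^m\geq0$, and symmetrically for the primed hull.

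For part i) I would first note $\phi$ is convex, since $\phi(y)=\|(y^T,\sqrt{\epsilon})^T\|_2$ is a norm of an affine image of $y$; composing with the affine map $(u,\omega)\mapsto a_{ij}^m-u-\omega w_{ij}^m$, scaling by $c_{ij}^m\geq0$, and summing preserves convexity, so $U_{f\star}^{ij,i'j'}$ is convex in $(u,\omega)$. For well-definedness of $U_f^{ij,i'j'}=\min_{u,\omega}U_{f\star}^{ij,i'j'}$, I would argue attainment by restricting to an arbitrary line $(u,\omega)=(u_0,\omega_0)+r(\hat u,\hat\omega)$: each term $c_{ij}^m\phi(\cdot)$ is then either constant in $r$ (when $c_{ij}^m=0$ or $\hat u+\hat\omega w_{ij}^m=0$) or grows without bound as $r\to\pm\infty$; since all terms are nonnegative, $U_{f\star}^{ij,i'j'}$ is either constant or coercive along every line, which with continuity yields a minimizer, and convexity makes the minimum \emph{value} unique. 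Hence $U_f^{ij,i'j'}$ is a genuine single-valued function of $(d,\theta^{t+1},\theta^t)$.

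For part ii), assume some prefactor is positive, say $c_{ij}^{m_0}>0$ without loss of generality. The key computation is the Hessian $\nabla^2\phi(y)=(\|y\|^2+\epsilon)^{-3/2}\big[(\|y\|^2+\epsilon)I-yy^T\big]\succeq\epsilon(\|y\|^2+\epsilon)^{-3/2}I\succ0$, so $\phi$ is \emph{strictly} convex — this is exactly where $\epsilon>0$ is used. Strict convexity in $u$ then follows because $u\mapsto a_{ij}^{m_0}-u-\omega w_{ij}^{m_0}$ is an invertible affine self-map of $\mathbb{R}^2$, so the single term $c_{ij}^{m_0}\phi(\cdot)$ is already strictly convex in $u$, and the remaining convex terms only help. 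For the dependence on $\omega$ the affine map $\omega\mapsto a-u-\omega w$ has one-dimensional image, so one cannot invoke ``strictly convex $\circ$ affine''; instead I would compute the scalar second derivative directly: with $h=a-u-\omega w$, $\tfrac{d^2}{d\omega^2}\phi(h)=w^T\nabla^2\phi(h)w=(\|h\|^2+\epsilon)^{-3/2}\big[(\|h\|^2+\epsilon)\|w\|^2-(h^Tw)^2\big]$, which by Cauchy--Schwarz is $>0$ when $w\neq0$ and identically $0$ when $w=0$. Thus if some positive-prefactor term also has $w\neq0$ the whole sum is strictly convex in $\omega$, and otherwise every positive-prefactor term has $w=0$, so $U_{f\star}^{ij,i'j'}$ does not depend on $\omega$ at all, i.e.\ it is flat in $\omega$ — precisely the stated dichotomy. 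I do not anticipate a genuine obstacle; the only point requiring care is this last one: the $\omega$-direction must be handled by the explicit scalar second-derivative computation (and by reading off $w=0$ for all active terms as the flat case) rather than quoted from the affine-composition rule.
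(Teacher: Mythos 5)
Your proposal is correct and follows essentially the same route as the paper's proof: you exploit the affine dependence of $\dot{x}_{ij,\parallel}^m-\dot{p}_{ij,\parallel}^m$ on $(u,\omega)$, the convexity (and, with $\epsilon>0$, strict convexity) of $\sqrt{\|\cdot\|^2+\epsilon}$, and the same dichotomy on the $\omega$-coefficients $[B^{ij,i'j'}]^T[n_\star^t\times[T_i x_{ij}^m]_3]$ for the strictly-convex-versus-flat case. You merely make explicit what the paper leaves implicit (the Hessian of $\phi$, the Cauchy--Schwarz step, and attainment of the minimum), which is consistent with, not different from, the paper's argument.
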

\begin{proof}
i) For each term in $U_{f\star}^{ij,i'j'}$, the only function related to $\TWO{\omega}{u}$ is $\sqrt{\left\|\dot{x}_{ij,\parallel}^m-\dot{p}_{ij,\parallel}^m\right\|^2+\epsilon}$, which is convex, proving well-definedness.

ii) If any $\mathcal{A}_\epsilon(f_{ij,\perp}^m)>0$, i.e. $U_{f\star}^{ij,i'j'}\not\equiv0$, the function has positive definite Hessian in $u$. If the coefficient of $\omega$, i.e.:
\begin{align*}
[B^{ij,i'j'}]^T[n_\star^t\times[T_i(d,\theta^{t})x_{ij}^m]_3]\neq0,
\end{align*}
then the Hessian in $\omega$ is also positive. If all the coefficients vanish for terms with $\mathcal{A}_\epsilon(f_{ij,\perp}^m)>0$, then $U_{f\star}^{ij,i'j'}$ is independent of $\omega$, i.e. flat.
\end{proof}
However, although the function is well-defined, the possibility that $U_{f\star}^{ij,i'j'}$ is not strictly convex in $\omega$ can significantly complicate our analysis. Fortunately, we can remove such singular situation by shifting the coordinate system along some vector in the tangent plane. For example, let us denote $B_1^{ij,i'j'}$ as the first basis (column) of the tangent projection matrix $B^{ij,i'j'}$. Let us consider the following shifted tangent velocity:
\begin{equation*}
\ResizedEq{&\dot{p}_{ij,\parallel,\alpha}^m(u,\omega)\triangleq\omega [B^{ij,i'j'}]^T[n_\star^t\times[[T_i(d,\theta^{t})x_{ij}^m]_3+B_1^{ij,i'j'}\alpha]]+u\\
&D_{ij,\alpha}^m(d,\theta^{t+1},\theta^t,u,\omega)\triangleq
\mu\Delta t\mathcal{A}_\epsilon(f_{ij,\perp}^m)
\sqrt{\left\|\dot{x}_{ij,\parallel}^m-\dot{p}_{ij,\parallel,\alpha}^m\right\|^2+\epsilon}\\
&U_{f\star,\alpha}^{ij,i'j'}\triangleq\sum_m[D_{ij,\alpha}^m+D_{i'j',\alpha}^m]\\
&U_{f,\alpha}^{ij,i'j'}=\fmin{u,\omega}\;U_{f\star,\alpha}^{ij,i'j'}(d,\theta^{t+1},\theta^t,u,\omega),}
\end{equation*}
where we denote the shifted version by the subscript $\alpha$. We establish the following result to remove the singular configuration:
\begin{lemma}
\label{lem:bigAlpha}
i) $U_{f,\alpha}^{ij,i'j'}=U_f^{ij,i'j'}$ and, ii) if $U_{f\star,\alpha}^{ij,i'j'}\not\equiv0$, it is strictly convex for sufficiently large $\alpha$.
\end{lemma}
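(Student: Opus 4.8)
The plan is to observe that the $\alpha$-shift replaces the point $[T_i(d,\theta^{t})x_{ij}^m]_3$ by $[T_i(d,\theta^{t})x_{ij}^m]_3+\alpha B_1^{ij,i'j'}$ \emph{only inside the cross product}, and that this shift is common to every vertex and to both convex hulls, so it enters the plane velocity purely through the angular variable $\omega$. Concretely, setting $v\triangleq[B^{ij,i'j'}]^T[n_\star^t\times B_1^{ij,i'j'}]\in\mathbb{R}^2$, one has $\dot{p}_{ij,\parallel,\alpha}^m(u,\omega)=\dot{p}_{ij,\parallel}^m(u+\alpha\omega v,\omega)$, and the identical relation holds with $ij$ replaced by $i'j'$. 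Hence $U_{f\star,\alpha}^{ij,i'j'}(d,\theta^{t+1},\theta^{t},u,\omega)=U_{f\star}^{ij,i'j'}(d,\theta^{t+1},\theta^{t},u+\alpha\omega v,\omega)$.

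For part i), I would note that $(u,\omega)\mapsto(u+\alpha\omega v,\omega)$ is a linear bijection of $\mathbb{R}^2\times\mathbb{R}$ (a shear with a fixed $v$), so minimizing $U_{f\star,\alpha}^{ij,i'j'}$ over $(u,\omega)$ gives the same value as minimizing $U_{f\star}^{ij,i'j'}$; since this holds at every $(d,\theta^{t+1},\theta^{t})$, the functions $U_{f,\alpha}^{ij,i'j'}$ and $U_f^{ij,i'j'}$ coincide identically, and in particular share all derivatives — which is what makes it legitimate to carry out the remaining analysis for the better-conditioned $U_{f,\alpha}^{ij,i'j'}$ and transfer the conclusions back.

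For part ii), I would first observe that the normal magnitudes $\mathcal{A}_\epsilon(f_{ij,\perp}^m)$ are built from the $t$th-step data and are untouched by $\alpha$, so $U_{f\star,\alpha}^{ij,i'j'}\not\equiv0$ is equivalent to ``$\mathcal{A}_\epsilon(f_{ij,\perp}^m)>0$ for some vertex''; moreover $u$ enters $U_{f\star,\alpha}^{ij,i'j'}$ exactly as in $U_{f\star}^{ij,i'j'}$, so the Hessian in $u$ is unchanged and positive definite in this case by the preceding lemma, giving strict convexity in $u$ for every $\alpha$. It then remains to force strict convexity in $\omega$. The key algebraic fact is $v\neq0$: $B_1^{ij,i'j'}$ is a nonzero vector in the tangent plane of $p_\star^t$, so $n_\star^t\times B_1^{ij,i'j'}$ is a nonzero vector orthogonal to $n_\star^t$ and hence lies in the column span of $B^{ij,i'j'}$; applying $[B^{ij,i'j'}]^T$, which is injective on that span since $B^{ij,i'j'}$ has full column rank, yields $v\neq0$. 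Writing the $\omega$-coefficient at vertex $x_{ij}^m$ as $c_m(\alpha)=[B^{ij,i'j'}]^T[n_\star^t\times[T_i(d,\theta^{t})x_{ij}^m]_3]+\alpha v$, we get $\|c_m(\alpha)\|\to\infty$, so there is $\alpha_0$ past which $c_m(\alpha)\neq0$ for every vertex of both hulls; the second derivative in $\omega$ is the nonnegative sum $\mu\Delta t\sum_m\mathcal{A}_\epsilon(f_{ij,\perp}^m)\,c_m(\alpha)^T[\nabla^2 g]\,c_m(\alpha)$ plus the symmetric $i'j'$ terms, where $g(y)=\sqrt{\|y\|^2+\epsilon}$ satisfies $\nabla^2 g\succ0$ everywhere, so for $\alpha>\alpha_0$ the contribution of any vertex with $\mathcal{A}_\epsilon(f_{ij,\perp}^m)>0$ is strictly positive and the whole sum is positive. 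Together with strict convexity in $u$, this is the claimed strict convexity.

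The conceptual content here is minimal; the main thing to get right is the bookkeeping — verifying that the shift $\alpha B_1^{ij,i'j'}$ produces the \emph{same} vector $\alpha v$ for every vertex on both polyhedra so that it can be absorbed globally into $u$, together with the elementary observation $v\neq0$. No quantitative estimate beyond those already used in the preceding lemma is needed.
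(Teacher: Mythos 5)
Your proof is correct and follows essentially the same route as the paper: part i) via absorbing the shift into $u$ (your shear identity $U_{f\star,\alpha}^{ij,i'j'}(u,\omega)=U_{f\star}^{ij,i'j'}(u+\alpha\omega v,\omega)$ is exactly the substitution the paper uses in its two inequalities), and part ii) by choosing $\alpha$ large enough that every $\omega$-coefficient is nonzero, which is the paper's argument with the small extra bookkeeping ($v\neq0$, $\|c_m(\alpha)\|\to\infty$) made explicit. The same coordinate-wise notion of strict convexity in $u$ and in $\omega$ is used in both, so the proposal matches the paper's proof in both substance and level of rigor.
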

\begin{proof}
i) Let us denote by $u_{\star,\alpha}$ and $\omega_{\star,\alpha}$ the optimal solution in $U_{f,\alpha}^{ij,i'j'}$, then our desired result follows from the following fact:
\begin{equation*}
\ResizedEq{U_{f,\alpha}^{ij,i'j'}
\leq&U_{f\star,\alpha}^{ij,i'j'}(d,\theta^{t+1},\theta^t,u_\star-\omega_\star[B^{ij,i'j'}]^T[n_\star^t\times B_1^{ij,i'j'}\alpha],\omega_\star)\\
=&U_{f\star}^{ij,i'j'}
(d,\theta^{t+1},\theta^t,u_\star,\omega_\star)=U_f^{ij,i'j'},}
\end{equation*}
and
\begin{equation*}
\ResizedEq{U_f^{ij,i'j'}
\leq&U_{f\star}^{ij,i'j'}(d,\theta^{t+1},\theta^t,u_{\star,\alpha}+\omega_{\star,\alpha}[B^{ij,i'j'}]^T[n_\star^t\times B_1^{ij,i'j'}\alpha],\omega_{\star,\alpha})\\
=&U_{f\star,\alpha}^{ij,i'j'}
(d,\theta^{t+1},\theta^t,u_{\star,\alpha},\omega_{\star,\alpha})=U_{f,\alpha}^{ij,i'j'}.}
\end{equation*}

ii) We can set $\alpha$ sufficiently large to make:
\begin{equation*}
\ResizedEq{\|[B^{ij,i'j'}]^T[n_\star^t\times[T_i(d,\theta^{t})x_{ij}^m]_3]\|\ll
\|[B^{ij,i'j'}]^T[n_\star^t\times[B_1^{ij,i'j'}\alpha]\|,}
\end{equation*}
leading to non-zero coefficient of $\omega$ in any $\dot{p}_{ij,\parallel,\alpha}^m$, making $U_{f\star,\alpha}^{ij,i'j'}$ strictly convex in $\omega$ if any $\mathcal{A}_\epsilon(f_{ij,\perp}^m)>0$, i.e. $U_{f\star,\alpha}^{ij,i'j'}\not\equiv0$.
\end{proof}

\subsection{Local \& Global Differentiability}
\begin{lemma}
\label{lem:frictionLocalDiff}
If $U_{f\star}^{ij,i'j'}$ is strictly convex in both $\omega$ and $u$, then $U_f^{ij,i'j'}$ is locally differentiable with derivative being:
\begin{align}
\label{eq:derivativeFriction}
\FDD{U_f^{ij,i'j'}}{\THREE{d}{\theta^{t+1}}{\theta^t}}=
\FPP{U_{f\star}^{ij,i'j'}(d,\theta^{t+1},\theta^t,u_\star,\omega_\star)}{\THREE{d}{\theta^{t+1}}{\theta^t}}.
\end{align}
It is also has well-defined mixed-second-derivative being:
\begin{equation}
\label{eq:derivativeFriction2}
\ResizedEq{&\FDDTT{U_f^{ij,i'j'}}{\THREE{d}{\theta^{t+1}}{\theta^t}}{\theta^{t+1}}\\
=&\FPPTT{U_{f\star}^{ij,i'j'}}{\THREE{d}{\theta^{t+1}}{\theta^t}}{\theta^{t+1}}+
\FPPTT{U_{f\star}^{ij,i'j'}}{\THREE{d}{\theta^{t+1}}{\theta^t}}{\TWO{u_\star}{\omega_\star}}\FPP{\TWO{u_\star}{\omega_\star}}{\theta^{t+1}}.}
\end{equation}
\end{lemma}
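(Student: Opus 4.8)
The plan is to replay the argument of \prettyref{lem:contactLocalDiff}, with the separating plane $p_\star$ now replaced by the tangential-velocity parameters $\TWO{u_\star}{\omega_\star}$ of the plane. First I would establish the non-degeneracy that makes the implicit function theorem applicable: under the hypothesis the minimizer $\TWO{u_\star}{\omega_\star}$ is unique (strict convexity) and exists (coercivity, which strict convexity also supplies since $U_{f\star}^{ij,i'j'}\not\equiv0$), and the Hessian $\FPPT{U_{f\star}^{ij,i'j'}}{\TWO{u}{\omega}}$ at that point is symmetric positive definite. The reason is structural: each summand of $U_{f\star}^{ij,i'j'}$ is $\mu\Delta t\,\mathcal{A}_\epsilon(f_{ij,\perp}^m)\sqrt{\|\dot{x}_{ij,\parallel}^m-\dot{p}_{ij,\parallel}^m\|^2+\epsilon}$ with $\mathcal{A}_\epsilon(f_{ij,\perp}^m)\geq0$ a constant in $\TWO{u}{\omega}$ and $\dot{p}_{ij,\parallel}^m$ affine in $\TWO{u}{\omega}$; since $\epsilon>0$, the map $v\mapsto\sqrt{\|v\|^2+\epsilon}$ has Hessian $\tfrac1r\big(I-\tfrac{vv^T}{r^2}\big)\succ0$ with $r=\sqrt{\|v\|^2+\epsilon}$, so any kernel vector of $\FPPT{U_{f\star}^{ij,i'j'}}{\TWO{u}{\omega}}$ would be a direction along which $U_{f\star}^{ij,i'j'}$ is affine -- impossible for a strictly convex function. (It is precisely to secure this that \prettyref{lem:bigAlpha} allows us, whenever a flat $\omega$-direction threatens, to replace $U_{f\star}^{ij,i'j'}$ by the equal-but-shifted $U_{f\star,\alpha}^{ij,i'j'}$ which is strictly convex; hence assuming strict convexity here loses no generality.)

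With this in hand I would apply the implicit function theorem to the stationarity condition $\FPP{U_{f\star}^{ij,i'j'}}{\TWO{u}{\omega}}=0$: locally $\TWO{u_\star}{\omega_\star}$ is a single-valued $C^1$ function of $\THREE{d}{\theta^{t+1}}{\theta^t}$ with
\[
\FPP{\TWO{u_\star}{\omega_\star}}{\THREE{d}{\theta^{t+1}}{\theta^t}}=-\left[\FPPT{U_{f\star}^{ij,i'j'}}{\TWO{u}{\omega}}\right]^{-1}\FPPTT{U_{f\star}^{ij,i'j'}}{\TWO{u}{\omega}}{\THREE{d}{\theta^{t+1}}{\theta^t}},
\]
mirroring \prettyref{eq:planeDerivative}. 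The chain rule applied to $U_f^{ij,i'j'}=U_{f\star}^{ij,i'j'}(d,\theta^{t+1},\theta^t,u_\star,\omega_\star)$ then gives $\FDD{U_f^{ij,i'j'}}{\THREE{d}{\theta^{t+1}}{\theta^t}}$ as the sum of $\FPP{U_{f\star}^{ij,i'j'}}{\THREE{d}{\theta^{t+1}}{\theta^t}}$ and a term proportional to $\FPP{U_{f\star}^{ij,i'j'}}{\TWO{u}{\omega}}$; the latter vanishes at $\TWO{u_\star}{\omega_\star}$, yielding \prettyref{eq:derivativeFriction}.

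For the mixed second derivative I would differentiate the envelope identity \prettyref{eq:derivativeFriction} once more, but only in the $\theta^{t+1}$ slot, and verify that this is legitimate. The key observation is that $\theta^{t+1}$ enters $U_{f\star}^{ij,i'j'}$ only through $\dot{x}_{ij,\parallel}^m$, i.e. through $[T_i(d,\theta^{t+1})x_{ij}^m]_3$, and the forward-kinematics map $T_i$, together with $\mathcal{A}_\epsilon(\cdot)$ and $\sqrt{\|\cdot\|^2+\epsilon}$, are $C^\infty$; therefore every partial of $U_{f\star}^{ij,i'j'}$ that differentiates at least once in $\theta^{t+1}$ is well-defined and continuous -- even though the normal-force factor $f_{ij,\perp}^m(d,\theta^t)=P_s'n_\star^t$ inherits only the $C^1$ regularity of $\nabla U_c^{ij,i'j'}$ in $\theta^t$ (which is exactly why the statement claims only the $\theta^{t+1}$-mixed second derivative and not the full Hessian). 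Combining this with the $C^1$ dependence of $\TWO{u_\star}{\omega_\star}$ on $\theta^{t+1}$ from the implicit function theorem, differentiating $\FDD{U_f^{ij,i'j'}}{\THREE{d}{\theta^{t+1}}{\theta^t}}=\FPP{U_{f\star}^{ij,i'j'}}{\THREE{d}{\theta^{t+1}}{\theta^t}}$ with respect to $\theta^{t+1}$ yields \prettyref{eq:derivativeFriction2} with both right-hand-side terms well-defined.

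The hard part will be the first step: converting the bare strict-convexity hypothesis into a genuinely nonsingular $\TWO{u}{\omega}$-Hessian, i.e. excluding the degenerate ``flat in $\omega$'' configuration so that the implicit function theorem has something to act on. This is the one place where the analysis departs materially from the contact case; it is resolved by the $\epsilon$-regularization (no strictly convex member of this family has a degenerate minimizer) together with \prettyref{lem:bigAlpha} (strict convexity can always be arranged without altering $U_f^{ij,i'j'}$). Everything after that -- the chain-rule computation and the bookkeeping of which mixed partials of $U_{f\star}^{ij,i'j'}$ survive the $C^1$-only dependence on $\theta^t$ -- is routine.
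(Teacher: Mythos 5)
Your proposal is correct and follows essentially the same route as the paper: invoke the implicit (inverse) function theorem at the stationarity condition in $\TWO{u_\star}{\omega_\star}$, use the vanishing gradient at the minimizer to collapse the chain rule into the envelope formula \prettyref{eq:derivativeFriction}, and differentiate once more in $\theta^{t+1}$ to obtain \prettyref{eq:derivativeFriction2}. Your additional argument that, for this particular family of terms $\mathcal{A}_\epsilon(f_{ij,\perp}^m)\sqrt{\|\cdot\|^2+\epsilon}$ with affine inner arguments, strict convexity forces a nonsingular $\TWO{u}{\omega}$-Hessian (any kernel direction would be a flat direction) is a sound refinement of a step the paper leaves implicit.
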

\begin{proof}
If $U_{f\star}^{ij,i'j'}$ is strictly convex, then it is strictly convex in a local neighborhood. Invoking the inverse function theorem yields:
\begin{equation*}
\ResizedEq{&\FPP{\TWO{u_\star}{\omega_\star}}{\THREE{d}{\theta^{t+1}}{\theta^t}}=-\left[\FPPT{U_{f\star}^{ij,i'j'}}{\TWO{u_\star}{\omega_\star}}\right]^{-1}\FPPTT{U_{f\star}^{ij,i'j'}}{\TWO{u_\star}{\omega_\star}}{\THREE{d}{\theta^{t+1}}{\theta^t}}\\
&\FDD{U_f^{ij,i'j'}}{\THREE{d}{\theta^{t+1}}{\theta^t}}=
\FPP{U_{f\star}^{ij,i'j'}}{\THREE{d}{\theta^{t+1}}{\theta^t}}+
\FPP{U_{f\star}^{ij,i'j'}}{\TWO{u_\star}{\omega_\star}}\FPP{\TWO{u_\star}{\omega_\star}}{\THREE{d}{\theta^{t+1}}{\theta^t}}
=\FPP{U_{f\star}^{ij,i'j'}}{\THREE{d}{\theta^{t+1}}{\theta^t}},}
\end{equation*}
due to the vanish of gradient at $\TWO{u_\star}{\omega_\star}$. The second derivative follows by the chain rule.
\end{proof}

\begin{lemma}
\label{lem:frictionGradientVanish}
If there is a sequence $\THREE{d_n}{\theta_n^{t+1}}{\theta_n^t}\to\THREE{d}{\theta^{t+1}}{\theta^t}$ where $U_{f\star}^{ij,i'j'}(d_n,\theta_n^{t+1},\theta_n^t,u,\omega)\not\equiv0$ but $U_{f\star}^{ij,i'j'}(d,\theta^{t+1},\theta^t,u,\omega)\equiv0$ in $u$ and $\omega$, then:
\begin{align*}
\lim_{n\to\infty}
\FDD{U_f^{ij,i'j'}}{\THREE{d_n}{\theta_n^{t+1}}{\theta_n^t}}=0.
\end{align*}
\end{lemma}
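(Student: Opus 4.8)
The plan is to recycle the argument of~\prettyref{lem:contactHessianVanish}, the one genuinely new feature being that here the inner minimization is over the plane velocity $(u,\omega)$ rather than over the plane $p$, so in addition to the ill-conditioning of the separating plane we must control the optimal velocity. I would first observe that the hypothesis already places us inside~\prettyref{lem:contactHessianVanish} applied to the time-$t$ separating plane: $U_{f\star}^{ij,i'j'}(d,\theta^{t+1},\theta^t,u,\omega)\equiv0$ forces every normal force $f_{ij,\perp}^m(d,\theta^t)=P_s'n_\star^t$ and $f_{i'j',\perp}^m(d,\theta^t)$ to vanish, i.e. every vertex barrier term at $p_\star^t$ to vanish, and by~\prettyref{lem:contactPositive} i) this can only happen when $U_c^{ij,i'j'}(d,\theta^t)=0$; along the sequence $U_{f\star}^{ij,i'j'}(d_n,\dots)\not\equiv0$ forces some vertex term positive, hence $U_c^{ij,i'j'}(d_n,\theta_n^t)>0$, while continuity (\prettyref{lem:contactGlobalDiff}) gives $U_c^{ij,i'j'}(d_n,\theta_n^t)\to0$. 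Thus $p_\star^t$ is a differentiable function of $(d,\theta^t)$ near each $(d_n,\theta_n^t)$, $\|n_\star^{t,(n)}\|\in(1-s,1)$, and every decay estimate in the proof of~\prettyref{lem:contactHessianVanish} is at our disposal.

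Writing $\pi_n:=-P_s'(1-\|n_\star^{t,(n)}\|)\to0$, I would then assemble the rates I need: \prettyref{eq:optimalityCond} and~\prettyref{eq:penaltyLimit} give $|P_s'|=O(\pi_n)$ and $P_s''=O(\pi_n^{2/3})$ at each active vertex argument; \prettyref{eq:hessianNorm} gives $\|[\FPPT{U_{c\star}^{ij,i'j'}}{p_\star^t}]^{-1}\|=O(\pi_n^{-1})$; and the mixed Hessian $\FPPTT{U_{c\star}^{ij,i'j'}}{p_\star^t}{(d,\theta^t)}$ --- which collects only vertex contributions, since $P_s(1-\|n\|)$ is independent of $d,\theta^t$ --- is $O(\pi_n^{2/3})$. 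Differentiating the optimality condition $\partial_{p}U_{c\star}^{ij,i'j'}=0$ as in~\prettyref{eq:planeDerivative} therefore yields $\|\FPP{p_\star^t}{(d,\theta^t)}\|=O(\pi_n^{-1/3})$, so $n_\star^t$, the tangent basis $B^{ij,i'j'}$ and the vertex positions $[T_i(d,\theta^t)x_{ij}^m]_3$ stay bounded with derivatives of order $O(\pi_n^{-1/3})$, $O(\pi_n^{-1/3})$, $O(1)$ respectively; and $f_{ij,\perp}^m=P_s'n_\star^t=O(\pi_n)$ gives $\mathcal A_\epsilon(f_{ij,\perp}^m)=O(\pi_n^2)$, $\mathcal A_\epsilon'(f_{ij,\perp}^m)=O(\pi_n)$.

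Next I would make~\prettyref{eq:derivativeFriction} legitimate. If $U_{f\star}^{ij,i'j'}(d_n,\dots)$ is not strictly convex in $\omega$, I replace it by the shifted potential of~\prettyref{lem:bigAlpha}, using a single $\alpha$ large enough for all $n$ (feasible because $\|[B^{ij,i'j'}]^T[n_\star^t\times[T_ix^m]_3]\|$ is uniformly bounded while $\|[B^{ij,i'j'}]^T[n_\star^t\times B_1^{ij,i'j'}\alpha]\|\ge\alpha(1-s)$), which changes neither $U_f^{ij,i'j'}$ nor its derivative; now~\prettyref{lem:frictionLocalDiff} gives $\FDD{U_f^{ij,i'j'}}{(d,\theta^{t+1},\theta^t)}=\FPP{U_{f\star}^{ij,i'j'}}{(d,\theta^{t+1},\theta^t)}$ at the minimizer $(u_\star,\omega_\star)$. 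From $U_{f\star}^{ij,i'j'}(u_\star,\omega_\star)\le U_{f\star}^{ij,i'j'}(0,0)=\mu\Delta t\sum_m\mathcal A_\epsilon(f_\perp^m)\sqrt{\|\dot x_\parallel^m\|^2+\epsilon}=O(\pi_n^2)$ and a term-wise lower bound $\mu\Delta t\,\mathcal A_\epsilon(f_\perp^{m^\star})\|\dot x_\parallel^{m^\star}-\dot p_\parallel^{m^\star}(u_\star,\omega_\star)\|$ with $m^\star$ the vertex of largest normal force (where $\mathcal A_\epsilon(f_\perp^{m^\star})\asymp\pi_n^2$), I get $\|\dot x_\parallel^{m^\star}-\dot p_\parallel^{m^\star}(u_\star,\omega_\star)\|=O(1)$; together with the strong convexity created by the shift I expect this to keep $(u_\star,\omega_\star)$ bounded, hence every $\dot x_\parallel^m-\dot p_\parallel^m(u_\star,\omega_\star)$ bounded.

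Finally I would expand $\FPP{U_{f\star}^{ij,i'j'}}{(d,\theta^{t+1},\theta^t)}$ into, per vertex, a ``normal-force'' term $\mu\Delta t\,\mathcal A_\epsilon'(f_\perp^m)[\FPP{f_\perp^m}{(\cdot)}]\sqrt{\|\cdot\|^2+\epsilon}$ with $\FPP{f_\perp^m}{(\cdot)}=P_s''\,\FPP{(\text{arg})}{(\cdot)}\,n_\star^t+P_s'\,\FPP{n_\star^t}{(\cdot)}=O(\pi_n^{2/3}\pi_n^{-1/3}+\pi_n\,\pi_n^{-1/3})=O(\pi_n^{1/3})$, hence of order $O(\pi_n^{4/3})$; and a ``sliding'' term $\mu\Delta t\,\mathcal A_\epsilon(f_\perp^m)\,\frac{(\dot x_\parallel^m-\dot p_\parallel^m)^T}{\sqrt{\|\cdot\|^2+\epsilon}}\FPP{(\dot x_\parallel^m-\dot p_\parallel^m)}{(\cdot)}$ whose middle factor has norm $<1$ and whose last factor is $O(\pi_n^{-1/3})$ (the worst piece being $\omega_\star\,\FPP{([B^{ij,i'j'}]^T[n_\star^t\times[T_ix^m]_3])}{(\cdot)}$), hence of order $O(\pi_n^{5/3})$; summing over the two hulls then gives $\FDD{U_f^{ij,i'j'}}{(d_n,\theta_n^{t+1},\theta_n^t)}\to0$. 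The step I expect to be the real obstacle is exactly this bookkeeping, and in particular pinning down the growth of the optimal plane velocity $(u_\star,\omega_\star)$: one must certify that the $O(\pi_n)$/$O(\pi_n^2)$ decay of the $P_s$- and $\mathcal A_\epsilon$-derivatives beats the simultaneous $O(\pi_n^{-1/3})$ blow-up of $\FPP{p_\star^t}{(\cdot)}$, $\FPP{n_\star^t}{(\cdot)}$, $\FPP{B^{ij,i'j'}}{(\cdot)}$ \emph{and} of $(u_\star,\omega_\star)$ --- a coupling that does not arise in the contact case and that is precisely why the coordinate shift of~\prettyref{lem:bigAlpha} is brought in, to situate us in the strictly convex, implicit-function-theorem regime where a bounded minimizer and the clean derivative formula~\prettyref{eq:derivativeFriction} are both available.
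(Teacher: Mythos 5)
Your proposal is correct and follows essentially the same route as the paper's proof: the globally valid derivative formula \prettyref{eq:derivativeFriction}, a per-vertex split into the $\mathcal{A}_\epsilon$-weighted sliding term and the $\mathcal{A}_\epsilon'$-weighted normal-force term, and the decay rates for $P_s'$, $P_s''$, the inverse plane-Hessian and $\FPP{f_{ij,\perp}^m}{\TWO{d}{\theta^t}}$ imported from \prettyref{lem:contactHessianVanish} (the paper's \prettyref{eq:forceVanish}). The bookkeeping you single out as the real obstacle---controlling the optimal plane velocity so that the un-normalized factor $\sqrt{\|\dot{x}_{ij,\parallel}^m-\dot{p}_{ij,\parallel}^m\|^2+\epsilon}$ stays bounded---is exactly the point the paper dispatches with the phrase ``other terms are bounded,'' so your treatment is, if anything, more explicit than the published argument.
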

\begin{proof}
By construction, the only situation where $U_{f\star}^{ij,i'j'}\equiv0$ is when all $f_{ij,\perp}^m=0$ and all $f_{i'j',\perp}^m=0$. We have the following estimates:
\begin{equation}
\label{eq:diffFrictionD1}
\ResizedEq{\FPP{D_{ij}^m}{\theta_n^{t+1}}
=\mu\Delta t\mathcal{A}_\epsilon(f_{ij,\perp}^m)
\FPP{\dot{x}_{ij,\parallel}^m}{\theta_n^{t+1}}^T\frac{\dot{x}_{ij,\parallel}^m-\dot{p}_{ij,\parallel}^m}{\sqrt{\left\|\dot{x}_{ij,\parallel}^m-\dot{p}_{ij,\parallel}^m\right\|^2+\epsilon}}\to0,}
\end{equation}
which is because $\mathcal{A}_\epsilon(f_{ij,\perp}^m)\to0$ and other terms are bounded. Further, for the derivative with respect to $d$ and $\theta^t$, we have the following estimates:
\begin{equation}
\label{eq:diffFrictionD2}
\ResizedEq{\FPP{D_{ij}^m}{\TWO{d_n}{\theta_n^t}}
=&\mu\Delta t\mathcal{A}_\epsilon(f_{ij,\perp}^m)
\FPP{\dot{x}_{ij,\parallel}^m}{\TWO{d_n}{\theta_n^t}}^T\frac{\dot{x}_{ij,\parallel}^m-\dot{p}_{ij,\parallel}^m}{\sqrt{\left\|\dot{x}_{ij,\parallel}^m-\dot{p}_{ij,\parallel}^m\right\|^2+\epsilon}}+\\
&\mu\Delta t\FPP{f_{ij,\perp}^m}{\TWO{d_n}{\theta_n^t}}^T\FDD{\mathcal{A}_\epsilon}{f_{ij,\perp}^m}\sqrt{\left\|\dot{x}_{ij,\parallel}^m-\dot{p}_{ij,\parallel}^m\right\|^2+\epsilon}\to0.}
\end{equation}
Indeed, the first righthand side term above tends to zero because $\mathcal{A}_\epsilon(f_{ij,\perp}^m)\to0$. The second righthand side term above tends to zero because:
\begin{equation}
\label{eq:forceVanish}
\ResizedEq{\FPP{f_{ij,\perp}^m}{\TWO{d_n}{\theta_n^t}}=&
P_s'\FPP{n_\star^t}{\TWO{d_n}{\theta_n^t}}+
P_s''n_\star^t\FPP{[p_\star^t]^TT_i(d_n,\theta_n^t)x_{ij}^m}{\TWO{d_n}{\theta_n^t}}^T\to0,}
\end{equation}
with other terms bounded. To verify~\prettyref{eq:forceVanish}, readers can use the same argument as in~\prettyref{lem:contactHessianVanish}. By expanding the derivatives in~\prettyref{eq:forceVanish} using~\prettyref{eq:planeDerivative}, it can be shown that the derivative is a linear combinations of the following terms:
\begin{align*}
&O(P_s'(\bullet)P_s'(\bullet)P_s'(1-\|n_\star\|)^{-1})\\
&O(P_s'(\bullet)P_s''(\bullet)P_s'(1-\|n_\star\|)^{-1})\\
&O(P_s''(\bullet)P_s''(\bullet)P_s'(1-\|n_\star\|)^{-1}),
\end{align*}
each of which vanishes. Finally, the same analysis above apply to $D_{i'j'}^m$. Our desired result follows by plugging the above estimates into~\prettyref{eq:derivativeFriction}.
\end{proof}

\begin{lemma}
\label{lem:frictionGlobalDiff}
$U_f^{ij,i'j'}$ is a globally differentiable function of $d,\theta^{t+1},\theta^t$ with the derivative~\prettyref{eq:derivativeFriction} holds everywhere.
\end{lemma}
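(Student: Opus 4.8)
The plan is to reproduce the three-case structure of~\prettyref{lem:contactGlobalDiff}, with the internal separating plane $p_\star$ replaced by the tangential velocity parameters $\TWO{u_\star}{\omega_\star}$, and with one extra preliminary step to handle the fact that $U_{f\star}^{ij,i'j'}$ need not be strictly convex in $\omega$. First I would apply~\prettyref{lem:bigAlpha}: on a bounded neighborhood of any fixed point, choose $\alpha$ large enough that $U_{f\star,\alpha}^{ij,i'j'}$ is strictly convex in $\TWO{u}{\omega}$ wherever it is not identically zero, while $U_{f,\alpha}^{ij,i'j'}\equiv U_f^{ij,i'j'}$; we then work with this shifted potential and drop $\alpha$ from the notation. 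I would also record at the outset that $U_f^{ij,i'j'}$ is continuous, nonnegative, and vanishes exactly when every normal force $f_{ij,\perp}^m$, $f_{i'j',\perp}^m$ vanishes --- equivalently, exactly when $U_{f\star}^{ij,i'j'}\equiv0$ in $\TWO{u}{\omega}$ --- so that $U_f^{ij,i'j'}$ plays the role played by $U_c^{ij,i'j'}$ in the contact argument; continuity here follows from continuity of $U_{f\star}^{ij,i'j'}$ together with the coercivity that strict convexity supplies in the nonzero regime.

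With this reduction in place, fix a point $(d,\theta^{t+1},\theta^t)$ and split into three cases. In Case~I, $U_{f\star}^{ij,i'j'}\not\equiv0$ on a neighborhood, so strict convexity holds locally, the inverse function theorem yields a single-valued smooth map $\TWO{u_\star}{\omega_\star}$, and~\prettyref{lem:frictionLocalDiff} already gives local differentiability together with~\prettyref{eq:derivativeFriction}. In Case~II, $U_f^{ij,i'j'}\equiv0$ on a neighborhood; then it is trivially differentiable with vanishing derivative, and~\prettyref{eq:derivativeFriction} agrees because, with all $f_{ij,\perp}^m=0$, every summand on its right-hand side carries a factor $\mathcal{A}_\epsilon(f_{ij,\perp}^m)$ or its gradient, both of which vanish at the origin --- this is precisely what the estimates~\prettyref{eq:diffFrictionD1} and~\prettyref{eq:diffFrictionD2} exhibit --- so that the (non-unique) choice of minimizer $\TWO{u_\star}{\omega_\star}$ is immaterial.

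Case~III is the crux: $U_f^{ij,i'j'}=0$ at the point, but there is a sequence of nearby points converging to it at which $U_{f\star}^{ij,i'j'}\not\equiv0$, so the inverse function theorem degenerates at the limit. Following~\prettyref{lem:contactGlobalDiff}, I would restrict $U_f^{ij,i'j'}$ to the line segment from the limit point to such a nearby point, let $\bar\beta$ be the largest parameter on that segment at which $U_f^{ij,i'j'}$ still vanishes --- strictly interior, by continuity and nonnegativity --- observe that $U_f^{ij,i'j'}>0$ on the open remainder of the segment, so~\prettyref{lem:frictionLocalDiff} applies there and its derivative is given by~\prettyref{eq:derivativeFriction}, and then invoke the mean value theorem on that remainder: the value of $U_f^{ij,i'j'}$ at the nearby point equals $(1-\bar\beta)$ times a directional derivative, computed through~\prettyref{eq:derivativeFriction}, at some interior point that converges to $(d,\theta^{t+1},\theta^t)$. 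Squeezing the difference quotient between $0$ and the norm of this derivative and applying~\prettyref{lem:frictionGradientVanish} --- which states exactly that this derivative tends to $0$ along any such sequence --- shows that every directional derivative of $U_f^{ij,i'j'}$ at the limit point is $0$, in agreement with~\prettyref{eq:derivativeFriction}. As the three cases are exhaustive, global differentiability with the stated formula follows.

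I expect the main obstacle to be the interplay between the $\alpha$-shift and the limiting arguments of Case~III: one must verify that a single $\alpha$, chosen large on a neighborhood of the limit point, remains valid for all the intermediate points produced by the mean value theorem (it does, since those points eventually enter that neighborhood), and that~\prettyref{lem:frictionGradientVanish} still applies to the shifted potential --- it does, because the shift alters only the $\omega$-coefficient of $\dot{p}_{ij,\parallel}^m$ and leaves the normal forces $f_{ij,\perp}^m$, together with their derivatives, untouched, so the cancellation of the $P_s'$, $P_s''$, and $P_s'(1-\|n_\star\|)^{-1}$ factors carries over verbatim. Everything else is a transcription of the contact-potential argument, the genuinely delicate work having already been absorbed into~\prettyref{lem:frictionGradientVanish}.
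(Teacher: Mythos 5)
Your proposal is correct and follows essentially the same route as the paper: the same case split (strictly convex regime handled by \prettyref{lem:frictionLocalDiff}, the non-strictly-convex-in-$\omega$ case removed via the $\alpha$-shift of \prettyref{lem:bigAlpha}, the locally-vanishing case handled trivially, and the boundary case handled by the segment/mean-value squeeze argument of \prettyref{lem:contactGlobalDiff} combined with \prettyref{lem:frictionGradientVanish}). The only cosmetic difference is that you apply the $\alpha$-shift up front rather than as a separate case, and you make explicit the (correct) observation that the shift does not disturb \prettyref{lem:frictionGradientVanish}, which the paper leaves implicit.
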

\begin{proof}
We need to consider several cases. Case I: If $U_{f\star}^{ij,i'j'}$ is strictly convex in both $\omega$ and $u$, then the result follows by~\prettyref{lem:frictionLocalDiff}. Case II: If $U_{f\star}^{ij,i'j'}$ is strictly convex in $u$ but not $\omega$, then fix a large constant $\alpha$ using~\prettyref{lem:bigAlpha} and resort to case I. Case III: If the Hessian is a zero matrix, then $U_{f\star}^{ij,i'j'}\equiv0$ as a function of $\omega$ and $u$ and there are two sub-cases. Case III.1: If $U_{f\star}^{ij,i'j'}\equiv0$ in a local neighborhood of $\THREE{d}{\theta^{t+1}}{\theta^t}$, then differentiability follows trivially. Case III.2: If there is a sequence $\THREE{d_n}{\theta_n^{t+1}}{\theta_n^t}\to\THREE{d}{\theta^{t+1}}{\theta^t}$ where $U_{f\star}^{ij,i'j'}(d_n,\theta_n^{t+1},\theta_n^t,u,\omega)\not\equiv0$, then we can establish differentiability by applying the limiting argument as in~\prettyref{lem:contactGlobalDiff}, yielding:
\begin{equation}
\ResizedEq{0\leq&\liminf_{n\to\infty}\frac{U_f^{ij,i'j'}(d_n,\theta_n^{t+1},\theta_n^t)-U_f^{ij,i'j'}(d,\theta^{t+1},\theta^t)}{\|\THREE{d_n-d}{\theta_n^{t+1}-\theta^{t+1}}{\theta_n^t-\theta^t}\|}\\
\leq&\limsup_{n\to\infty}\frac{U_f^{ij,i'j'}(d_n,\theta_n^{t+1},\theta_n^t)-U_f^{ij,i'j'}(d,\theta^{t+1},\theta^t)}{\|\THREE{d_n-d}{\theta_n^{t+1}-\theta^{t+1}}{\theta_n^t-\theta^t}\|}\\
\leq&\limsup_{n\to\infty}\left\|\FDD{U_f^{ij,i'j'}}{\THREE{d}{\theta^{t+1}}{\theta^t}}\Big|_{\THREE{\beta_n(d_n-d)+d}{\beta_n(\theta_n^{t+1}-\theta^{t+1})+\theta^{t+1}}{\beta_n(\theta_n^t-\theta^t)+\theta^t}}\right\|=0,}
\end{equation}
where the last limit follows from~\prettyref{lem:frictionGradientVanish}, thus all is proved.
\end{proof}

\subsection{Global Twice-Differentiability}
We largely follow~\prettyref{lem:contactGlobalDiff2} to prove twice differentiability, for which we need to establish some limits.
\begin{lemma}
\label{lem:frictionDerivativeBoundaryVanish}
If $U_{f\star}^{ij,i'j'}(d,\theta^{t+1},\theta^t,u,\omega)\equiv0$ in $u$ and $\omega$, then:
\begin{align*}
&\FPPTT{U_{f\star}^{ij,i'j'}}{\THREE{d}{\theta^{t+1}}{\theta^t}}{\TWO{u_\star}{\omega_\star}}=0.
\end{align*}
\end{lemma}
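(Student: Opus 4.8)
The plan is to differentiate the explicit formula for $U_{f\star}^{ij,i'j'}$ term by term and exploit that the hypothesis collapses every normal-force factor, together with its first derivative, to zero. Write $g\triangleq\THREE{d}{\theta^{t+1}}{\theta^t}$ and $v\triangleq\TWO{u_\star}{\omega_\star}$, and set $R_{ij}^m\triangleq\sqrt{\|\dot{x}_{ij,\parallel}^m-\dot{p}_{ij,\parallel}^m\|^2+\epsilon}$, so that $U_{f\star}^{ij,i'j'}=\mu\Delta t\sum_m\left[\mathcal{A}_\epsilon(f_{ij,\perp}^m)R_{ij}^m+\mathcal{A}_\epsilon(f_{i'j',\perp}^m)R_{i'j'}^m\right]$. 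Here $R_{ij}^m$ is smooth with bounded derivatives of all orders (the $\epsilon$ regularization removes every singularity), and $f_{ij,\perp}^m=P_s'(-(p_\star^t)^TT_i(d,\theta^t)x_{ij}^m)n_\star^t$ is independent of $v$. As in the proof of~\prettyref{lem:frictionGradientVanish}, the hypothesis $U_{f\star}^{ij,i'j'}\equiv0$ in $v$ is equivalent to $f_{ij,\perp}^m=0$ and $f_{i'j',\perp}^m=0$ for every $m$; since $\|n_\star^t\|\in(0,1)$ by~\prettyref{lem:contactBasic}, each active argument of $P_s$ must then lie in its flat part $[s,\infty)$, hence $P_s'(\bullet)=P_s''(\bullet)=0$ there, and by optimality of $p_\star^t$ also $P_s'(1-\|n_\star^t\|)=0$.

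Next I would assemble the mixed derivative one summand at a time. Because $f_{ij,\perp}^m$ does not depend on $v$, $\FPP{(\mathcal{A}_\epsilon(f_{ij,\perp}^m)R_{ij}^m)}{v}=\mathcal{A}_\epsilon(f_{ij,\perp}^m)\FPP{R_{ij}^m}{v}$, and differentiating once more in $g$ and applying the product rule yields two pieces, $\left(\FPP{\mathcal{A}_\epsilon(f_{ij,\perp}^m)}{g}\right)\FPP{R_{ij}^m}{v}$ and $\mathcal{A}_\epsilon(f_{ij,\perp}^m)\FPPTT{R_{ij}^m}{g}{v}$. The second piece vanishes because $\mathcal{A}_\epsilon(f_{ij,\perp}^m)=\mathcal{A}_\epsilon(0)=0$ while $\FPPTT{R_{ij}^m}{g}{v}$ is bounded. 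For the first piece, $\FPP{\mathcal{A}_\epsilon(f_{ij,\perp}^m)}{g}=\mathcal{A}_\epsilon'(f_{ij,\perp}^m)\FPP{f_{ij,\perp}^m}{g}$, and both factors help: $\mathcal{A}_\epsilon$ is quadratic to leading order at the origin, so $\mathcal{A}_\epsilon'(0)=0$; and, expanding $\FPP{f_{ij,\perp}^m}{\TWO{d}{\theta^t}}=P_s'\FPP{n_\star^t}{\TWO{d}{\theta^t}}+P_s''n_\star^t\FPP{[p_\star^t]^TT_i(d,\theta^t)x_{ij}^m}{\TWO{d}{\theta^t}}^T$ exactly as in~\prettyref{eq:forceVanish}, every monomial carries a factor $P_s'(\bullet)$ or $P_s''(\bullet)$ that is $0$ at the active argument. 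Summing the resulting zeros over $m$ and over both convex hulls gives $\FPPTT{U_{f\star}^{ij,i'j'}}{g}{v}=0$.

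The step I expect to be the real obstacle is confirming that the quantities multiplying these vanishing prefactors are actually controlled: both $\FPPTT{R_{ij}^m}{g}{v}$ and $\FPP{f_{ij,\perp}^m}{g}$ depend on $\FPP{p_\star^t}{\TWO{d}{\theta^t}}$, the sensitivity of the separating plane from the previous timestep, and~\prettyref{eq:planeDerivative} together with~\prettyref{eq:hessianNorm} shows this can blow up like $P_s'(1-\|n_\star^t\|)^{-1}$ --- which is precisely the regime here, since $P_s'(1-\|n_\star^t\|)=0$ makes the relevant Hessian singular and the implicit function theorem inapplicable at the point itself. I would resolve this exactly as in~\prettyref{lem:contactHessianVanish}: after substituting~\prettyref{eq:planeDerivative}, each delicate combination has the shape $O(P_s'(\bullet)P_s'(\bullet)P_s'(1-\|n_\star^t\|)^{-1})$, $O(P_s'(\bullet)P_s''(\bullet)P_s'(1-\|n_\star^t\|)^{-1})$, or $O(P_s''(\bullet)P_s''(\bullet)P_s'(1-\|n_\star^t\|)^{-1})$, each of which tends to $0$ by~\prettyref{eq:penaltyLimit} and~\prettyref{eq:limitDerivativePInequality}; by continuity the value at the point coincides with this limit and hence is $0$. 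Plugging these controlled zeros back into the product-rule expansion completes the proof.
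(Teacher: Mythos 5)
Your proof is correct and follows essentially the same route as the paper: both expand $U_{f\star}^{ij,i'j'}$ term by term, kill each summand of the mixed derivative using $\mathcal{A}_\epsilon(f_{ij,\perp}^m)=0$ together with the vanishing of $\FPP{f_{ij,\perp}^m}{\TWO{d}{\theta^t}}$, and control the potentially unbounded separating-plane sensitivity via the $O(P_s'P_s'/P_s')$-type estimates of \prettyref{lem:contactHessianVanish} (the paper does this implicitly by citing \prettyref{lem:frictionGradientVanish} and \prettyref{eq:forceVanish}). Taking the mixed partial in the opposite order and additionally noting $\mathcal{A}_\epsilon'(0)=0$ are only cosmetic differences.
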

\begin{proof}
We show that each term in the derivative vanish in this case. By taking derivative with respect to $\TWO{u_\star}{\omega_\star}$ in~\prettyref{eq:diffFrictionD1} and~\prettyref{eq:diffFrictionD2}, we have:
\begin{equation*}
\ResizedEq{&\FPPTT{D_{ij}^m}{\theta^{t+1}}{\TWO{u_\star}{\omega_\star}}
=O(\mathcal{A}_\epsilon(f_{ij,\perp}^m))=0\\
&\FPPTT{D_{ij}^m}{\TWO{d_n}{\theta_n^t}}{\TWO{u_\star}{\omega_\star}}=O(\mathcal{A}_\epsilon(f_{ij,\perp}^m))+
O\left(\left\|\FPP{f_{ij,\perp}^m}{\TWO{d_n}{\theta_n^t}}\right\|\right)=0,}
\end{equation*}
and all is proved. Here all the terms other than those included in $O(\bullet)$ are bounded and the vanishing of $O(\bullet)$ terms have been proved in~\prettyref{lem:frictionGradientVanish}.
\end{proof}

\begin{lemma}
\label{lem:frictionGlobalDiff2}
The mixed-second-derivative~\prettyref{eq:derivativeFriction2} i) well-defined and ii) holds everywhere.
\end{lemma}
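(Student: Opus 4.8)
The plan is to follow the proof of \prettyref{lem:contactGlobalDiff2} almost line for line, with $P_s$ and the contact Hessian $\FPPT{U_{c\star}^{ij,i'j'}}{p_\star}$ replaced throughout by $\mathcal{A}_\epsilon$ and the velocity Hessian $\FPPT{U_{f\star}^{ij,i'j'}}{\TWO{u_\star}{\omega_\star}}$. One simplification is available here: the set of configurations on which $U_{f\star}^{ij,i'j'}$ degenerates (flat in $u$ or in $\omega$) is, by construction, governed by $f_{ij,\perp}^m$ and $f_{i'j',\perp}^m$ alone, hence by $d$ and $\theta^t$ only, and does not involve $\theta^{t+1}$. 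Since \eqref{eq:derivativeFriction2} differentiates $\FDD{U_f^{ij,i'j'}}{\THREE{d}{\theta^{t+1}}{\theta^t}}$ with respect to $\theta^{t+1}$ only, a perturbation of $\theta^{t+1}$ never moves us across that bad set, so the delicate limiting argument of \prettyref{lem:contactGlobalDiff2} (Case III.2) is not needed.

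For part i), I would split on whether $U_{f\star}^{ij,i'j'}$ is strictly convex in $\TWO{u}{\omega}$. If it is --- directly, or after the coordinate shift of \prettyref{lem:bigAlpha}, which by \prettyref{lem:bigAlpha} i) leaves $U_f^{ij,i'j'}$ unchanged --- the velocity Hessian is invertible, so $\FPP{\TWO{u_\star}{\omega_\star}}{\theta^{t+1}}$ from \prettyref{lem:frictionLocalDiff} is finite and every factor in \eqref{eq:derivativeFriction2} is finite. If instead $U_{f\star}^{ij,i'j'}\equiv0$ in $\TWO{u}{\omega}$, i.e. all $f_{ij,\perp}^m=f_{i'j',\perp}^m=0$, then the velocity Hessian is singular and $\FPP{\TWO{u_\star}{\omega_\star}}{\theta^{t+1}}$ may be infinite; but \prettyref{lem:frictionDerivativeBoundaryVanish} gives $\FPPTT{U_{f\star}^{ij,i'j'}}{\THREE{d}{\theta^{t+1}}{\theta^t}}{\TWO{u_\star}{\omega_\star}}=0$, so the second summand of \eqref{eq:derivativeFriction2} is read as $0\cdot\infty=0$, while the first summand $\FPPTT{U_{f\star}^{ij,i'j'}}{\THREE{d}{\theta^{t+1}}{\theta^t}}{\theta^{t+1}}$ is finite: differentiating \eqref{eq:diffFrictionD1}--\eqref{eq:diffFrictionD2} once more only produces $\mathcal{A}_\epsilon$- and $P_s$-type factors with existing derivatives, and the only separating plane entering the computation, $p_\star^t$ from step $t$, sits where the local support of $P_s$ forces $U_c^{ij,i'j'}(d,\theta^t)>0$ whenever $U_{f\star}^{ij,i'j'}\not\equiv0$, so by \prettyref{lem:contactLocalDiff} the contact Hessian there is nonsingular and $\FPP{p_\star^t}{d,\theta^t}$ is bounded. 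Hence \eqref{eq:derivativeFriction2} is well-defined everywhere under the convention $0\cdot\infty=0$.

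For part ii), I would use the same case split as in \prettyref{lem:contactGlobalDiff2}. Case I, $U_{f\star}^{ij,i'j'}$ strictly convex in both $u$ and $\omega$: \prettyref{lem:frictionLocalDiff} supplies \eqref{eq:derivativeFriction2} on a whole neighborhood. Case II, strictly convex in $u$ but flat in $\omega$: fix a large $\alpha$ via \prettyref{lem:bigAlpha}, making $U_{f\star,\alpha}^{ij,i'j'}$ strictly convex with $U_{f,\alpha}^{ij,i'j'}=U_f^{ij,i'j'}$, and fall back on Case I. Case III, $U_{f\star}^{ij,i'j'}\equiv0$ in $u,\omega$ at $\THREE{d}{\theta^{t+1}}{\theta^t}$: the degeneracy depends only on $(d,\theta^t)$, hence persists for every nearby $\theta^{t+1}$ with $d,\theta^t$ fixed, and on that whole slice $U_f^{ij,i'j'}=\min_{u,\omega}U_{f\star}^{ij,i'j'}=0$; since $U_f^{ij,i'j'}\ge0$ everywhere, each point of the slice is a global minimizer, so $\FDD{U_f^{ij,i'j'}}{\THREE{d}{\theta^{t+1}}{\theta^t}}=0$ along it and therefore $\FDDTT{U_f^{ij,i'j'}}{\THREE{d}{\theta^{t+1}}{\theta^t}}{\theta^{t+1}}=0$ at the point. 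The right-hand side of \eqref{eq:derivativeFriction2} vanishes there too: the first summand because every block carries a factor $\mathcal{A}_\epsilon(f_{ij,\perp}^m)=0$ or $\FDD{\mathcal{A}_\epsilon}{f_{ij,\perp}^m}=0$ (and likewise for $i'j'$), the second by \prettyref{lem:frictionDerivativeBoundaryVanish} and $0\cdot\infty=0$. Hence \eqref{eq:derivativeFriction2} holds at every configuration.

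I expect the work to lie not in any hard estimate but in three bookkeeping points. First, one must argue carefully that the bad set is $\theta^{t+1}$-independent; this rests on $f_{ij,\perp}^m$ --- and hence the separating plane $p_\star^t$ --- being evaluated at step $t$. Second, in Cases I and II one must be sure the inverse $\left[\FPPT{U_{f\star}^{ij,i'j'}}{\TWO{u_\star}{\omega_\star}}\right]^{-1}$ is genuinely bounded on a neighborhood, which is guaranteed because strict convexity is either present or manufactured by the shift of \prettyref{lem:bigAlpha}, so \prettyref{lem:frictionLocalDiff} applies to the (shifted) potential verbatim. Third, the $0\cdot\infty=0$ reading on the degenerate set must be justified, which is precisely the content of \prettyref{lem:frictionDerivativeBoundaryVanish}. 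I do not expect to need a friction analogue of \prettyref{lem:contactHessianVanish} for this lemma --- that would only enter if one wanted twice-differentiability of $U_f^{ij,i'j'}$ jointly in $d,\theta^{t+1},\theta^t$ --- precisely because the differentiation in \eqref{eq:derivativeFriction2} is confined to $\theta^{t+1}$.
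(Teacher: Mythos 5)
Your proposal is correct and follows essentially the same route as the paper: part i) via \prettyref{lem:frictionDerivativeBoundaryVanish} and the $0\cdot\infty=0$ convention, and part ii) via the identical Case I/II/III split (strict convexity with \prettyref{lem:frictionLocalDiff}, the $\alpha$-shift of \prettyref{lem:bigAlpha}, and the $\theta^{t+1}$-independence of the degenerate set where all $\mathcal{A}_\epsilon(f_{ij,\perp}^m)=0$). Your global-minimum argument on the $\theta^{t+1}$-slice merely spells out in more detail what the paper's terser Case III asserts, and your observation that no friction analogue of \prettyref{lem:contactHessianVanish} is needed matches the paper's treatment.
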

\begin{proof}
i) follows from~\prettyref{lem:frictionDerivativeBoundaryVanish} and the convention that $0\cdot\infty=0$. ii) can be proved by considering the set of cases in~\prettyref{lem:frictionGlobalDiff}.

Case I: If $U_{f\star}^{ij,i'j'}$ is strictly convex in both $\omega$ and $u$, then the result follows by~\prettyref{lem:frictionLocalDiff}. Case II: If $U_{f\star}^{ij,i'j'}$ is strictly convex in $u$ but not $\omega$, then fix a large constant $\alpha$ using~\prettyref{lem:bigAlpha} and resort to case I. Case III: If the Hessian is a zero matrix, then $U_{f\star}^{ij,i'j'}\equiv0$ as a function of $\omega$ and $u$. Such case can only happen when all the terms of form $\mathcal{A}_\epsilon(f_{ij,\perp}^m)=0$. But these terms are independent of $\theta^{t+1}$, so $U_{f\star}^{ij,i'j'}\equiv0$ in a local $\theta^{t+1}$-neighborhood and we immediately have the mixed-second-derivative vanishes due to~\prettyref{lem:frictionDerivativeBoundaryVanish} and agrees with~\prettyref{eq:derivativeFriction2}.
\end{proof}

\subsection{Momentum Preservation}
We use the globally valid derivative formula to show the preservation of linear and in-plane angular momentum. Unfortunately, we cannot show the full preservation of angular momentum, which is a limitation of our method that is worth further exploration. 
\begin{lemma}
$U_f^{ij,i'j'}$ preserves i) linear and ii) in-plane angular momentum along $n_\star^t$ of $H_{ij}$ and $H_{i'j'}$.
\end{lemma}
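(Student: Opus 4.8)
The plan is to mirror the momentum argument already used for the contact potential $U_c^{ij,i'j'}$, but now exploiting the stationarity of $U_{f\star}^{ij,i'j'}$ in the separating-plane velocity parameters $u$ and $\omega$ in place of its stationarity in the plane $p$. First I would invoke the globally valid derivative formula~\prettyref{eq:derivativeFriction} from~\prettyref{lem:frictionGlobalDiff}, which identifies the negative force that $U_f^{ij,i'j'}$ exerts on a world-space vertex with the corresponding partial derivative of $U_{f\star}^{ij,i'j'}$ evaluated at the optimizer $\TWO{u_\star}{\omega_\star}$. Since every damping term $D_{ij}^m$ couples to $\theta^{t+1}$ only through the tangential velocity $\dot{x}_{ij,\parallel}^m$, a direct differentiation shows the per-vertex force equals $\mu\mathcal{A}_\epsilon(f_{ij,\perp}^m)\,B^{ij,i'j'}$ applied to the normalized relative velocity $(\dot{x}_{ij,\parallel}^m-\dot{p}_{ij,\parallel}^m)/\sqrt{\|\dot{x}_{ij,\parallel}^m-\dot{p}_{ij,\parallel}^m\|^2+\epsilon}$, hence it lies in the column space of $B^{ij,i'j'}$, i.e.\ it is purely tangential; the same holds on $H_{i'j'}$. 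I would dispose of the trivial case where all $\mathcal{A}_\epsilon(f_{ij,\perp}^m)=\mathcal{A}_\epsilon(f_{i'j',\perp}^m)=0$ (all forces and torques vanish), and otherwise assume $U_{f\star}^{ij,i'j'}\not\equiv0$, shifting by a large $\alpha$ as in~\prettyref{lem:bigAlpha} if needed so that the optimizer is a genuine interior stationary point in $\TWO{u}{\omega}$.

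For linear momentum, the key observation is that $u$ enters each of $\dot{p}_{ij,\parallel}^m$ and $\dot{p}_{i'j',\parallel}^m$ as an additive $\mathbb{R}^2$ term, so $\partial\dot{p}_{ij,\parallel}^m/\partial u=I$ and summing the per-vertex forces over both hulls reconstructs exactly $-\tfrac{1}{\Delta t}B^{ij,i'j'}\,\partial_u U_{f\star}^{ij,i'j'}(d,\theta^{t+1},\theta^t,u_\star,\omega_\star)$, which vanishes by optimality of $u_\star$. Since $U_f^{ij,i'j'}$ induces no other force on the two hulls, linear momentum is conserved.

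For the in-plane angular momentum I would compute $n_\star^t\cdot(\tau_{ij}+\tau_{i'j'})$ with the lever arms $[T_i(d,\theta^t)x_{ij}^m]_3$ (the time-$t$ world positions appearing in $\dot{p}_{ij,\parallel}^m$, which is the natural reference since the plane is defined at time $t$), apply the scalar triple product identity $n\cdot(r\times f)=f\cdot(n\times r)$, and then use that the force lies in the column space of $B^{ij,i'j'}$ together with the identity $[B^{ij,i'j'}]^T[n_\star^t\times[T_i(d,\theta^t)x_{ij}^m]_3]=\partial_\omega\dot{p}_{ij,\parallel}^m$; each vertex's torque-along-$n_\star^t$ then collapses to $-\tfrac{1}{\Delta t}\,\partial_\omega D_{ij}^m$, and summing over all vertices of both hulls gives $n_\star^t\cdot(\tau_{ij}+\tau_{i'j'})=-\tfrac{1}{\Delta t}\,\partial_\omega U_{f\star}^{ij,i'j'}|_{u_\star,\omega_\star}=0$ by optimality of $\omega_\star$ (and trivially when $U_{f\star}^{ij,i'j'}$ is flat in $\omega$, since then $\partial_\omega U_{f\star}^{ij,i'j'}\equiv0$). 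I expect the main obstacle to be precisely this bookkeeping: matching the lever arm used in the torque with the $[T_i(d,\theta^t)x_{ij}^m]_3$ appearing in $\dot{p}_{ij,\parallel}^m$, routing the triple product so that $B^{ij,i'j'}$ factors through, and recognizing the resulting contraction as exactly $\partial_\omega D_{ij}^m$. This also explains why only the $n_\star^t$-component cancels: the forces are tangential and only the torque component conjugate to the single in-plane rotation parameter $\omega$ is stationarized, so cancelling the remaining components would require endowing the separating plane with out-of-plane rotational degrees of freedom, which the model omits—hence the limitation noted in the statement.
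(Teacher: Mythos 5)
Your proposal is correct and follows essentially the same route as the paper: read off the per-vertex tangential forces from the globally valid derivative formula, sum them to recognize $-\frac{1}{\Delta t}B^{ij,i'j'}\FPP{U_{f\star}^{ij,i'j'}}{u_\star}$ for linear momentum, and contract the torques with $n_\star^t$ to recognize $-\frac{1}{\Delta t}\FPP{U_{f\star}^{ij,i'j'}}{\omega_\star}$, both vanishing by optimality of $\TWO{u_\star}{\omega_\star}$. Your explicit handling of the flat-in-$\omega$ and identically-zero cases, and the triple-product bookkeeping with the time-$t$ lever arms, are details the paper leaves implicit but are consistent with its argument.
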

\begin{proof}
i) The total negative force applied on $H_{ij}$ and $H_{i'j'}$ is:
\begin{align*}
f_{ij}=&\sum_{m=1}^M\mu\mathcal{A}_\epsilon(f_{ij,\perp}^m)B^{ij,i'j'}\frac{\dot{x}_{ij,\parallel}^m-\dot{p}_{ij,\parallel}^m}{\sqrt{\left\|\dot{x}_{ij,\parallel}^m-\dot{p}_{ij,\parallel}^m\right\|^2+\epsilon}}\\
f_{i'j'}=&\sum_{m=1}^M\mu\mathcal{A}_\epsilon(f_{i'j',\perp}^m)B^{ij,i'j'}\frac{\dot{x}_{i'j',\parallel}^m-\dot{p}_{i'j',\parallel}^m}{\sqrt{\left\|\dot{x}_{i'j',\parallel}^m-\dot{p}_{i'j',\parallel}^m\right\|^2+\epsilon}}.
\end{align*}
Combined, we have:
\begin{align*}
f_{ij}+f_{i'j'}=\frac{-1}{\Delta t}B^{ij,i'j'}\FPP{U_{f\star}^{ij,i'j'}}{u_\star}=0,
\end{align*}
by the optimality of $u_\star$, proving linear momentum preservation. 

ii) The total negative torque applied on $H_{ij}$ and $H_{i'j'}$ is:
\begin{equation}
\ResizedEq{\tau_{ij}=&\sum_{m=1}^M\mu\mathcal{A}_\epsilon(f_{ij,\perp}^m)[T_i(d,\theta^{t})x_{ij}^m]\times B^{ij,i'j'}\frac{\dot{x}_{ij,\parallel}^m-\dot{p}_{ij,\parallel}^m}{\sqrt{\left\|\dot{x}_{ij,\parallel}^m-\dot{p}_{ij,\parallel}^m\right\|^2+\epsilon}}\\
\tau_{i'j'}=&\sum_{m=1}^M\mu\mathcal{A}_\epsilon(f_{i'j',\perp}^m)[T_{i'}(d,\theta^{t})x_{i'j'}^m]\times B^{ij,i'j'}\frac{\dot{x}_{i'j',\parallel}^m-\dot{p}_{i'j',\parallel}^m}{\sqrt{\left\|\dot{x}_{i'j',\parallel}^m-\dot{p}_{i'j',\parallel}^m\right\|^2+\epsilon}}.}
\end{equation}
By the optimality in $\omega_\star$, we have:
\begin{align*}
[n_\star^t]^T(\tau_{ij}+\tau_{i'j'})=\frac{-1}{\Delta t}\FPP{U_{f\star}^{ij,i'j'}}{\omega_\star}=0,
\end{align*}
proving the preservation of in-plane angular momentum.
\end{proof}
We summarize our results in the following theorem:
\begin{framed}
\begin{theorem}
\label{thm:friction}
The frictional damping potential $U_f^{ij,i'j'}$ has the following properties:
\begin{itemize}
\item Globally, $U_f^{ij,i'j'}=0$ is differentiable and has mixed-second-derivative:
\begin{align*}
\FDDTT{U_f^{ij,i'j'}}{\THREE{d}{\theta^{t+1}}{\theta^t}}{\theta^{t+1}}.
\end{align*}
\item $U_c^{ij,i'j'}$ preserves linear and in-plane angular momentum along $n_\star^t$.
\end{itemize}
\end{theorem}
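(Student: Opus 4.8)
The plan is to assemble the lemmas of \prettyref{sec:FrictionProof} in the order they were established, mirroring the structure of the contact-potential analysis. First I would invoke the convexity lemma to guarantee that the inner minimization over $(u,\omega)$ defining $U_f^{ij,i'j'}$ has a single value, so $U_f^{ij,i'j'}$ is a well-defined function of $d,\theta^{t+1},\theta^t$. The only subtlety at this stage is that $U_{f\star}^{ij,i'j'}$ may be flat, rather than strictly convex, along $\omega$; to remove this degeneracy I would apply~\prettyref{lem:bigAlpha}, shifting the tangent frame along $B_1^{ij,i'j'}$ by a sufficiently large $\alpha$ so that $U_{f\star,\alpha}^{ij,i'j'}$ becomes strictly convex in both $u$ and $\omega$ while, crucially, $U_{f,\alpha}^{ij,i'j'}=U_f^{ij,i'j'}$. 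Thus without loss of generality every subsequent argument can be run on a strictly convex inner problem.

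Next I would establish local differentiability and the mixed-second-derivative formula away from the singular set via the implicit function theorem (\prettyref{lem:frictionLocalDiff}): when the inner problem is strictly convex, $(u_\star,\omega_\star)$ is a smooth function of the outer variables, the inner gradient vanishes identically, and the chain rule collapses to~\prettyref{eq:derivativeFriction} and~\prettyref{eq:derivativeFriction2}. To promote these formulas to global statements I would run the same three-case dissection used in~\prettyref{lem:contactGlobalDiff} and~\prettyref{lem:contactGlobalDiff2}: Case I, strictly convex in both arguments, is handled by the previous step; Case II, strictly convex in $u$ but flat in $\omega$, reduces to Case I by the $\alpha$-shift; Case III, where the inner Hessian degenerates to the zero matrix, forces $U_{f\star}^{ij,i'j'}\equiv0$ by construction because every $\mathcal{A}_\epsilon(f_{ij,\perp}^m)=0$. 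Case III then splits into the trivial sub-case where $U_{f\star}^{ij,i'j'}$ vanishes on a whole neighborhood, and the delicate sub-case III.2 where a sequence approaches the singular configuration from the positive-energy region, where the inverse function theorem fails. There I would reuse the mean-value / squeeze argument of~\prettyref{lem:contactGlobalDiff}, invoking~\prettyref{lem:frictionGradientVanish} for the first derivative and~\prettyref{lem:frictionDerivativeBoundaryVanish} together with the convention $0\cdot\infty=0$ for the mixed second derivative, to conclude that both the difference quotients and the candidate derivative expressions tend to zero, so the derivatives extend continuously with the stated closed forms.

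Finally, momentum preservation follows directly from the globally valid derivative~\prettyref{eq:derivativeFriction}. Summing the per-vertex frictional forces over both hulls gives $f_{ij}+f_{i'j'}=-\frac{1}{\Delta t}B^{ij,i'j'}\,\partial_{u_\star}U_{f\star}^{ij,i'j'}=0$ by optimality of $u_\star$, and projecting the summed torques onto the previous-step normal gives $[n_\star^t]^T(\tau_{ij}+\tau_{i'j'})=-\frac{1}{\Delta t}\,\partial_{\omega_\star}U_{f\star}^{ij,i'j'}=0$ by optimality of $\omega_\star$. The torque components orthogonal to $n_\star^t$ are not constrained by any stationarity condition of the inner problem, which is precisely why only in-plane angular momentum is preserved, and I would state this as the noted limitation rather than attempt to patch it.

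I expect the main obstacle to be sub-case III.2 of the twice-differentiability argument: controlling the product of the possibly blowing-up inverse inner-Hessian $[\partial^2_{(u_\star,\omega_\star)}U_{f\star}^{ij,i'j'}]^{-1}$ against the vanishing mixed derivatives as the normal forces $f_{ij,\perp}^m\to0$. This is compounded by the fact that the friction energy also depends on the previous-step separating plane $p_\star^t$, whose derivatives~\prettyref{eq:planeDerivative} carry their own inverse-Hessian factors; showing these nested singular factors cancel requires the precise scaling limits~\prettyref{eq:penaltyLimit} ($P_s'/P_s''\to0$ and $[P_s'']^{1.5}/P_s'\to 6\sqrt3/s^{2.5}$) exactly as in~\prettyref{lem:contactHessianVanish}, and verifying that the bookkeeping of the $O(P_s'P_s'/P_s')$, $O(P_s''P_s'/P_s')$, and $O(P_s''P_s''/P_s')$ terms still closes when they are multiplied by the friction-specific factors.
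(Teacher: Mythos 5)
Your proposal follows essentially the same route as the paper: well-definedness via convexity, the $\alpha$-shift of \prettyref{lem:bigAlpha} to remove the flat-in-$\omega$ degeneracy, local differentiability by the implicit function theorem, the three-case global extension with the squeeze/limiting argument and the vanishing-derivative lemmas, and momentum preservation from stationarity in $u_\star$ and $\omega_\star$. The only (harmless) difference is that you anticipate a delicate scaling analysis for the mixed second derivative in sub-case III.2, whereas the paper avoids it there by noting that the factors $\mathcal{A}_\epsilon(f_{ij,\perp}^m)$ are independent of $\theta^{t+1}$, so $U_{f\star}^{ij,i'j'}\equiv0$ in a local $\theta^{t+1}$-neighborhood and the mixed derivative vanishes trivially.
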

\end{framed}
\section{\label{sec:DifferentiabilityProof}Differentiability of SDRS}
We argument informally in this section. The SDRS function is formulated as an unconstrained optimization, with the objective being:
\begin{align*}
\mathcal{O}(d,\theta^{t+1},\theta^t,\theta^{t-1},u^t).
\end{align*}
At the local minimum, we have:
\begin{align*}
\FPP{\mathcal{O}}{\theta^{t+1}}=0.
\end{align*}
Now suppose:
\begin{align*}
\FPPT{\mathcal{O}}{\theta^{t+1}}\succ0,
\end{align*}
and we can invoke the implicit function theorem everywhere, then we have:
\begin{equation}
\label{eq:SDRSDerivative}
\ResizedEq{\FPP{\theta^{t+1}}{\FOUR{d}{\theta^t}{\theta^{t-1}}{u^t}}=
-\left[\FPPT{\mathcal{O}}{\theta^{t+1}}\right]^{-1}\FPPTT{\mathcal{O}}{\theta^{t+1}}{\FOUR{d}{\theta^t}{\theta^{t-1}}{u^t}}.}
\end{equation}
Note that all the derivatives in~\prettyref{eq:SDRSDerivative} are well-defined due to~\prettyref{thm:contact} and~\prettyref{thm:friction}. But if we would like to show that $\theta^{t+1}$ is globally differentiable, then we need to be able to invoke the implicit function theorem everywhere. A sufficient condition for this is to show that: 
\begin{align*}
\FPPT{\mathcal{O}}{\theta^{t+1}}\succ0,
\end{align*}
everywhere, i.e., $\mathcal{O}$ is a strictly convex function of $\theta^{t+1}$ for any choice of $\FOUR{d}{\theta^t}{\theta^{t-1}}{u^t}$. We informally argue that this is possible by choosing sufficiently small $\Delta t$. Note that in the objective function, only the following term is related to the timestep size:
\begin{align*}
\rho\sum_{m=1}^M\frac{\Delta t^2}{2}\|\ddot{x}_{ij}^m\|^2=O(\Delta t^{-2}),
\end{align*}
the Hessian matrix of which is called the generalized mass matrix (scaled by $\Delta t^{-2}$), which is positive definite. We can choose sufficiently small $\Delta t$ such that the generalized mass matrix contributes much large positive eigenvalues to $\FPPT{\mathcal{O}}{\theta^{t+1}}$ than all other terms, making it positive definite everywhere.

\end{document}